\documentclass[11pt, letterpaper]{article}
\usepackage{amsmath, amssymb, amsthm}
\usepackage{deepthink}
\usepackage{tgpagella}
\usepackage[utf8]{inputenc} 
\usepackage[T1]{fontenc}    


\usepackage[utf8]{inputenc} 
\usepackage[T1]{fontenc}    
\usepackage{hyperref}       
\usepackage{url}            
\usepackage{booktabs}       
\usepackage{amsfonts}       
\usepackage{nicefrac}       
\usepackage{microtype}      
\usepackage{xcolor}         
\usepackage{pifont}
\usepackage[most]{tcolorbox}
\newtcbox{\icode}{on line, verbatim,
  colback=gray!10, colframe=gray!60,
  boxsep=0.7pt, left=2pt, right=2pt, top=0.6pt, bottom=0.6pt,
  arc=2pt}
\usepackage{graphicx}
\usepackage{subcaption}
\usepackage{bm}
\usepackage{enumitem}
\usepackage{cleveref}
\setlist[itemize]{leftmargin=*}
\usepackage[ruled,vlined]{algorithm2e}
\SetKwInput{KwIn}{Input}
\SetKwInput{KwOut}{Output}


\usepackage{multirow}
\usepackage{booktabs} 
\usepackage{siunitx}  
\usepackage{booktabs} 

\usepackage[
  backend=biber,
  style=alphabetic,
  maxbibnames=100,
  minbibnames=100,
  maxcitenames=2,
  mincitenames=2
]{biblatex}
\addbibresource{./references.bib}

\newtheorem{theorem}{Theorem}[section]
\newtheorem{lemma}[theorem]{Lemma}
\newtheorem{corollary}[theorem]{Corollary}

\newtheorem{definition}{Definition}[section]
\usepackage{tcolorbox}

\newcommand{\brac}[1]{\left[ #1 \right]}

\definecolor{mint}{rgb}{0.24, 0.71, 0.54}

\newcommand{\jointfirst}{\textsuperscript{\dag}}
\newcommand{\last}{\textsuperscript{\ddag}}


\title{Generalization of Diffusion Models Arises with a Balanced Representation Space} 


\authorblock{
 \href{https://la0ka1.github.io/}{\textbf{Zekai Zhang}}\jointfirst, \href{https://heimine.github.io/}{\textbf{Xiao Li}}\jointfirst, \href{https://scholar.google.com/citations?user=RO2ZlG8AAAAJ&hl=en}{\textbf{Xiang Li}}, \href{https://shilianghe007.github.io/}{\textbf{Lianghe Shi}}, \href{https://scholar.google.com/citations?user=kUqmflsAAAAJ&hl=zh-TW}{\textbf{Meng Wu}}, \href{https://mtao8.math.gatech.edu/}{\textbf{Molei Tao}}\textsuperscript{1}, \href{https://qingqu.engin.umich.edu/}{\textbf{Qing Qu}}\last
}
\affiliation{
    University of Michigan \quad $\cdot$ \quad \textsuperscript{1}Georgia Institute of Technology
}
\authornote{
    \jointfirst Joint first author \last Corresponding author 
}

\abstracttext{

\noindent Diffusion models excel at generating high-quality, diverse samples, yet they risk memorizing training data when overfit to the training objective. We analyze the distinctions between memorization and generalization in diffusion models through the lens of representation learning. By investigating a two-layer ReLU denoising autoencoder (DAE), we prove that: (\emph{i}) memorization corresponds to the model storing raw training dataset in the learned weights for encoding and decoding, yielding localized, spiky representations; whereas (\emph{ii}) generalization arises when the model captures local data statistics, producing balanced representations. Furthermore, we validate our theoretical findings on real-world unconditional and text-to-image diffusion models, demonstrating that the same representation structures emerge in deep generative models with significant practical implications. Building on these insights, we propose a representation-based method for detecting memorization and a training-free editing technique that allows precise control via representation steering. Together, our results highlight that \emph{learning good representations is central to novel and meaningful generative modelling}.


}
\keywords{Diffusion Models, Generalization, Representation Learning, Denoising AutoEncoders}
\date{\today}
\correspondence{\href{mailto:zzekai@umich.edu}{zzekai@umich.edu}}
\resources{\href{https://github.com/la0ka1/diffusion-gen-from-rep}{Code Repository} $\cdot$ \href{https://la0ka1.github.io/diffusion-gen-from-rep/}{Project Website}
}
\headerlogo{}{https://deepthink-umich.github.io}

\begin{document}

\makeDeepthinkHeader

\vspace{-0.2in}
\begin{figure}[h]
    \centering
    \includegraphics[width=\linewidth]{./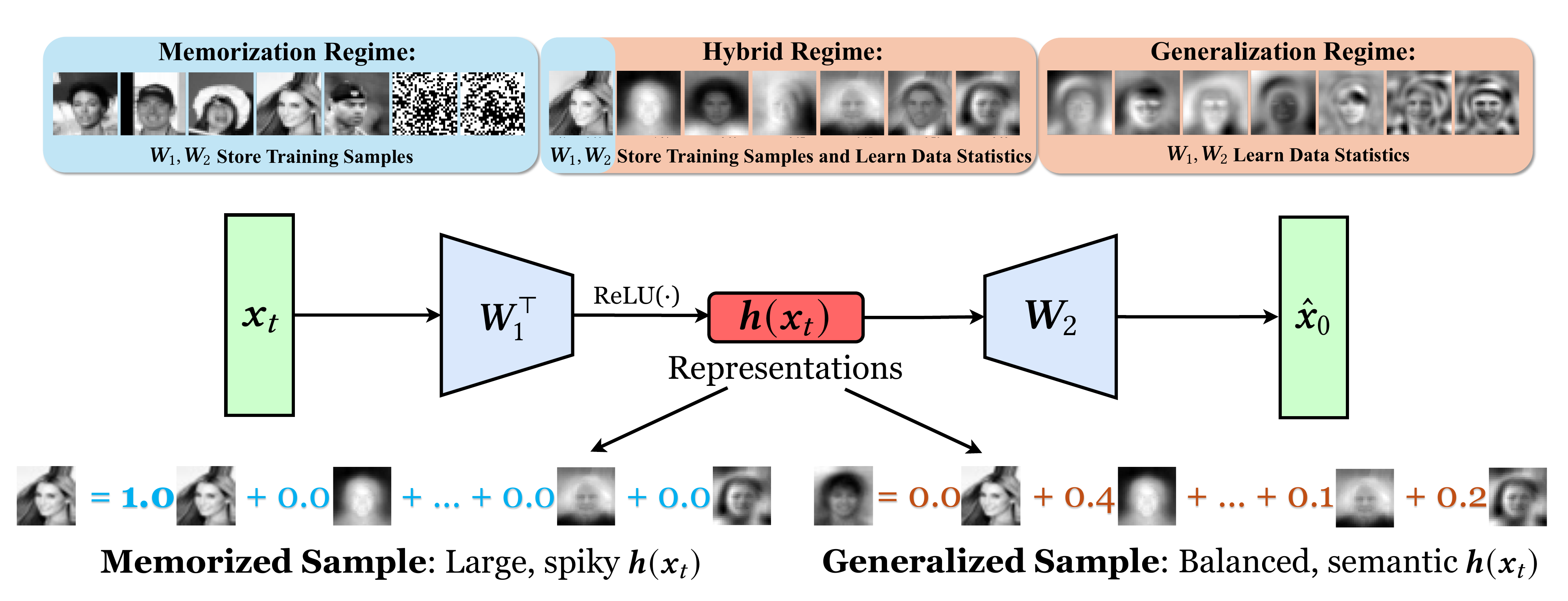}

\caption{\textbf{Overview of theoretical findings.} We analyze a two-layer ReLU DAE (middle row) to explain diffusion model behavior. We show that: (\emph{i}) \textbf{memorization} occurs by storing raw samples in weights (top left), yielding \textbf{spiky} activations (bottom left); (\emph{ii}) \textbf{generalization} arises by learning data statistics (top right), producing \textbf{balanced}, semantic codes (bottom right); and (\emph{iii}) real-world models exhibit a \textbf{hybrid} regime due to data imbalance (top center). These insights \emph{unify representation and distribution learning}, with empirical validation in~\Cref{fig:teaser}.}
    
    \label{fig:teaser-0}
\end{figure}

\newpage

\tableofcontents

\newpage

\section{Introduction}
Diffusion models~\autocite{ho2020denoising, lou2023discrete} have rapidly emerged as the dominant class of generative models, powering state-of-the-art systems such as Stable Diffusion~\autocite{Rombach_2022_CVPR}, Flux~\autocite{labs2025flux1kontextflowmatching}, and Veo~\autocite{google2025veo3}. By iteratively denoising random noise, they achieve unprecedented scalability, controllability, and fidelity. However, their empirical success raises a fundamental question: in principle, the standard training objective (e.g., denoising score matching) admits a closed-form solution that merely memorizes training examples~\autocite{yi2023generalization}; in practice, however, real-world models consistently produce novel and diverse outputs~\autocite{zhang2024emergence,kadkhodaie2023generalization}. This distinct mismatch between theoretical expectation and observed behavior poses a critical gap in our \emph{understanding of diffusion model generalization}, with direct implications for privacy, interpretability, and trustworthy deployment~\autocite{somepalli2023diffusion}.

Addressing this question has drawn increasing attention in the machine learning community~\autocite{zhang2024emergence,li2024understanding,wang2024diffusion,kadkhodaie2023generalization,gu2025memorization,bonnaire2025diffusion,zhang2025generalization, bertrand2025closed, zhang2025understanding}, yet existing explanations remain far from satisfactory. Early works based on random feature models~\autocite{li2023generalization,george2025denoising} provide useful insights but necessarily oversimplify model architectures. Analyses of linear models on Gaussian mixtures~\autocite{li2024understanding,wang2024diffusion,wang2025analytical} shed light on generalization but cannot capture memorization. Another line of research explores inductive biases by constructing handcrafted closed-form solutions from empirical data to approximate U-Net performance~\autocite{kamb2025an,niedoba2024towards,lukoianov2025locality,floros2025anisotropy}, attributing success to principles such as locality and equivariance. While these advances are valuable, the findings remain fragmented and phenomenological, and a more unified account of how diffusion models both memorize and generalize is still lacking (see \Cref{sec:related works} for a more detailed discussion of related work).

To address these challenges, we develop a unified mathematical framework based on a theoretical analysis of a nonlinear two-layer ReLU denoising autoencoder (DAE)~\autocite{vincent2011connection}. This framework not only unifies the characterization of memorization and generalization, but also bridges distribution learning with representation learning, offering profound practical implications. Specifically: 
(\emph{i}) \textbf{Memorization.} We prove that when empirical samples are locally sparse, the network weights memorize and store individual training examples, leading to overfitting and hence memorization. 
(\emph{ii}) \textbf{Generalization.} Conversely, when the empirical data are locally abundant, the weights effectively capture local data statistics, enabling the model to generate novel in-distribution samples.

Crucially, our work provides a unique \textbf{representation-centric} perspective on generalization~\autocite{tian2025provable}, highlighting the pivotal role of bottleneck activations in DAE networks. This view is motivated by recent empirical evidence on the duality between distribution learning and representation learning in diffusion models~\autocite{li2025understanding, xiang2025ddae++, tinaz2025emergence}: they inherently learn informative features for downstream tasks~\autocite{kwon2022diffusion,chen2024deconstructing}, and representation alignment regularization has been shown to accelerate training~\autocite{yu2024representation}. Our theory makes this connection explicit: memorized samples are encoded as spiky activations concentrated on a few neurons, whereas generalized samples yield balanced representations that reflect the underlying distribution. These contrasting modes of representation learning manifest in distinct generation behaviors in terms of memorization or generalization, which we comprehensively validated across a range of models, including EDM~\autocite{karras2022elucidating}, Diffusion Transformers (DiT)~\autocite{peebles2023scalable}, and Stable Diffusion v1.4~\autocite{Rombach_2022_CVPR} (SD1.4).

Moreover, our findings show that the representation space is not a byproduct but a crucial and controllable factor for generation. Specifically, we demonstrate two practical implications: 
(\emph{i}) \textbf{Memorization detection.} Leveraging the spikiness of representations identified by our theory as a signature of memorization, we develop a theory-driven detector that achieves highly accurate and efficient performance in a prompt-free manner. 
(\emph{ii}) \textbf{Model steering.} We propose an effective steering method based on additions in the representation space and reveal distinct behaviors between memorization and generalization: memorized samples are difficult to steer, whereas generalized samples are highly steerable owing to their balanced, semantically rich representations. Together, these applications illustrate the far-reaching implications of our representation-centric analysis for the privacy, interpretability, and controllability of diffusion models.


\begin{figure}[t]
    \centering
    \includegraphics[width=\linewidth]{./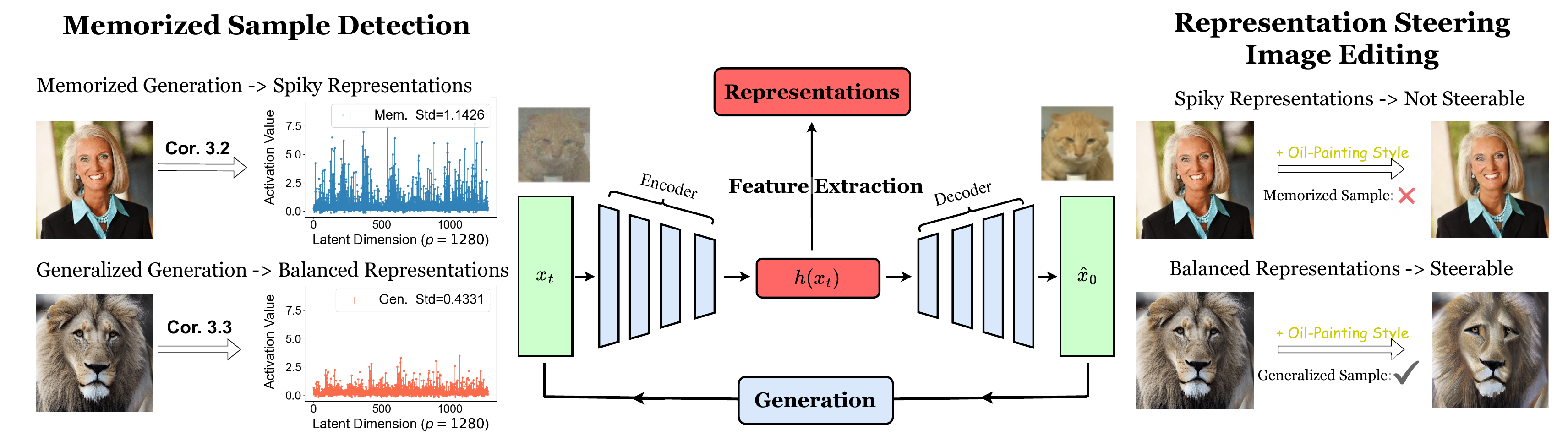}
    \caption{\textbf{Diffusion models generalize while learning benign internal representations.} Activations from intermediate network layers form a \emph{representation space}, within which distinct patterns emerge: memorized samples produce spiky representations that make them detectable, whereas novel generations yield balanced, information-rich representations that support controllable generation via representation steering.}
\label{fig:teaser}
\end{figure}

\paragraph{Summary of contributions.} 
Our main contributions are as follows:
\begin{itemize}[leftmargin=*]
\item \textbf{Unified framework in a nonlinear ReLU setting.}
We analyze the optimal solutions of a two-layer nonlinear ReLU DAE under different empirical data sizes, providing a unified characterization of memorization and generalization that goes beyond prior random-feature or linear model analyses.
\item \textbf{A representation-centric understanding of generalization.} We establish a rigorous connection between representation structures and generalization, identifying distinct patterns that separate memorization from generalization and validating these insights across diverse model settings.
\item \textbf{Theory-inspired tools for memorization detection and model steering.} Building on our analysis, we propose simple yet effective methods for memorization detection and representation-space steering, revealing distinct behaviors of generalized versus memorized samples.
\end{itemize}

\section{Problem Setup}\label{sec:setup}
In this section, we first introduce the basics of diffusion models, and then describe our problem setup for theoretical studies in \Cref{sec:main}.

\smallskip
\subsection{A Denoising Perspective of Diffusion Models}
\noindent \textbf{Basics of diffusion models.}
Diffusion models comprise two processes: (i) a forward noising process and (ii) a reverse denoising/sampling process. The forward process progressively corrupts a clean sample $\bm{x}_0$ via
$\bm{x}_t=\bm{x}_0+\sigma_t \bm{\epsilon}$ with $\bm{\epsilon}\sim\mathcal{N}(\bm{0},\bm{I})$, while the reverse process (e.g., DDIM~\autocite{song2020denoising}) removes noise to generate data:
\begin{align}\label{eq:ddim}
\bm{x}_{t-1}=\bm{x}_t-(\sigma_t-\sigma_{t-1})\,\sigma_t\nabla \log p_t(\bm{x}_t),
\end{align}
where $\nabla \log p_t(\bm{x}_t)$ is the score function of the marginal distribution of the noisy sample $\bm{x}_t$ at time $t$. To estimate $\nabla \log p_t(\bm{x}_t)$, we use a denoising autoencoder (DAE) $\bm{f}_{\bm{\theta}}(\bm{x}_t)$ \autocite{karras2022elucidating, li2025back} that predicts $\bm{x}_0$ from $\bm{x}_t$, so that 
\begin{align*}
\nabla \log p_t(\bm{x}_t) = (\bm{x}_t - \bm{f}_{\mathrm{gt}}(\bm{x}_t))/\sigma_t^2 
\approx (\bm{x}_t - \bm{f}_{\bm\theta}(\bm{x}_t))/\sigma_t^2,
\end{align*}
where $\bm{f}_{\mathrm{gt}}(\bm{y}) := \mathbb{E}\!\left[\bm{x} \mid \bm{x} + \sigma_t \bm{\epsilon} = \bm{y};\, \bm{x} \sim p_{\mathrm{gt}}\right]$
is the ground-truth denoiser via Tweedie's formula \autocite{efron2011tweedie}. Thus, the ideal (population) objective to learn the DAE is
\begin{align}\label{eqn:loss-pop}
\frac{1}{T}\sum_{t=0}^{T}
\mathbb{E}_{\bm{x}\sim p_{\mathrm{gt}},\,\bm{\epsilon}\sim\mathcal{N}(\bm{0},\bm{I})}
\!\left[\big\|\bm{f}_{\bm{\theta}}(\bm{x}+\sigma_t \bm{\epsilon},t)-\bm{x}\big\|^2\right].
\end{align}

\noindent \textbf{Generalization of diffusion models.}
In practice, we only have finitely many empirical samples $\bm{X}=\{\bm{x}_i\}_{i=1}^n$ with $\bm{x}_i\sim p_{\text{gt}}$.
Accordingly, we work with the empirical distribution
$p_{\mathrm{emp}}=\tfrac{1}{n}\sum_{i=1}^{n}\delta(\bm{x}-\bm{x}_i)$, and
\Cref{eqn:loss-pop} reduces to its empirical counterpart.
Minimizing this empirical loss leads to the nonparametric \emph{empirical denoiser}
$\bm{f}_\mathrm{emp}$~\autocite{gu2025memorization}, which maps a noisy input towards the nearest training samples:
\begin{align}\label{eq:emp denoiser}
\bm{f}_{\mathrm{emp}}(\bm{y})
= \mathbb{E}\!\left[\bm{x}\mid \bm{x}+\sigma_t\bm{\epsilon}=\bm{y};\,\bm{x}\sim p_{\mathrm{emp}}\right]
= \frac{\sum_{i=1}^n \mathcal{N}(\bm{y};\bm{x}_i,\sigma_t^2\bm{I})\,\bm{x}_i}
{\sum_{i=1}^n \mathcal{N}(\bm{y};\bm{x}_i,\sigma_t^2\bm{I})}.
\end{align}
Sampling with $\bm{f}_\mathrm{emp}$ can provably reproduce training samples~\autocite{zhang2024emergence,baptista2025memorization}.
In practice, however, this empirical loss is minimized by taking the gradient descent over a parameterized neural network, which does not always overfit; instead, it can approximate the population denoiser $\bm{f}_{\mathrm{gt}}$~\autocite{niedoba2024towards}.
In this paper, we aim to understand when a parameterized network overfits (learns $\bm{f}_\mathrm{emp}$) versus generalizes (learns $\bm{f}_\mathrm{gt}$).

\subsection{Our Theoretical Framework}
\noindent \textbf{Data assumptions.}
We assume a $K$-component mixture of Gaussians (MoG) for the data distribution:
\begin{align}\label{eqn:MoG}
\bm{x}\sim p_{\mathrm{gt}}:=\sum_{k=1}^K \rho_k\,\mathcal{N}(\bm{\mu}_k,\bm{\Sigma}_k),
\qquad \sum_{k=1}^K \rho_k=1,
\end{align}
which is a standard approximation to data manifolds used in recent theoretical studies
\autocite{wang2024diffusion,zhang2024emergence,li2025understanding,cui2023high,gatmiry2024learning,biroli2024dynamical,kamkari2024geometric,buchanan2025edgememorizationdiffusionmodels,li2025understanding,li2025towards}.

\smallskip
\noindent \textbf{Model parameterization and training loss.}
Following \autocite{vincent2011connection,chen2023score,zeno2023minimum,cui2025precise}, we parameterize the DAE by a two-layer ReLU network:
\begin{align}\label{eqn:two-layer-DAE}
\bm{f}_{\bm{W}_2,\bm{W}_1}(\bm{x}) = \bm{W}_2\bm{h}(\bm{x}) =\bm{W}_2\,[\bm{W}_1^\top \bm{x}]_+,
\end{align}
with $\bm{W}_1,\bm{W}_2\in\mathbb{R}^{d\times p}$, $[\cdot]_+$ denoting ReLU and \emph{$\bm{h}(\cdot)$ stands for the representation}. 
Training and sampling can be viewed as operating with a collection of DAEs across multiple noise levels.
Following prior work~\autocite{li2024understanding,zeno2025diffusion,zhang2024analyzing,hanfeature}, we begin with a fixed noise level $\sigma$.
The $\ell_2$-regularized training objective is
\vspace{-0.2cm}
\begin{align}\label{eq:dae loss}
\min_{\bm{W}_2,\bm{W}_1}\ \mathcal{L}_{\bm{X}}(\bm{W}_2,\bm{W}_1)
= \frac{1}{n}\sum_{i=1}^n \mathbb{E}_{\bm{\epsilon}\sim \mathcal{N}(\bm{0},\bm{I})}
\Big[\big\|\bm{f}_{\bm{W}_2,\bm{W}_1}(\bm{x}_i+\sigma \bm{\epsilon})-\bm{x}_i\big\|_2^2\Big]
+ \lambda \sum_{l=1}^2 \|\bm{W}_l\|_F^2.
\vspace{-0.2cm}
\end{align}
\Cref{fig:sampling} illustrates training and sampling across multiple noise levels under this setting; we revisit the effect of different noise levels after Corollary~\ref{cor:dae-gen}.

We adopt \eqref{eqn:two-layer-DAE} as a minimal, tractable model to analyze memorization and generalization.
Recent works~\autocite{lukoianov2025locality, li2024understanding, wang2024unreasonable} imply that real diffusion models exhibit approximate piecewise linearity;
our ReLU model shares this structure and can be viewed as a local approximation of such networks.
We verify this connection via an SVD analysis of denoiser Jacobians~\autocite{kadkhodaie2023generalization,achilli2024losing} for EDM, SD1.4, and ReLU DAE in Appendix~\ref{app:connection}:
around generalized samples, the Jacobian reflects local data statistics as in Cor.~\ref{cor:dae-gen}, whereas around memorized samples it becomes noticeably low-rank and is dominated by the corresponding data vector,
consistent with Cor.~\ref{cor:dae-mem}.



\section{Main Theorems}\label{sec:main}
Building on the setup in \Cref{sec:setup}, this section presents our main theoretical results for a two-layer nonlinear DAE with the ReLU activation, complemented by experiments on state-of-the-art diffusion models. By characterizing the optimal solutions of the training loss, we establish:
\begin{tcolorbox}[title=Three Learning Regimes of Training Diffusion Models]
    \begin{itemize}[leftmargin=*]
        \item \textbf{Memorization Regime (\Cref{subsec: dae mem}):} In over-parameterized models trained on locally sparse data, memorization arises when network weights store individual training samples, leading to overfitting and producing distinctively spiky representations.
        \item \textbf{Generalization Regime (\Cref{subsec:gen}):} In contrast, when the model is under-parameterized and the data are locally abundant, the weights capture underlying data statistics, enabling novel sample generation and yielding balanced, semantically rich representations.
        \item \textbf{Hybrid Regime (\Cref{subsec: partial mem-gen}):} Imbalanced real-world data leads to a hybrid regime where models generalize on abundant clusters while memorizing scarce ones. Consequently, the representations can help identify an input's region and detect its memorized samples.
    \end{itemize}
\end{tcolorbox}

\begin{figure}[t]
    \centering
    \includegraphics[width=\linewidth]{./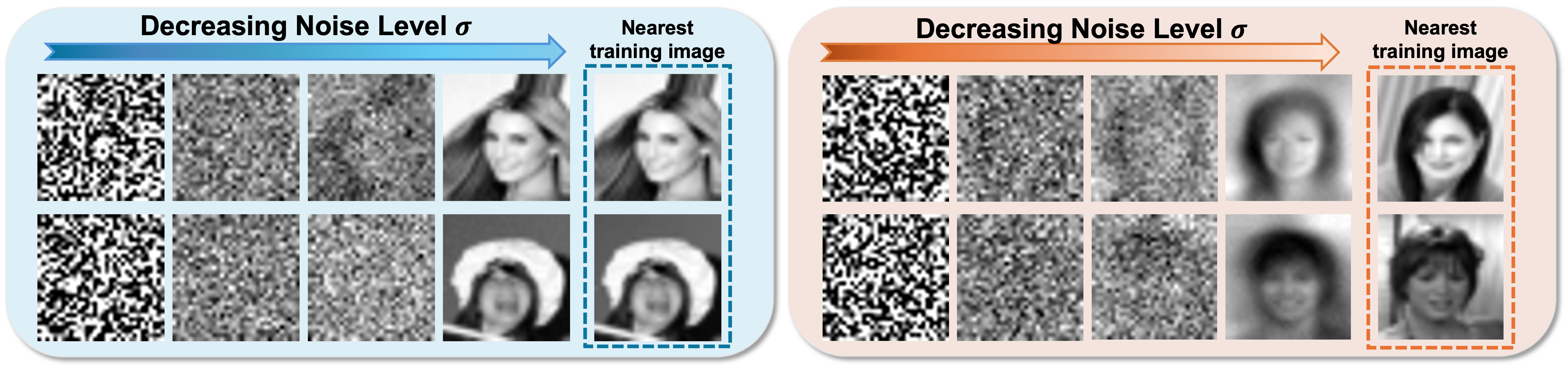}
    \caption{\textbf{Sampling with Mem./Gen. ReLU DAEs.} \emph{Left}: sampling with a set of memorized ReLU DAE produces duplications of training images. \emph{Right}: sampling with generalized DAEs produces novel images. Details for training and sampling are provided in~\Cref{app:dae train/sampling} and single-step denoising results are shown in~\Cref{app:denoising}}\label{fig:sampling}
\end{figure}

To substantiate the above results, we first establish a general theorem characterizing the local minimizers of the training loss \eqref{eq:dae loss} for the DAE networks. This theorem then specializes to individually address the memorization and generalization regimes. To simplify the nonlinear DAE problem and obtain a more interpretable characterization, we adopt the following separability notion. It is designed to match bias-free linear layers (as in our ReLU DAE), where cluster structure is naturally captured by within-cluster concentration and angular separation of cluster means; the definition can be extended to standard hyperplane separability by allowing affine (biased) layers.

\begin{definition}[$(\alpha,\beta)$-Separability of Training Data]\label{def:separable}
Suppose the training dataset $\bm{X}$ can be partitioned into $M$ clusters
$\bm{X} = [\bm{X}_1,\dots,\bm{X}_M]$, where
$\bm{X}_k=[\bm{x}_{k,1},\dots,\bm{x}_{k,n_k}]\subseteq\mathbb{R}^d$ has mean
$\bar{\bm{x}}_k:=\tfrac{1}{n_k}\sum_{j=1}^{n_k}\bm{x}_{k,j}$.
We say the dataset is \emph{$(\alpha,\beta)$-separable} if
\begin{align*}
\text{for all }k,j:\quad
\frac{\|\bm{x}_{k,j}-\bar{\bm{x}}_k\|_2}{\|\bar{\bm{x}}_k\|_2} \le \alpha,
\qquad
\text{and for all }k\neq \ell:\quad
\frac{\langle \bar{\bm{x}}_k,\bar{\bm{x}}_\ell\rangle}{\|\bar{\bm{x}}_k\|_2\,\|\bar{\bm{x}}_\ell\|_2}
\le \beta .
\end{align*}
\end{definition}

The parameters $\alpha$ and $\beta$ are not required to be universal constants. Intuitively, tight within-cluster concentration together with well-separated means yields an inter-cluster margin $\gamma$ that quantifies negative alignment between samples from different clusters; $\gamma$ depends only on $\alpha$, $\beta$, and the norms of the training data (the explicit expression is given in Appendix~\ref{app:thm-dae}). Under this separability condition, we show that local minimizers of the DAE admit a block-wise structure.

\begin{theorem}[Block-wise Structure of Local Minimizers in the DAE Loss]\label{thm:dae}
Suppose the training data $\bm{X}=[\bm{X}_1,\dots,\bm{X}_M]$ is $(\alpha,\beta)$-separable according to \Cref{def:separable} with $\beta<0$. Consider minimizing the training loss \eqref{eq:dae loss} for a DAE trained with a fixed noise level $\sigma \ge 0$ and weight decay $\lambda \ge 0$. Then there exists a local minimizer with a block-wise structure, where the weights satisfy $\bm{W}_2^\star = \bm{W}_1^\star$ where
\begin{align}\label{eqn:solution-structure}
    \bm{W}_1^\star = \begin{pmatrix}
\bm{W}_{\bm{X}_1} & \bm{W}_{\bm{X}_2} & \cdots & \bm{W}_{\bm{X}_M}
\end{pmatrix} + \bm{R}(\sigma,\gamma).
\end{align}
Here, $\bm{W}_{\bm{X}_k}\in\mathbb{R}^{d\times p_k}, \sum_{k=1}^M p_k=p$ denotes the block-decomposition of $\bm{W}$, $\bm{R}(\sigma,\gamma)$ is a small residual term whose Frobenius norm is bounded by $
\|\,\bm{R}(\sigma,\gamma)\,\|_F^2 \;\le\; C\!\left(e^{-c\,\gamma^2/\sigma^2}\right)$ for universal constants $C,c>0$ and a margin $\gamma>0$ determined by $(\alpha,\beta)$. Each block $\bm{W}_{\bm{X}_k}$ ($1\leq k \leq M$) is constructed from the Gram matrix $\bm{X}_k\bm{X}_k^\top=\bm{U}_k\bm{\Lambda}_k\bm{U}_k^\top$ of the $k$-th data cluster as follows:
\begin{align}\label{eqn:weights-main}
    \bm{W}_{\bm{X}_k} = \bm{U}_{k}^{(p_k)} \left(\bm{I}+n_k\sigma^2 \left(\bm{\Lambda}_{k}^{(p_k)}\right)^{-1}\right)^{-\tfrac{1}{2}} \left(\bm{I}-n\lambda\,\left(\bm{\Lambda}_{k}^{(p_k)}\right)^{-1}\right)^{\tfrac{1}{2}} \bm{O}_k^\top,
\end{align}
where (i) $\bm{U}_k^{(p_k)} \in \mathbb R^{d \times p_k} $ is the submatrix of $\bm{U}_k$ containing its top $p_k$ eigenvectors, (ii)  $\bm{\Lambda}_k^{(p_k)} \in \mathbb R^{p_k \times p_k}$ contains the corresponding $p_k$ eigenvalues, and (iii)  $\bm{O}_k\in \mathbb R^{p_k \times p_k}$ is an orthogonal matrix accounting for rotational symmetry. This holds under the condition $n\lambda < \min_k \lambda_{\min}(\bm{\Lambda}_k^{(p_k)})$, which ensures that the matrix square roots in \eqref{eqn:weights-main} are well-defined.
\end{theorem}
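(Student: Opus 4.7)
The plan is to construct an explicit candidate solution matching the claimed block structure and verify it is a local minimizer. The key insight is that $(\alpha,\beta)$-separability with $\beta<0$, combined with the ReLU nonlinearity, essentially decouples the optimization across clusters, reducing the analysis to $M$ independent linear DAE subproblems up to a noise-induced residual.

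\textbf{Step 1 (Cluster-level decoupling via ReLU).} I would first convert $(\alpha,\beta)$-separability into an explicit inter-cluster margin $\gamma>0$ (the expression promised in Appendix~\ref{app:thm-dae}). Concretely, for the ansatz in \eqref{eqn:solution-structure}, the columns of $\bm{W}_{\bm{X}_k}$ lie in the span of cluster $k$'s data, so for any sample $\bm{x}_{\ell,j}$ with $\ell\neq k$, Cauchy--Schwarz together with $\beta<0$ and the within-cluster bound $\alpha$ gives $\bm{w}^\top \bm{x}_{\ell,j}\le -\gamma<0$. Hence, in the noiseless limit, ReLU annihilates every cross-cluster pre-activation, so the network evaluated at $\bm{x}_{\ell,j}$ depends only on $\bm{W}_{\bm{X}_\ell}$.

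\textbf{Step 2 (Noise-induced residual).} With additive noise, the pre-activation becomes $\bm{w}^\top(\bm{x}_{\ell,j}+\sigma\bm{\epsilon})\sim \mathcal{N}(\bm{w}^\top\bm{x}_{\ell,j},\sigma^2\|\bm{w}\|_2^2)$. A Gaussian tail bound gives that the probability of a sign flip is at most $\exp(-c\gamma^2/\sigma^2)$, and its contribution to the loss and to the first-order conditions propagates as a Frobenius-norm perturbation $\bm{R}(\sigma,\gamma)$ with $\|\bm{R}\|_F^2\le C e^{-c\gamma^2/\sigma^2}$. This absorbs the cross-cluster leakage into a controlled residual, so that the block structure \eqref{eqn:solution-structure} is only perturbatively dressed by $\bm{R}$.

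\textbf{Step 3 (Within-cluster linear reduction and closed form).} Once cross-cluster activations are suppressed, the network acts linearly on each cluster (all ReLUs in block $k$ are active for inputs in cluster $k$ at the candidate point), and the objective restricted to block $k$ reduces to a regularized linear denoising problem
\begin{align*}
\mathcal{L}_k(\bm{W}_1^{(k)},\bm{W}_2^{(k)}) = \frac{1}{n}\sum_{j=1}^{n_k}\!\mathbb{E}_{\bm{\epsilon}}\!\left[\|\bm{W}_2^{(k)}\bm{W}_1^{(k)\top}(\bm{x}_{k,j}+\sigma\bm{\epsilon})-\bm{x}_{k,j}\|_2^2\right] + \lambda(\|\bm{W}_1^{(k)}\|_F^2+\|\bm{W}_2^{(k)}\|_F^2).
\end{align*}
Taking the expectation over $\bm{\epsilon}$ gives a quadratic in $\bm{W}_2^{(k)}\bm{W}_1^{(k)\top}$ with an extra trace penalty $\sigma^2\operatorname{tr}(\bm{W}_2^{(k)}\bm{W}_1^{(k)\top}\bm{W}_1^{(k)}\bm{W}_2^{(k)\top})$. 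Setting the first-order conditions and using AM--GM on the two norm penalties forces $\bm{W}_2^{(k)}=\bm{W}_1^{(k)}$ up to an orthogonal gauge; then optimizing over the aligned singular directions of the cluster Gram matrix $\bm{X}_k\bm{X}_k^\top=\bm{U}_k\bm{\Lambda}_k\bm{U}_k^\top$ yields precisely the closed form \eqref{eqn:weights-main}. The condition $n\lambda<\lambda_{\min}(\bm{\Lambda}_k^{(p_k)})$ is exactly what ensures the singular-value factor $(\bm{I}-n\lambda(\bm{\Lambda}_k^{(p_k)})^{-1})^{1/2}$ is real, i.e.\ weight decay does not kill the top-$p_k$ modes. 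The orthogonal $\bm{O}_k$ absorbs the gauge symmetry $\bm{W}_{\bm{X}_k}\mapsto \bm{W}_{\bm{X}_k}\bm{O}_k^\top$ that preserves $\bm{W}_2\bm{W}_1^\top$.

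\textbf{Step 4 (Local minimality) and main obstacle.} To promote the constructed critical point to a local minimum, I would restrict to a neighborhood small enough that the ReLU activation pattern is frozen; on this open set the loss is smooth and strongly convex in the directions orthogonal to the $\bm{O}_k$-gauge, which already suffices for local minimality modulo gauge. The main technical obstacle is Step 2: rigorously propagating the exponentially small leakage through the nonlinear first-order conditions without spoiling the block-closed form. In particular, one must show that incorporating $\bm{R}(\sigma,\gamma)$ does not change the \emph{active} ReLU pattern within each cluster, and that the cross-block gradients remain $O(e^{-c\gamma^2/\sigma^2})$ uniformly in a neighborhood; the rest of the argument is standard linear-autoencoder algebra.
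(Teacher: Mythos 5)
Your proposal tracks the paper's own argument very closely: establish a per-cluster margin $\gamma$ from $(\alpha,\beta)$-separability, use the margin plus the ReLU sign structure to decouple the loss into $M$ independent regularized linear-autoencoder subproblems, control the noise-induced leakage via Gaussian-tail/rectified-Gaussian bounds that decay as $e^{-c\gamma^2/\sigma^2}$, solve each LAE block (the paper's Lemma~\ref{lem:lae}) to get the Wiener-filter-type closed form \eqref{eqn:weights-main}, and invoke local strong convexity of the mask-frozen quadratic to promote the critical point to a local minimizer up to the $\bm O_k$ gauge. Your Step~3 reduction of the two Frobenius penalties to a tied factorization via AM--GM is the standard $\min_{\bm W_2\bm W_1^\top=\bm A}(\|\bm W_1\|_F^2+\|\bm W_2\|_F^2)=2\|\bm A\|_*$ identity that the paper uses, and your identification of the main obstacle (propagating the exponentially small cross-block leakage through the nonlinear first-order conditions while keeping the active masks fixed) is precisely what the paper resolves in its Step~2 (bounding $\mathbb E\|\bm e(\bm\varepsilon)\|_F^2$) and Step~4 (the strong-convexity displacement bound \eqref{eq:dae-distance}); this is the same route, not a genuinely different one.
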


\noindent \textbf{Remarks.} The proof is deferred to \Cref{app:thm-dae}. The local minimizer \eqref{eqn:solution-structure} consists of a block-wise main term plus a residual $\bm{R}(\sigma,\gamma)$, which vanishes as $\sigma$ becomes small relative to the separation margin $\gamma$. This is consistent with the low-noise regimes that are crucial for diffusion-model sampling and representation learning~\autocite{niedoba2024towards,pavlova2025diffusion}. Empirically, we observe this block-wise structure even for relatively large $\sigma$ (\Cref{fig:W_vis}). The $(\alpha,\beta)$-separability assumption serves mainly to simplify the proof; similar conclusions hold more generally (see Appendix~\ref{app:thm verification}). Finally, the optimal solution is not tied to a specific block order, since $\bm{f}_{\bm{W}_2,\bm{W}_1}$ is invariant to arbitrary column permutations of the weight matrices $(\bm W_1, \bm W_2)$.  

For the remainder of this section, we specialize the result to the memorization (\Cref{subsec: dae mem}) and generalization (\Cref{subsec:gen}) regimes by varying the training-set size. For clarity, we omit the residual term $\bm{R}(\sigma,\gamma)$ and focus on the block-wise leading component of the optimal solution.

\subsection{Case 1: Memorization with Overparameterization}\label{subsec: dae mem}
First, we consider the overparameterized setting where the model parameters are larger than the number of training samples $p \ge n$. In this ``sample sparse'' regime, each training sample can be treated as an individual cluster that is sufficiently separated from each other, where 
$\alpha_1=0$ and $\beta_1$ can be set to $\max_{i,j}\langle\bm{x}_i, \bm{x}_j\rangle$. Based on this setup, \Cref{thm:dae} can be reduced to the following.

\begin{corollary}[Memorization in Overparameterized DAEs]\label{cor:dae-mem}
Under the problem setup of \Cref{thm:dae}, consider training data $\bm{X} = [\bm{x}_1, \dots, \bm{x}_n]\subseteq \mathbb{R}^d$ that is $(0,\beta_1)$-separable (with $\beta_1<0$). Furthermore, let the two-layer nonlinear DAE $\bm{f}_{\bm{W}_2,\bm{W}_1}(\bm{x})$ be overparameterized with $p\ge n$ hidden units.
If we further assume the weight decay $\lambda$ in \eqref{eq:dae loss} satisfies $n\lambda<\min_{i\in [n]} \|\bm{x}_i\|_2^2$, then  there exists a local minimizer of the DAE loss \eqref{eq:dae loss} with the following memorizing block-wise structure:
\begin{align}\label{eqn:mem-solution}
\bm{W}_2^\star=\bm{W}_1^\star
=\begin{pmatrix}
r_1\bm{x}_1 & \cdots & r_n\bm{x}_n & \bm{0} & \cdots & \bm{0}
\end{pmatrix}
=: \bm{W}_{\mathrm{mem}},
\quad
r_i=\sqrt{\frac{\|\bm{x}_i\|_2^2 - n\lambda}{\|\bm{x}_i\|_2^4 + \sigma^2 \|\bm{x}_i\|_2^2}}.
\end{align}
 Moreover, when $\lambda\to0$, this solution attains an empirical loss that is independent of the ambient dimension $d$:
\begin{align*}
\mathcal{L}_{\bm{X}}(\bm{W}_{2}^\star,\bm{W}_{1}^\star)
= \frac{1}{n}\sum_{i=1}^n \frac{\sigma^2 \|\bm{x}_i\|_2^2}{\sigma^2 + \|\bm{x}_i\|_2^2}
< \sigma^2.
\end{align*}
\end{corollary}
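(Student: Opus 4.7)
The plan is to obtain \Cref{cor:dae-mem} as a direct specialization of \Cref{thm:dae} in the sample-sparse regime. The key observation is that a $(0,\beta_1)$-separable dataset is exactly the case $M=n$ with \emph{singleton} clusters $\bm{X}_k=\{\bm{x}_k\}$, $n_k=1$, and cluster means $\bar{\bm{x}}_k=\bm{x}_k$: the within-cluster concentration $\alpha$ then vanishes trivially, while the assumption $\beta_1<0$ supplies the pairwise negative alignment required by \Cref{thm:dae}. Since the width satisfies $p\geq n$, I would allocate $p_k=1$ hidden unit per cluster for $k=1,\ldots,n$ and place the remaining $p-n$ columns of $\bm{W}_1,\bm{W}_2$ at zero; these contribute nothing to the output and do not affect critical-point conditions.

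Next, I would plug into formula \eqref{eqn:weights-main} for each singleton cluster. The Gram matrix $\bm{X}_k\bm{X}_k^\top=\bm{x}_k\bm{x}_k^\top$ is rank one, with top eigenpair $\bm{U}_k^{(1)}=\bm{x}_k/\|\bm{x}_k\|_2$ and $\bm{\Lambda}_k^{(1)}=\|\bm{x}_k\|_2^2$. Taking the trivial $1\times 1$ rotation $\bm{O}_k=1$, the two scalar factors in \eqref{eqn:weights-main} collapse to
\[
\sqrt{\frac{\|\bm{x}_k\|_2^2}{\|\bm{x}_k\|_2^2+\sigma^2}}\cdot\sqrt{\frac{\|\bm{x}_k\|_2^2-n\lambda}{\|\bm{x}_k\|_2^2}},
\]
and therefore $\bm{W}_{\bm{X}_k}=r_k\bm{x}_k$ with the claimed $r_k$. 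The hypothesis $n\lambda<\min_k\|\bm{x}_k\|_2^2$ is precisely what keeps the second factor real and coincides with the condition $n\lambda<\min_k\lambda_{\min}(\bm{\Lambda}_k^{(p_k)})$ of \Cref{thm:dae}, so assembling the blocks yields the memorizing weight $\bm{W}_{\mathrm{mem}}$ in \eqref{eqn:mem-solution}.

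For the loss evaluation at $\lambda=0$, I would exploit the fact that $\bm{W}_{\mathrm{mem}}$ activates only the $j$-th hidden unit on the input $\bm{y}_j=\bm{x}_j+\sigma\bm{\epsilon}$. Indeed, by $\beta_1<0$ the pre-activation at unit $i\neq j$ equals $r_i(\langle \bm{x}_i,\bm{x}_j\rangle+\sigma\langle \bm{x}_i,\bm{\epsilon}\rangle)$ whose mean is strictly negative, while unit $j$ has strictly positive mean $r_j\|\bm{x}_j\|_2^2$. Under this clean gating the denoiser reduces to the rank-one map $\bm{f}(\bm{y}_j)=r_j^2\langle \bm{x}_j,\bm{y}_j\rangle\,\bm{x}_j$, and a short bias--variance decomposition with $r_j^2=1/(\|\bm{x}_j\|_2^2+\sigma^2)$ gives
\[
\mathbb{E}_{\bm{\epsilon}}\|\bm{f}(\bm{y}_j)-\bm{x}_j\|_2^2=\|\bm{x}_j\|_2^2\left(\frac{\sigma^2}{\|\bm{x}_j\|_2^2+\sigma^2}\right)^2+\frac{\sigma^2\|\bm{x}_j\|_2^2}{(\|\bm{x}_j\|_2^2+\sigma^2)^2}\cdot\|\bm{x}_j\|_2^2=\frac{\sigma^2\|\bm{x}_j\|_2^2}{\|\bm{x}_j\|_2^2+\sigma^2},
\]
whose average over $j$ is the stated empirical loss; the bound $<\sigma^2$ follows from $\|\bm{x}_j\|_2^2/(\|\bm{x}_j\|_2^2+\sigma^2)<1$.

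The main obstacle I anticipate is making the ReLU-gating argument fully rigorous: because $\bm{\epsilon}\sim\mathcal{N}(\bm{0},\bm{I})$, every unit fires on a Gaussian half-space, so ``only unit $j$ active'' is only exponentially-accurate rather than exact. The cleanest resolution is to work with the block-wise leading component in the sense of \Cref{thm:dae}, noting that the probability of a wrong gating pattern is $O(e^{-c\gamma^2/\sigma^2})$ and contributes a correction of the same order as the dropped residual $\bm{R}(\sigma,\gamma)$; hence the displayed formula is the correct leading-order value, consistent with the corollary's convention of suppressing the residual.
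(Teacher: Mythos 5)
Your proposal is correct and takes essentially the same route as the paper: specialize Theorem~\ref{thm:dae} to singleton clusters so that each Gram matrix $\bm X_k\bm X_k^\top=\bm x_k\bm x_k^\top$ is rank one, read off $\bm W_{\bm X_k}=r_k\bm x_k$ from \eqref{eqn:weights-main}, zero out the $p-n$ extra columns, and evaluate the $\lambda\to0$ loss. The only cosmetic difference is that you recompute the per-sample loss by a direct bias--variance decomposition of the rank-one denoiser, whereas the paper's proof invokes the decoupled scalar minimization $\min_{\alpha}\bigl[(1-\alpha)^2\|\bm x_i\|_2^2+\sigma^2\alpha^2\bigr]$ already solved inside Lemma~\ref{lem:lae}; both yield $\sigma^2\|\bm x_i\|_2^2/(\sigma^2+\|\bm x_i\|_2^2)$, and your remark that the ReLU-gating error is absorbed into the $\bm R(\sigma,\gamma)$ residual of Theorem~\ref{thm:dae} matches the paper's convention of reporting only the leading block-wise component.
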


\begin{figure}[t]
    \centering
    \includegraphics[width=0.95\linewidth]{./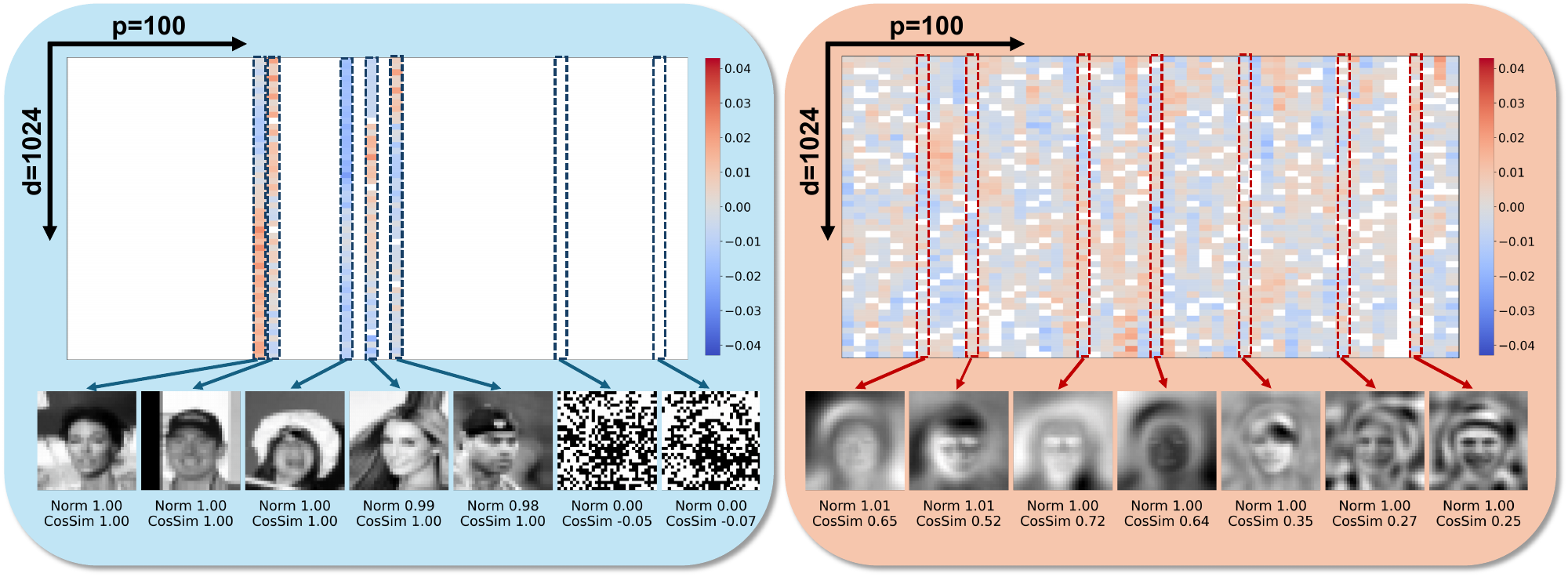}
    \caption{\textbf{Verification of \Cref{cor:dae-mem} and \Cref{cor:dae-gen}.} We visualize the learned encoder matrix $\bm{W}_1$ of a ReLU DAE trained with noise level $\sigma=0.2$. When trained on 5 CelebA face images, the model stores training samples in its columns, matching \Cref{cor:dae-mem}. When trained on 10,000 images, the model generalizes and captures data statistics, consistent with \Cref{cor:dae-gen}. Empirically, the same behavior holds for larger noise, up to $\sigma=5$; additional results are in \Cref{app:thm verification}. }
    \label{fig:W_vis}
\end{figure}

\noindent \textbf{Remarks.} The proof is deferred to \Cref{app:cor-dae-mem}, and our result implies the following:
\begin{itemize}[leftmargin=*]

\item \textbf{Learning the optimal solution with sparse columns.} 
The structured solution with $(p-n)$ trailing zero columns in \eqref{eqn:mem-solution} is one among many local minimizers, as dense alternatives can arise by splitting sparse columns. However, empirical evidence and theory \autocite{xie2024adam} suggest that standard optimizers such as Adam \autocite{kingma2014adam} bias training toward $\ell_\infty$-smooth solutions of the DAE loss (cf. \Cref{cor:smoothness}). As a result, the solutions observed in practice often align with the sparse structure we construct (\Cref{fig:W_vis}).

\item \textbf{Sampling reproduce training samples (memorization).}
In this regime, the learned DAE closely approximates the empirical denoiser $\bm f_{\mathrm{emp}}$ in Eq.~\eqref{eq:emp denoiser}, achieving low empirical loss and consequently reproducing the training samples under sampling (as shown in \Cref{fig:sampling}). This occurs because the DAE's projection and reconstruction over the sparse columns of the weights during the reverse sampling effectively act as a power method, recovering memorized training data \autocite{weitzner2024relation}. Quantitatively, by plugging \Cref{cor:dae-mem} into the overall denoising score matching loss, we find that the KL divergence between the sampled and empirical distributions is bounded by $\frac{\pi}{2}\max_{1\leq i\leq n}\|\bm{x}_i\|$, confirming strong memorization.


\item \textbf{Spiky representations as a signature of memorization.}
As a consequence of \Cref{cor:dae-mem}, for any training sample $\bm{x}_i$, its learned representation within the DAE exhibits a distinctive sparse form:
\begin{align*}
\bm{h}_\mathrm{mem}(\bm{x}_i+\sigma\bm{\epsilon})=[\bm{W}^{\top}_\mathrm{mem}(\bm{x}_i+\sigma\bm{\epsilon})]_+
\approx (0,\ldots,0,\,r_{i}\bm{x}_i^\top(\bm{x}_i+\sigma\bm{\epsilon}),\,0,\ldots,0).
\end{align*}
This sparsity arises because $\bm{x}_i$ is \emph{negatively} correlated with other samples stored in the learned weight matrix $\bm{W}_\mathrm{mem}$, yielding a nearly one-hot feature vector within the representation space (\Cref{fig: MLP vis}). Such \emph{spikiness} could serve as a robust signature of memorization \autocite{hakemi2025deeper, gan2025massive}, which we empirically demonstrate on both synthetic (\Cref{fig: MLP vis}) and real-world (\Cref{fig: UNet Vis}) settings. Building on this insight, we introduce a simple yet effective memorization detection method that achieves strong results, as detailed later in \Cref{subsec:mem_detect}. Additionally, analogous correlations between sharp, localized activations and the recall of concrete stored knowledge have been empirically observed in Large Language Models (LLMs) \autocite{sun2024massive}, suggesting our findings could also offer a potential explanation for these phenomena in LLMs.

\end{itemize}
 
\subsection{Case 2: Generalization with Underparameterization}\label{subsec:gen}

On the other hand, suppose we have sufficiently many i.i.d. samples $\{\bm x_{k,i}\}_{i=1}^{n_k}$ from each Gaussian mode $k \in [K]$ of the MoG distribution \eqref{eqn:MoG}. Then the empirical mean and Gram matrix of each cluster $k$ concentrate around their expectations:
\begin{align}\label{eqn:approx}
\overline{\bm x}_k = \frac{1}{n_k}\sum_{i=1}^{n_k} \bm x_{k,i} \approx \bm \mu_k, \quad
\frac{1}{n_k}\bm{X}_k\bm{X}_k^\top \approx \bm{S}_k := \bm{\mu}_k\bm{\mu}_k^\top + \bm{\Sigma}_k.
\end{align}
If the component means are incoherent (i.e., $\langle \bm{\mu}_k,\bm{\mu}_\ell\rangle/(\|\bm{\mu}_k\|\|\bm{\mu}_\ell\|)< \beta_2$ for $k\neq \ell$) and the within-mode variance is small (i.e., $\|\bm{\Sigma}^{1/2}_k\|_F/\|\bm{\mu}_k\|_2<\alpha_2$), then with high probability the clusters $\{\bm{X}_k\}_{k=1}^K$ satisfy the separability conditions in \Cref{def:separable} with $(\alpha,\beta)=(\alpha_2,\beta_2)$. In this scenario, as we demonstrate below, the optimal weights of the DAE network will learn the local data statistics (specifically, the means and variances of the MoG) from these well-separated, non-degenerate clusters of training data to enable generalization.

\begin{figure}[t]
    \centering
    \begin{subfigure}{0.45\linewidth}
        \centering
        \includegraphics[width=\linewidth]{./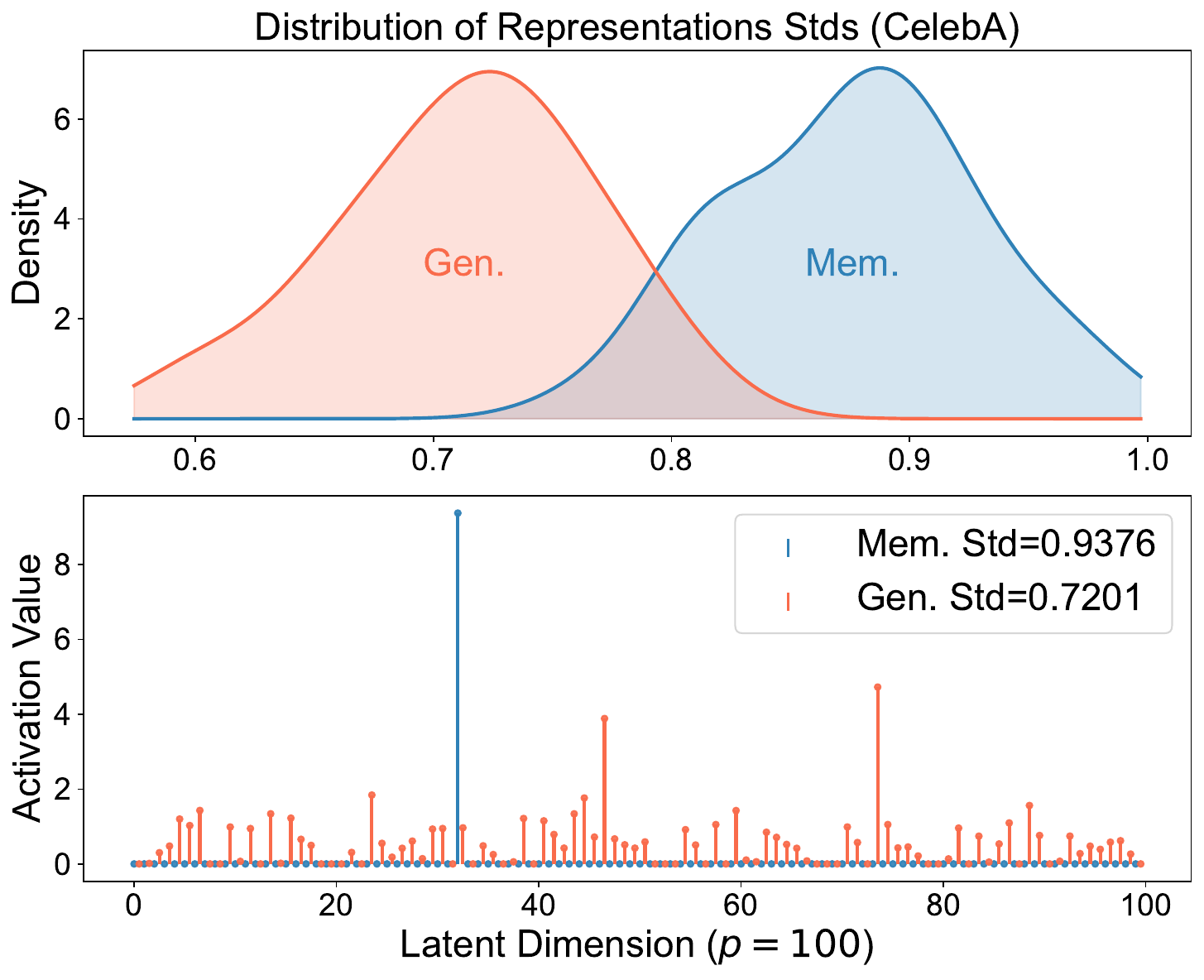}
    \end{subfigure}
    \hfill
    \begin{subfigure}{0.45\linewidth}
        \centering
        \includegraphics[width=\linewidth]{./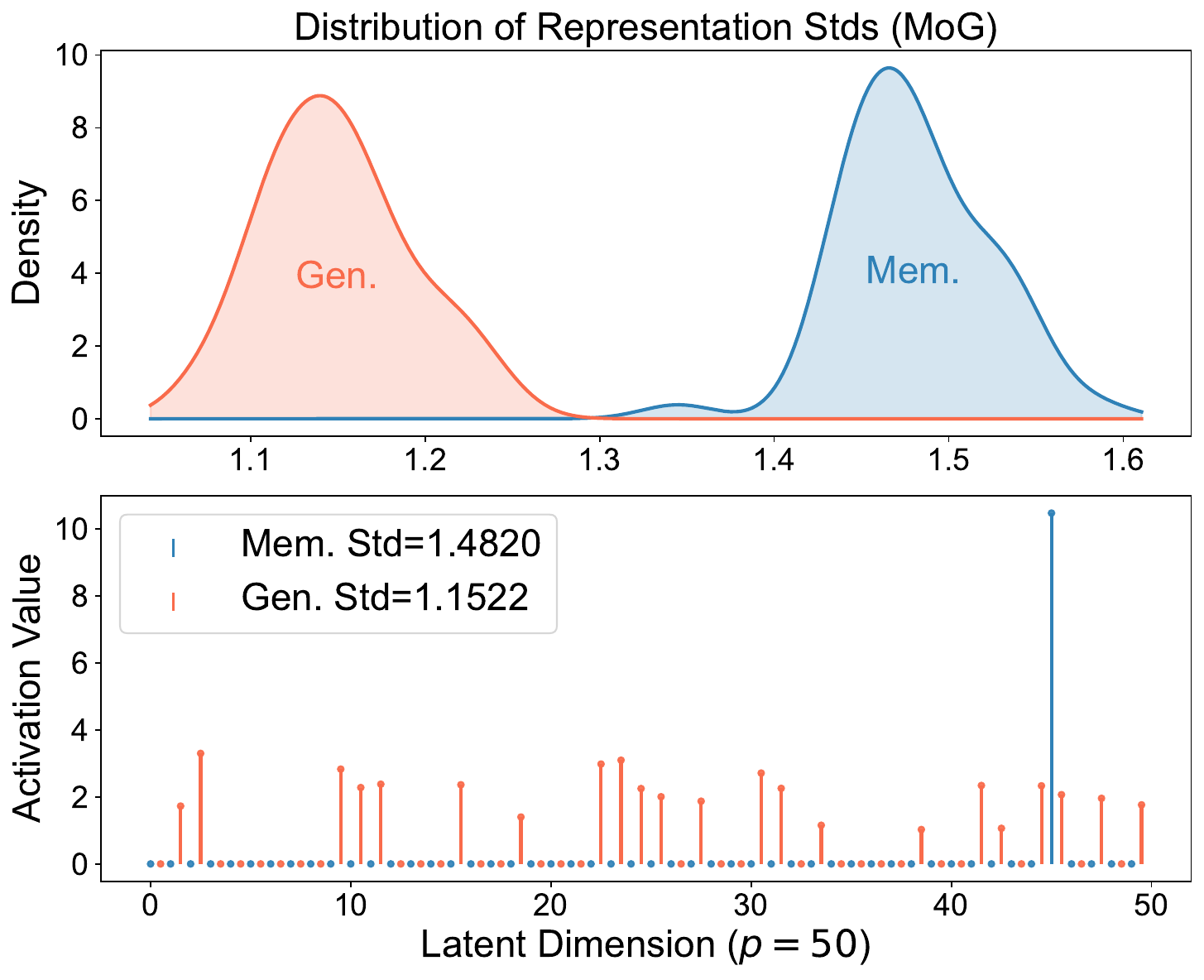}
    \end{subfigure}
    \caption{\textbf{Mem./Gen. representations in ReLU DAEs.} \emph{Top:} Memorized vs.\ generalized samples can be separated by the standard deviation (Std) of their representations: memorized models produce spiky, high-Std features, whereas generalized models do not. \emph{Bottom:} Representation of a single training data. The memorized model exhibits large outlier activations (high Std); the generalized model yields a more balanced representation (lower Std), consistent with our theory. All models use \(\sigma=0.2\). Left: CelebA. Right: MoG. See \Cref{app:dae train/sampling} for details.}
    \label{fig: MLP vis}
\end{figure}

\begin{corollary}[Generalization in Underparameterized DAEs]
\label{cor:dae-gen}
Under the problem setup of \Cref{thm:dae}, we assume the training data satisfy the separability condition in \Cref{def:separable}.\footnote{For simplicity, we take separability as an assumption; given sufficient samples, it can be verified under extra conditions on the means and covariances of MoG using standard measure concentration tools \autocite{Vershynin2018HDP}.} If the DAE network in \eqref{eqn:two-layer-DAE} is under-parameterized with \(p=\sum_{k=1}^K p_k \ll n\), then there exists a local minimizer of the DAE training loss \eqref{eq:dae loss} such that
\[
\bm{W}_2^\star=\bm{W}_1^\star
=\begin{pmatrix}
\bm{W}_{\bm{X}_1} & \bm{W}_{\bm{X}_2} & \cdots & \bm{W}_{\bm{X}_K}
\end{pmatrix}
=: \bm{W}_{\mathrm{gen}},
\]
where each block \(\bm{W}_{\bm{X}_k}\in\mathbb{R}^{d\times p_k}\) captures the principal components of the empirical Gram matrix \(\bm{X}_k\bm{X}_k^\top\) in \eqref{eqn:weights-main}, with $\bm{W}_{\bm{X}_k}\bm{W}_{\bm{X}_k}^\top$ concentrating to the rank-\(p_k\) optimal denoiser for \(\mathcal{N}(\bm{\mu}_k,\bm{\Sigma}_k)\):
\begin{align*}
\bm{W}_{\bm{X}_k}\bm{W}_{\bm{X}_k}^\top \to \Big[(\bm S_k-\frac{\lambda}{\rho_k}\bm I) (\bm S_k+\sigma^2\bm I)^{-1}\Big]_\text{rank-$p_k$ approx},
\end{align*} where $\bm S_k$ is introduced in \eqref{eqn:approx} and $\rho_k$ is the ratio of the $k$-th mode of MoG.  Moreover, when \(\lambda\to0\), the expectation of the test loss (which captures generalization error) can be bounded by
\[
\mathbb{E}_{\bm{X} \sim p_{\text{gt}}}\!\left[\mathcal{L}_{\bm X}\!\left(\bm{W}_{2}^\star, \bm{W}_{1}^\star\right)\right]
~\lesssim~
\sum_{k=1}^K\rho_k\brac{\sum_{j\le p_k}\frac{\mathrm{eig}_j(\bm{S}_k)\cdot \sigma^4}{\bigl(\mathrm{eig}_j(\bm{S}_k)+\sigma^2\bigr)^2}
+\sum_{j>p_k}\mathrm{eig}_j(\bm{S}_k)
+\frac{C_k\,p_k}{\sigma^2\,n_k}},
\]
where \(C_k>0\) depends only on \(\sigma\) and spectral properties of \(\bm{S}_k\). Here, $\mathrm{eig}_j(\bm{S}_k)$ denotes the $j$-th eigenvalue of $\bm S_k$ which is independent of \(d\).
\end{corollary}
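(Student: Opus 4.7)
}
The plan is to treat this corollary as a \emph{concentration corollary} of Theorem~\ref{thm:dae}: the existence and block-wise structure of the local minimizer is inherited from that theorem under the assumed $(\alpha_2,\beta_2)$-separability, and the only new content is (i) identifying the large-$n_k$ limit of each block $\bm{W}_{\bm{X}_k}\bm{W}_{\bm{X}_k}^\top$ and (ii) turning that identification into a test-loss bound. First I would invoke Theorem~\ref{thm:dae} with $M=K$ clusters corresponding to the MoG components, yielding $\bm{W}_{\mathrm{gen}}$ with blocks specified exactly by the formula \eqref{eqn:weights-main}. Because $\bm{W}_{\bm{X}_k}\bm{W}_{\bm{X}_k}^\top=\bm{U}_k^{(p_k)}\bigl(\bm{I}+n_k\sigma^2(\bm{\Lambda}_k^{(p_k)})^{-1}\bigr)^{-1}\bigl(\bm{I}-n\lambda(\bm{\Lambda}_k^{(p_k)})^{-1}\bigr)(\bm{U}_k^{(p_k)})^{\top}$ (the orthogonal $\bm{O}_k$ cancels), the entire object is a smooth spectral functional of $\tfrac1{n_k}\bm{X}_k\bm{X}_k^\top$, up to the $n_k/n \to \rho_k$ identification.

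Next I would carry out the concentration step. By the sub-Gaussianity of each cluster, matrix Bernstein or a Wishart-type bound gives $\bigl\|\tfrac{1}{n_k}\bm{X}_k\bm{X}_k^\top-\bm{S}_k\bigr\|_{\mathrm{op}} \lesssim \|\bm{S}_k\|_{\mathrm{op}}\sqrt{p_k/n_k}$ with high probability. A Weyl plus Davis--Kahan argument then transfers this bound to the top-$p_k$ eigenvalues $\bm{\Lambda}_k^{(p_k)}/n_k$ and eigenvectors $\bm{U}_k^{(p_k)}$. Substituting into the expression above and using $n\lambda/n_k \to \lambda/\rho_k$, the eigenvalues $n_k\sigma^2/(\lambda_j+n_k\sigma^2)$ and $1-n\lambda/\lambda_j$ evaluated at $\lambda_j=\mathrm{eig}_j(\bm{\Lambda}_k^{(p_k)})$ converge to $\sigma^2/(\mathrm{eig}_j(\bm{S}_k)+\sigma^2)$ and $1-\tfrac{\lambda}{\rho_k\,\mathrm{eig}_j(\bm{S}_k)}$ respectively, which is precisely the spectrum of the rank-$p_k$ truncation of $(\bm{S}_k-\tfrac{\lambda}{\rho_k}\bm{I})(\bm{S}_k+\sigma^2\bm{I})^{-1}$.

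For the test-loss bound I would first use the $\beta<0$ separability of Definition~\ref{def:separable} to argue that for a fresh sample $\bm{x}+\sigma\bm{\epsilon}$ drawn from mode $k$, the inner product of $\bm{x}+\sigma\bm{\epsilon}$ with columns of the other blocks $\bm{W}_{\bm{X}_\ell}$, $\ell\neq k$, is negative with overwhelming probability (quantified by Gaussian tails of order $e^{-c\gamma^2/\sigma^2}$), so the ReLU kills all blocks except the $k$-th one and the DAE acts effectively as the linear map $\bm{W}_{\bm{X}_k}\bm{W}_{\bm{X}_k}^\top$. Conditioning on this event, I would decompose the squared error in the eigenbasis of $\bm{S}_k$ into: (a) a bias contribution on the retained top-$p_k$ directions, which after the algebra simplifies to $\sum_{j\le p_k}\mathrm{eig}_j(\bm{S}_k)\sigma^4/(\mathrm{eig}_j(\bm{S}_k)+\sigma^2)^2$; (b) a truncation contribution on the discarded directions yielding $\sum_{j>p_k}\mathrm{eig}_j(\bm{S}_k)$; and (c) an estimation contribution coming from the deviation of $\bm{W}_{\bm{X}_k}\bm{W}_{\bm{X}_k}^\top$ from its population limit, which is first-order in the spectral perturbation and hence of order $p_k/(\sigma^2 n_k)$ after weighting by $\rho_k$ and using the bound from step two. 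Summing over $k$ with weights $\rho_k$ gives the stated inequality; exponentially small corrections from the ReLU cross-activation event and from the residual $\bm{R}(\sigma,\gamma)$ of Theorem~\ref{thm:dae} can be absorbed into the constants $C_k$.

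The main obstacle I expect is step three's handling of the ReLU gating together with the spectral perturbation. Both effects are individually standard, but they interact: one needs uniform control of the perturbation $\|\bm{W}_{\bm{X}_k}\bm{W}_{\bm{X}_k}^\top-\bm{W}_{\bm{X}_k}^{\mathrm{pop}}(\bm{W}_{\bm{X}_k}^{\mathrm{pop}})^\top\|$ not only in operator norm but weighted against $\bm{x}+\sigma\bm{\epsilon}$ that lives in a noise-inflated ball around $\bm{\mu}_k$, and one must simultaneously ensure the selectivity of the ReLU so that the linearization is valid. Getting the clean $p_k/(\sigma^2 n_k)$ dependence (rather than something worse in $p_k$) requires being careful that the spectral perturbation on the top-$p_k$ subspace is $O(\sqrt{p_k/n_k})$ in Frobenius norm and that only the quadratic form against $\bm{x}+\sigma\bm{\epsilon}$ (which contributes the $1/\sigma^2$ factor through the denoiser inverse) survives in the final bound.
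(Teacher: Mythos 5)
Your proposal follows essentially the same route as the paper's proof of Corollary~\ref{cor:dae-gen}: (1) invoke Theorem~\ref{thm:dae} to decouple the loss into per-cluster regularized LAE problems and read off each block $\bm{W}_{\bm{X}_k}$ from \eqref{eqn:weights-main}; (2) concentrate $\tfrac{1}{n_k}\bm{X}_k\bm{X}_k^\top$ around $\bm{S}_k$ and transfer that to $\widehat{D}_k := \bm{W}_{\bm{X}_k}\bm{W}_{\bm{X}_k}^\top$ via a Davis--Kahan/Lipschitz argument; (3) decompose the per-cluster test risk into a population bias-plus-truncation term $\mathcal{L}_k^{\mathrm{pop}}(p_k) = \sum_{j\le p_k}\mathrm{eig}_j(\bm{S}_k)\sigma^4/(\mathrm{eig}_j(\bm{S}_k)+\sigma^2)^2 + \sum_{j>p_k}\mathrm{eig}_j(\bm{S}_k)$ and an estimation term driven by $\widehat{D}_k - D_k^\star$; (4) sum over $k$ weighted by $\rho_k$. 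Your additional unpacking of the ReLU gating (cross-block pre-activations negative with overwhelming probability, Gaussian tails of order $e^{-c\gamma^2/\sigma^2}$) is something the paper delegates wholesale to the residual $\varepsilon(\delta,\sigma,\gamma)$ of Theorem~\ref{thm:dae}; making it explicit is helpful but does not change the argument.

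One imprecision worth flagging: you describe the estimation contribution as ``first-order in the spectral perturbation,'' but the paper's decomposition bounds it as $\mathrm{Tr}\bigl(\bm{S}_k(\widehat{D}_k - D_k^\star)^2\bigr) \le \|\bm{S}_k\|_{\mathrm{op}}\|\widehat{D}_k-D_k^\star\|_F^2$, i.e., it is \emph{quadratic} in the deviation. That quadratic dependence is exactly what converts the $O(\sqrt{p_k/n_k})$ concentration rate from step two into the stated $O(p_k/n_k)$ rate; a genuinely first-order contribution would yield $O(\sqrt{p_k/n_k})$ instead, which is a slower rate than the corollary claims. Your closing sentence (``only the quadratic form against $\bm{x}+\sigma\bm{\epsilon}$\,\ldots\ survives'') suggests you do recognize the quadratic structure, so this reads as loose language rather than a logical gap. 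If you were to write the proof out, you would need to make the bias-variance expansion explicit and argue that the cross term $\mathbb{E}\bigl[\langle D_k^\star\bm{y}-\bm{x}',\ (\widehat{D}_k-D_k^\star)\bm{y}\rangle\bigr]$ is subdominant, so that only the quadratic estimation term $\mathrm{Tr}\bigl(\bm{S}_k(\widehat{D}_k-D_k^\star)^2\bigr)$ contributes the $C_k p_k/(\sigma^2 n_k)$ remainder.
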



\noindent \textbf{Remarks.} The proof is deferred to~\Cref{app:cor-dae-gen}, and our result implies the following:
\begin{itemize}[leftmargin=*]
    \item \textbf{Sampling yields novel in-distribution samples (generalization).} 
    When the model is underparameterized, our results show that the local optimal solution learned from the training data achieves bounded population loss on the MoG distribution by effectively acting as an optimal local denoiser for each mode. 
    Consequently, sampling~\autocite{li2024sharp, litowards} from the trained DAE produces in-distribution images that are distinct from the training samples, as illustrated in \Cref{fig:sampling}.  
    
    Moreover, the population loss depends on the spectrum of $\bm{S}_k$ (equivalently, $\bm{\Sigma}_k$). 
    When $\bm{\Sigma}_k$ has an approximately low-rank structure~\autocite{de2022convergence, cole2024score,huang2024denoising}, the loss is small and decays rapidly with the number of samples per mode $n_k$. 
    This provides a principled explanation for the reproducibility of diffusion models across disjoint training subsets~\autocite{zhang2024emergence,kadkhodaie2023generalization}.
    \item \textbf{Balanced representations as a signature of generalization.}
    Unlike the spiky representations in \Cref{cor:dae-mem}, the underparameterized solution spreads the energy of $\bm{x}_i+\sigma\bm{\epsilon}$, with $\bm{x}_i \sim \mathcal{N}(\bm{\mu}_k,\bm{\Sigma}_k)$, across the $p_k$ coordinates of the active block (see \Cref{fig: MLP vis}). 
    The representation behaves like a \textbf{low-dimensional projection for a Gaussian mode~\autocite{tipping1999probabilistic}}:
    \[
    \bm{h}_\mathrm{gen}(\bm{x}_i+\sigma\bm{\epsilon})=[\bm{W}^{\top}_{\mathrm{gen}}(\bm{x}_i+\sigma\bm{\epsilon})]_+
    \approx (0,\ldots,0,\,\bm{W}_{\bm{X}_k,1}^{\top}(\bm{x}_i+\sigma\bm{\epsilon}),\,\ldots,\,\bm{W}_{\bm{X}_k,p_k}^{\top}(\bm{x}_i+\sigma\bm{\epsilon}),\,0,\ldots,0).
    \]
    Intuitively, generalized samples activate multiple neurons rather than a single spiky unit; the resulting projections encode information about the underlying distribution, helping to explain empirical findings on semantic directions~\autocite{kwon2022diffusion} that are useful for editing, which we further explore in \Cref{subsec:rep_steer}.

\end{itemize}

\begin{figure}[t]
    \centering
    \begin{subfigure}{0.45\linewidth}
        \centering
        \includegraphics[width=\linewidth]{./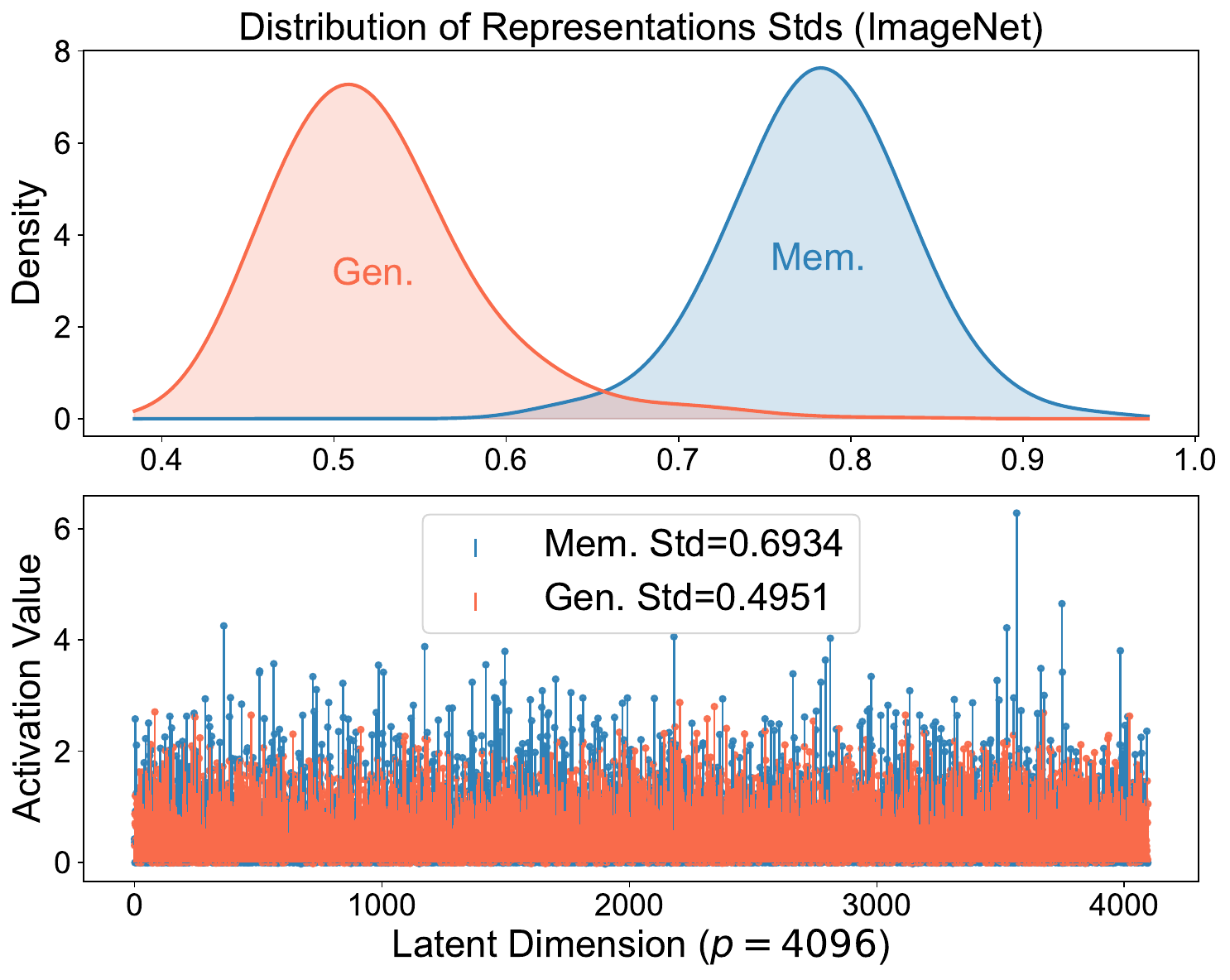}
    \end{subfigure}
    \hfill
    \begin{subfigure}{0.45\linewidth}
        \centering
        \includegraphics[width=\linewidth]{./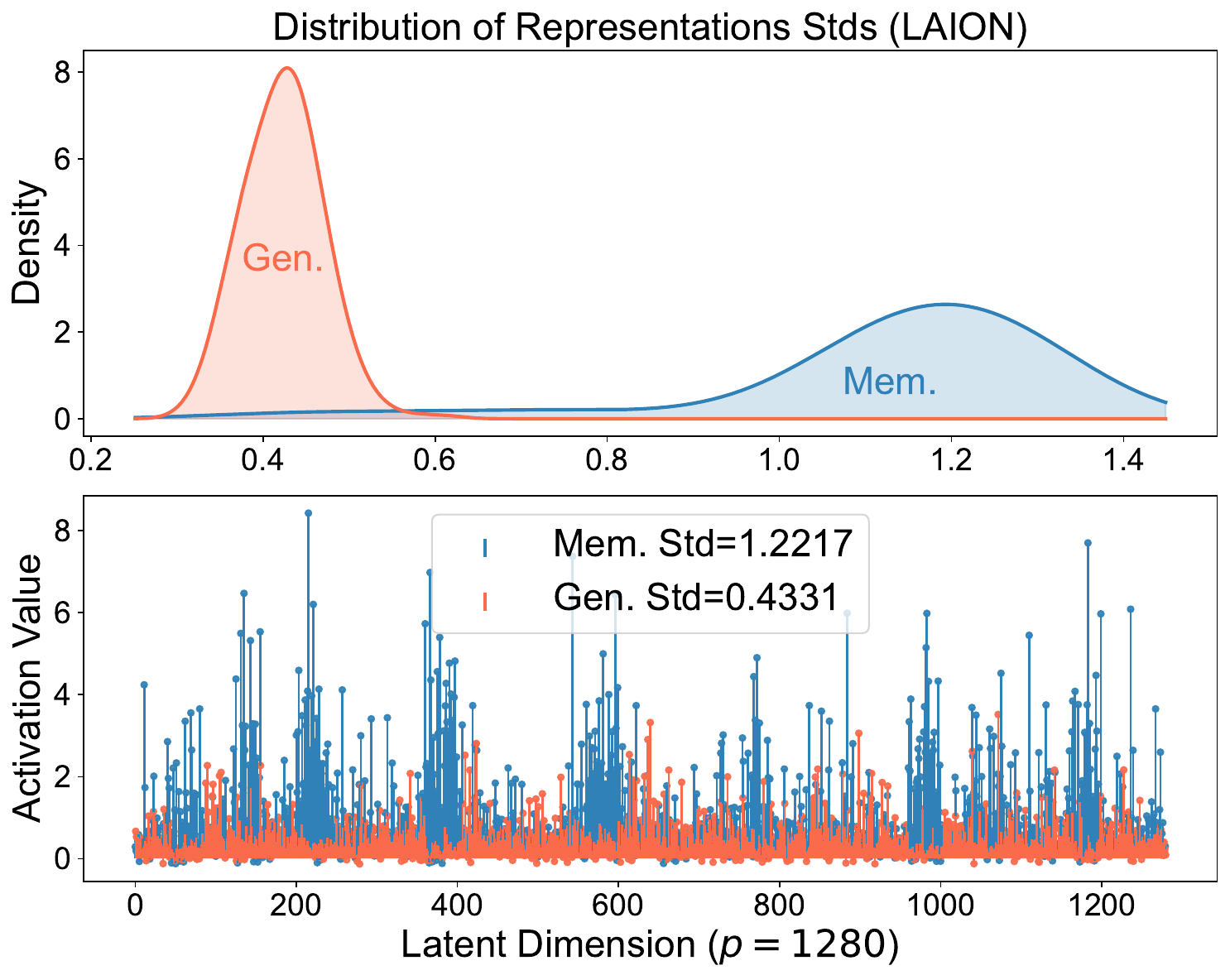}
    \end{subfigure}
    \caption{\textbf{Mem./Gen. representations in real-world models.} Memorized samples have spiky representations while generalized samples have more balanced ones. The layout follows \Cref{fig: MLP vis} and the results are consistent with it. Representations are extracted at timestep $t=50$ ($\sigma_t\approx 0.17$). \emph{Left}: DiT-L/4 pretrained on an ImageNet subset. \emph{Right}: Stable Diffusion v1.4 pretrained on LAION \autocite{schuhmann2022laion}. Results for EDM pretrained on CIFAR10 and additional details are in \Cref{app:mem-detect}.}
    \label{fig: UNet Vis}
\end{figure}

Concluding \Cref{cor:dae-mem} and \Cref{cor:dae-gen}, we see the learned structure remains stable across timesteps, with $\sigma$ primarily acting as a regularization parameter. Varying $\sigma$ only slightly perturbs the solution, which helps explain the empirical success of diffusion models that employ a single neural network for denoising across multiple noise levels~\autocite{sun2025noise}.

\subsection{Case 3: Hybrid of Memorization and Generalization with Imbalanced Data}\label{subsec: partial mem-gen}

Large-scale diffusion datasets often contain duplicates due to imperfect curation or heterogeneous aggregation~\autocite{carlini2023extracting}; such samples are more easily memorized~\autocite{somepalli2023understanding} (See~\Cref{app:duplication} for more discussions). We model this by allowing duplicated (rank-1) clusters alongside well-sampled, nondegenerate clusters, so the DAE can admit local minimizers that mix memorization and generalization blocks:
\begin{corollary}[DAE memorizes duplicates and generalizes on well-sampled modes]
\label{cor:dae-mem-gen}
Let $X=[\bm{X}_1,\dots,\bm{X}_K]$ satisfy Definition~\ref{def:separable}, where for $\ell=1,\dots,m$,
$\bm{X}_\ell=(\bm{x}_\ell,\dots,\bm{x}_\ell)$ is rank~1, and
$\bm{X}_{m+1},\dots,\bm{X}_K$ contain distinct empirical samples from the remaining Gaussian modes.
Suppose a ReLU DAE is trained with weight decay $\lambda\ge0$ and input noise $\sigma>0$.
Then there exists a local minimizer of the form
\[
\bm{W}_2^\star=\bm{W}_1^\star=
\begin{pmatrix}
r_1\bm{x}_1 & \cdots & r_m\bm{x}_m & \bm{W}_{\bm{X}_{m+1}} & \cdots & \bm{W}_{\bm{X}_K}
\end{pmatrix},
\]
where the first $m$ columns memorize the duplicated clusters (as in Cor.~\ref{cor:dae-mem}), and the remaining blocks
$\bm{W}_{\bm{X}_k}$ implement generalization on the nondegenerate clusters (as in Cor.~\ref{cor:dae-gen}).
\end{corollary}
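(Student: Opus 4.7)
The plan is to obtain Corollary~\ref{cor:dae-mem-gen} as a direct specialization of Theorem~\ref{thm:dae}, by choosing block widths $p_k$ that reproduce the memorization form on the duplicated clusters and the generalization form on the nondegenerate clusters. Since the hypothesis explicitly states that the partition $\bm{X}=[\bm{X}_1,\dots,\bm{X}_K]$ is $(\alpha,\beta)$-separable with $\beta<0$, and the condition $n\lambda<\min_k\lambda_{\min}(\bm{\Lambda}_k^{(p_k)})$ can be ensured by taking $\lambda$ sufficiently small, Theorem~\ref{thm:dae} already supplies a local minimizer $\bm{W}_2^\star=\bm{W}_1^\star=[\bm{W}_{\bm{X}_1},\dots,\bm{W}_{\bm{X}_K}]+\bm{R}(\sigma,\gamma)$, with each block $\bm{W}_{\bm{X}_k}$ built from the Gram matrix of $\bm{X}_k$ via \eqref{eqn:weights-main}. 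It remains to identify, block by block, the two specialized forms.

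For the duplicated clusters $\ell\in\{1,\dots,m\}$, I would first note that $\bm{X}_\ell\bm{X}_\ell^\top=n_\ell\,\bm{x}_\ell\bm{x}_\ell^\top$ is exactly rank one, so the only admissible choice in \eqref{eqn:weights-main} is $p_\ell=1$ with top eigenvector $\bm{u}_\ell=\bm{x}_\ell/\|\bm{x}_\ell\|_2$ and eigenvalue $n_\ell\|\bm{x}_\ell\|_2^2$. Substituting these into \eqref{eqn:weights-main} collapses the expression to a scalar multiple of $\bm{x}_\ell$, yielding $\bm{W}_{\bm{X}_\ell}=r_\ell\,\bm{x}_\ell$ with $r_\ell=\sqrt{(n_\ell\|\bm{x}_\ell\|_2^2-n\lambda)/(n_\ell\|\bm{x}_\ell\|_2^2(\|\bm{x}_\ell\|_2^2+\sigma^2))}$. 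This matches the single-column memorization structure of Corollary~\ref{cor:dae-mem} (and reduces to exactly the formula there when $n_\ell=1$), so the first $m$ blocks act as the memorization blocks in the claimed decomposition.

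For the nondegenerate clusters $k\in\{m+1,\dots,K\}$, I would invoke the concentration argument already developed for Corollary~\ref{cor:dae-gen}. By assumption these clusters contain distinct i.i.d.\ samples from the remaining Gaussian modes, so the empirical Gram matrix satisfies $\tfrac{1}{n_k}\bm{X}_k\bm{X}_k^\top\to\bm{S}_k$ in the sense of \eqref{eqn:approx}. Plugging this into \eqref{eqn:weights-main} and letting $\lambda\to0$ shows that $\bm{W}_{\bm{X}_k}\bm{W}_{\bm{X}_k}^\top$ converges to the rank-$p_k$ truncation of $(\bm{S}_k-\tfrac{\lambda}{\rho_k}\bm{I})(\bm{S}_k+\sigma^2\bm{I})^{-1}$, exactly as in Corollary~\ref{cor:dae-gen}. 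Concatenating the memorization blocks from step~2 with the generalization blocks from step~3 gives the claimed hybrid form for $\bm{W}_2^\star=\bm{W}_1^\star$.

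The main obstacle I anticipate is mostly bookkeeping rather than a new estimate: I need to check that the single weight-decay parameter $\lambda$ and noise level $\sigma$ simultaneously satisfy the nondegeneracy condition $n\lambda<\min_k\lambda_{\min}(\bm{\Lambda}_k^{(p_k)})$ across both regimes, which requires $n\lambda<\min_{\ell\le m}n_\ell\|\bm{x}_\ell\|_2^2$ for the memorization blocks and $n\lambda<\min_{k>m}\lambda_{\min}(\bm{\Lambda}_k^{(p_k)})$ for the generalization blocks; both are assured for small enough $\lambda$. A subtler point is that the residual $\bm{R}(\sigma,\gamma)$ from Theorem~\ref{thm:dae} depends on the global separation margin $\gamma$, which in turn depends on the duplicated points $\bm{x}_\ell$ and the means $\bar{\bm{x}}_k$ of the nondegenerate clusters; since the $(\alpha,\beta)$-separability is imposed on the entire partition, the same exponential bound $\|\bm{R}\|_F^2\le C e^{-c\gamma^2/\sigma^2}$ from Theorem~\ref{thm:dae} transfers verbatim, so no extra work is needed.
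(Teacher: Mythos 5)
Your proposal is correct and follows essentially the same route as the paper: invoke Theorem~\ref{thm:dae} to obtain the block-decoupled local minimizer, then identify the rank-one blocks with the memorization form of Corollary~\ref{cor:dae-mem} and the nondegenerate blocks with the generalization form of Corollary~\ref{cor:dae-gen}. If anything you are slightly more careful than the paper's proof, which cites Corollary~\ref{cor:dae-mem} directly even though a duplicated cluster has $n_\ell>1$ copies; your explicit substitution into \eqref{eqn:weights-main} yields the correct scaling $r_\ell=\sqrt{(n_\ell\|\bm{x}_\ell\|_2^2-n\lambda)/(n_\ell\|\bm{x}_\ell\|_2^2(\|\bm{x}_\ell\|_2^2+\sigma^2))}$, which reduces to the Corollary~\ref{cor:dae-mem} formula only when $n_\ell=1$.
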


This corollary interpolates Cases~1 and~2: duplicated training samples are memorized, while the model still generalized for the other modes. We verify this in~\Cref{fig:cor-mem-gen} and defer proof to~\Cref{app:cor-dae-mem-gen}.

\begin{figure}[t]
    \centering
    \includegraphics[width=0.7\linewidth]{./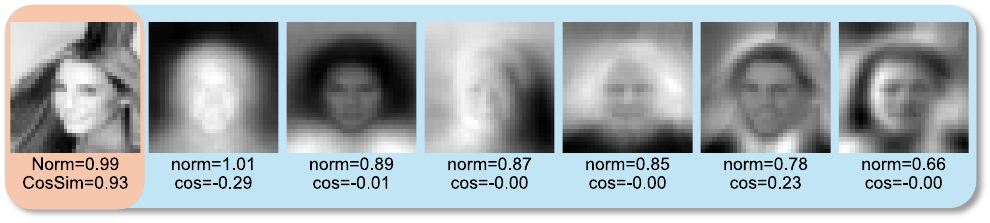}
    \caption{\textbf{Verification of~\Cref{cor:dae-mem-gen}}. The model learns both memorizing and generalizing columns when data duplication is present.}
    \label{fig:cor-mem-gen}
\end{figure}


\section{Implications for Memorization Detection and Content Steering}\label{sec:exp}
In this section, we demonstrate that our theoretical insights from \Cref{sec:main} yield profound practical implications for model privacy and interpretability. Leveraging the identified dual relationship between representation structures and generalization ability, we present the following two applications:
\begin{itemize}[leftmargin=*]
    \item \textbf{Representation-based memorization detection (\Cref{subsec:mem_detect}).} Leveraging the spikiness of data representations, we introduce a prompt-free classification method that accurately distinguishes between generalized and memorized samples produced by diffusion models. We demonstrate that our approach achieves strong performance with high efficiency and extensibility.
    \item \textbf{Representation-space steering for image editing (\Cref{subsec:rep_steer}).} We introduce a training-free editing method that steers generated samples within the representation space. Crucially, we find that generalized samples are substantially more steerable, whereas memorized samples exhibit minimal editing effects due to the spikiness of their representations.
\end{itemize}


\subsection{Representation-Based Memorization Detection}\label{subsec:mem_detect}

Building on our theoretical insights, we investigate whether memorization can be \textbf{detected} directly from internal representations. Prior work has largely focused on how certain prompts trigger memorization and often relies on those for detection~\autocite{wen2024detecting,jeonunderstanding,ren2024unveiling}. Representative approaches include: (\emph{i}) \textit{Density-based:} detecting samples that are generated disproportionately frequently under a prompt~\autocite{carlini2023extracting}; and (\emph{ii}) \textit{Norm-based:} comparing conditional vs. unconditional scores~\autocite{wen2024detecting} and (\emph{iii}) \textit{Attention-based:} locating anomaly in the cross-attention induced by memorized prompts~\autocite{hintersdorf2024finding, chen2024exploringBE}. A notable exception is (\emph{iv}) a \textit{landscape-based} method of \autocite{ross2024geometric}, which evaluates memorization using local score-function geometry around a generated sample. Their method makes detection prompt-free, but is still based on output space.

In contrast, we introduce the first detection method that is both \textbf{representation-based} and \textbf{prompt-free}. The core intuition is that \emph{spiky representations} arise when a sample has been internally stored by the model, whereas generalized samples yield balanced activations. Therefore, our analysis yields a simple yet effective diagnostic: the standard deviation of intermediate features serves as a proxy for spikiness. High variance indicates memorization; low variance corresponds to generalization. We benchmark this detector against existing baselines on pre-trained diffusion models. As reported in \Cref{tab:detection}, our method achieves the highest accuracy and efficiency, thereby demonstrating the strong informativeness of representation-space statistics. Pseudocode and further implementation details are provided in \Cref{app:mem-detect}.

\begin{table}[t]
\centering
\footnotesize
\setlength{\tabcolsep}{3pt}
\resizebox{\linewidth}{!}{%
\begin{tabular}{@{}l c ccc ccc ccc@{}}
\toprule
Method & Prompt Free? & \multicolumn{3}{c}{LAION} & \multicolumn{3}{c}{ImageNet} & \multicolumn{3}{c}{CIFAR10} \\
\cmidrule(lr){3-5} \cmidrule(lr){6-8} \cmidrule(lr){9-11}
& & AUC $\uparrow$ & TPR $\uparrow$ & Time $\downarrow$ & AUC $\uparrow$ & TPR $\uparrow$ & Time $\downarrow$ & AUC $\uparrow$ & TPR $\uparrow$ & Time $\downarrow$ \\
\midrule
\autocite{carlini2023extracting} & \ding{55} & 0.498 & 0.020 & 3.724 & \multicolumn{3}{c}{N/A} & \multicolumn{3}{c}{N/A} \\
\autocite{wen2024detecting}      & \ding{55} & 0.986 & \textbf{0.961} & 0.134 & \multicolumn{3}{c}{N/A} & \multicolumn{3}{c}{N/A} \\
\autocite{hintersdorf2024finding}  & \ding{55} & 0.957 & 0.500 & \textbf{0.009} & \multicolumn{3}{c}{N/A} & \multicolumn{3}{c}{N/A} \\
\autocite{ross2024geometric}     & $\checkmark$ & 0.956 & 0.915 & 0.545 & 0.971 & 0.528 & 0.031 & 0.713 & 0.013 & 0.071 \\
Ours                          & $\checkmark$ & \textbf{0.987} & \textbf{0.961} & 0.067 & \textbf{0.995} & \textbf{0.912} & \textbf{0.015} & \textbf{0.998} & \textbf{0.984} & \textbf{0.020} \\
\bottomrule
\end{tabular}
}
\caption{\textbf{Memorization detection results.} We report AUROC, true positive rate (TPR) at 1\% false positive rate, and runtime (s). 
Evaluated on three dataset-model pairs: LAION-SD1.4, ImageNet-DiT, and CIFAR10-EDM. 
Sample sizes: 500 memorized and 500 generalized for LAION and ImageNet; 100 each for CIFAR10. 
($\uparrow$ higher is better; $\downarrow$ lower is better). See \Cref{app:mem-detect} for details.}
\label{tab:detection}
\end{table}

\subsection{Representation-Space Steering for Interpretable Image Editing}\label{subsec:rep_steer}

As shown in \Cref{cor:dae-gen}, representations of generalized samples are governed by data statistics, capturing local semantics and acting as low-dimensional projections of Gaussian modes. This insight implies an interpretable steering mechanism: we can inject information about a target mode (e.g., a specific concept or style) by adding its average representation, thereby smoothly guiding generation toward it. Specifically, our proposed steering function is defined as:
\begin{align}\label{eq:steer}
    \bm{f}_{\bm{\theta}}^{\mathrm{steered}}(\bm{x}_t, t, c)
    = \bm{g}_{\bm{\theta}}\!\left(\bm{h}_{\bm{\theta}}(\bm{x}_t, t, c) + a \bm{v}\right), \quad \text{where} \quad \bm{v} = \frac{1}{|\mathcal S|}\sum_{\tilde{\bm{x}}\in \mathcal S}\bm{h}_{\bm{\theta}}(\tilde{\bm{x}}_{\tilde{t}}, \tilde{t}, \bar{c}).
\end{align}
Here, $\mathcal S$ denotes samples from the target concept/style, $\bm{h}_{\bm{\theta}}$ and $\bm{g}_{\bm{\theta}}$ represent the encoder and decoder components of the network, respectively, and $a\in \mathbb R$ controls the steering strength, $c$ is the text prompt and $\bar{c}$ denotes the desired concept/style prompt.

We evaluate this method on Stable Diffusion v1.4 using both memorized and generalized samples (\Cref{fig:steer_mem_gen}). As predicted by our theory, generalized samples exhibit smooth and monotonic edits as $a$ varies, indicative of a well-behaved local geometry in their representation space. In contrast, memorized samples display brittle, threshold-like responses, making fine-grained control difficult because of their spiky representations.

We note that we do not intend to compete with existing outstanding steering methods, as several such approaches have already demonstrated impressive empirical success~\autocite{hertz2023prompt,zhang2023adding,gandikota2024concept,kadkhodaie2024feature,chen2024exploring}. Rather, our focus is on showing how steering reveals the dual relationship between representation structure and generation behavior. In particular, when the model generalizes, its representations form compositional and interpretable spaces, enabling continuous and controllable edits.


\begin{figure*}[t]
\begin{center}
    \begin{subfigure}{0.48\textwidth}
    \includegraphics[width = 0.98\textwidth]{./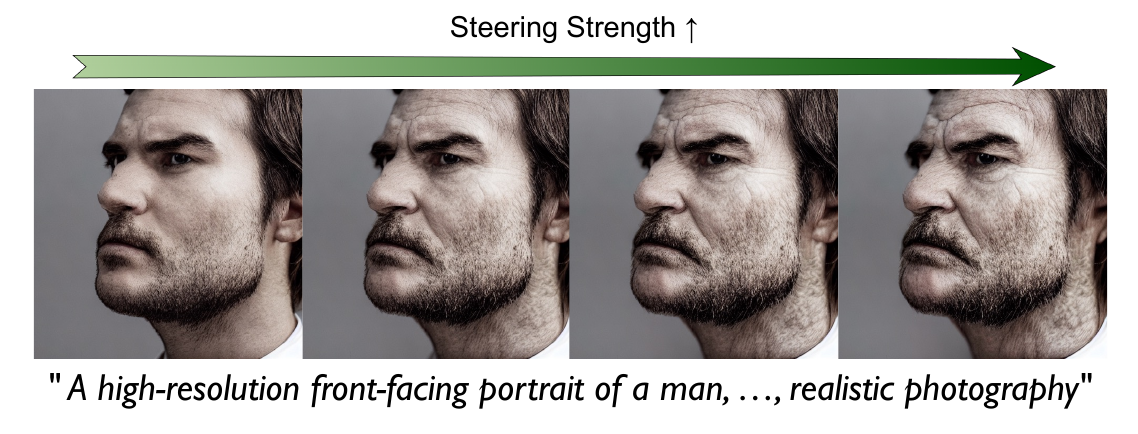}
    \includegraphics[width = 0.98\textwidth]{./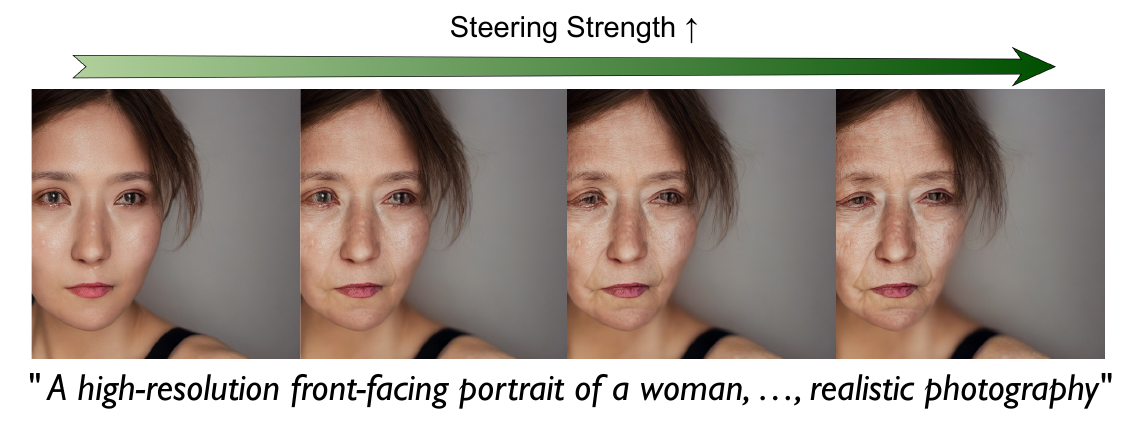}
    \vspace{-0.05in}
    \caption{$+$Old (\textbf{Gen.})} 
    \end{subfigure} \quad 
    \begin{subfigure}{0.48\textwidth}
    \includegraphics[width = 0.98\textwidth]{./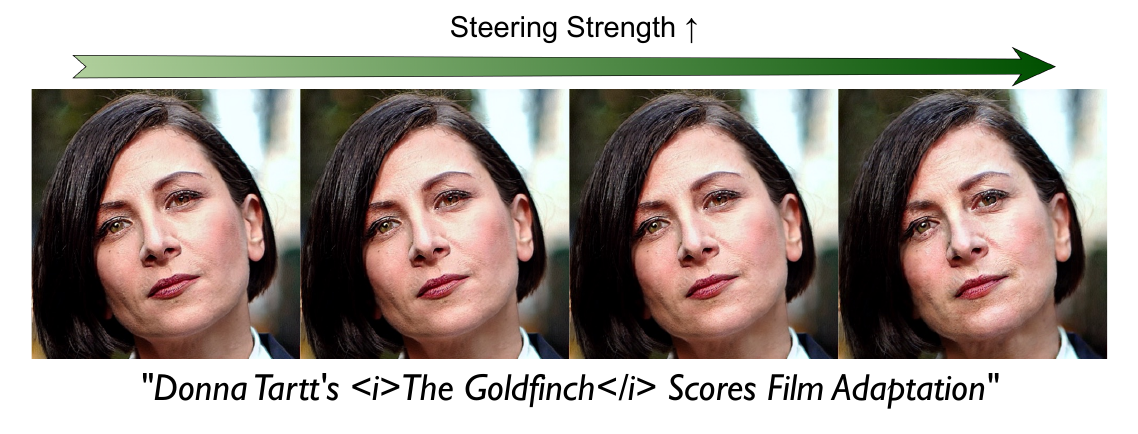}
    \includegraphics[width = 0.98\textwidth]{./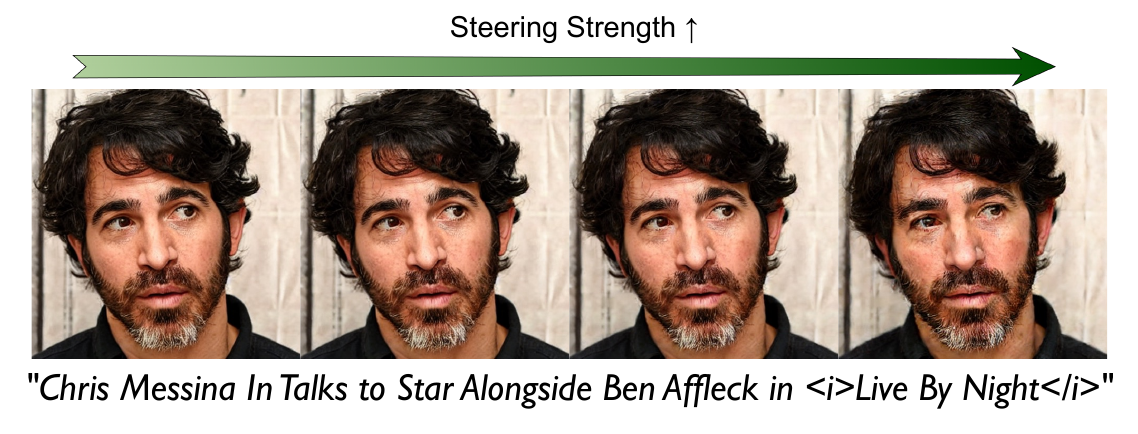}
    \vspace{-0.05in}
    \caption{$+$Old (\textbf{Mem.})} 
    \end{subfigure}
     \vspace{0.08in}
    \begin{subfigure}{0.48\textwidth}
    \includegraphics[width = 0.98\textwidth]{./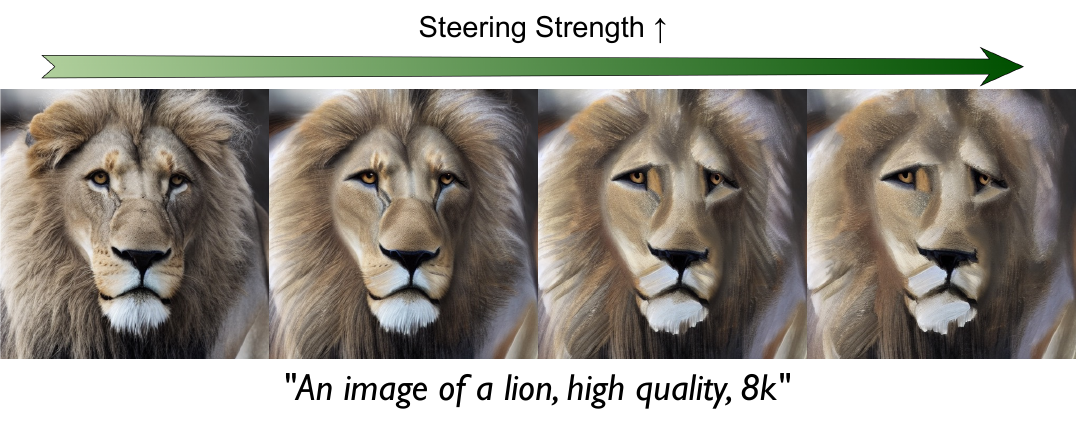}
    \includegraphics[width = 0.98\textwidth]{./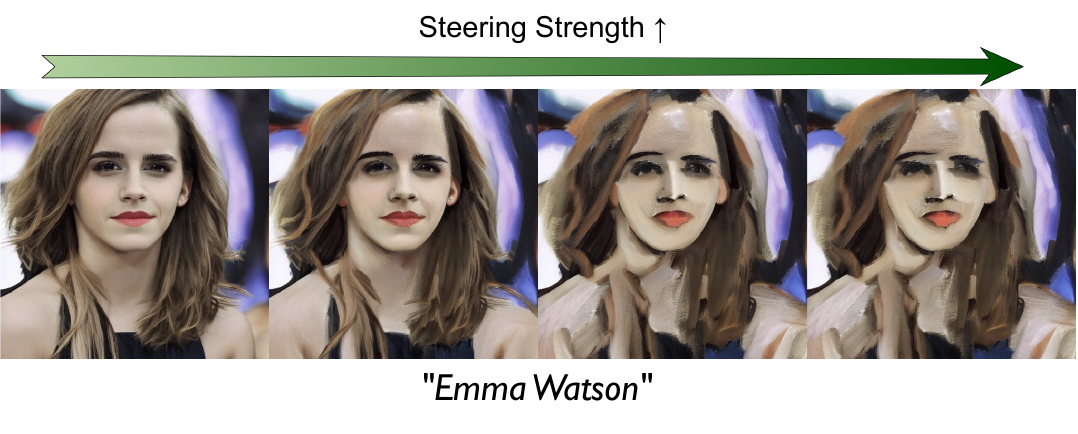}
    \vspace{-0.05in}
    \caption{$+$Oil-painting Style (\textbf{Gen.})} 
    \end{subfigure} \quad 
    \begin{subfigure}{0.48\textwidth}
    \includegraphics[width = 0.98\textwidth]{./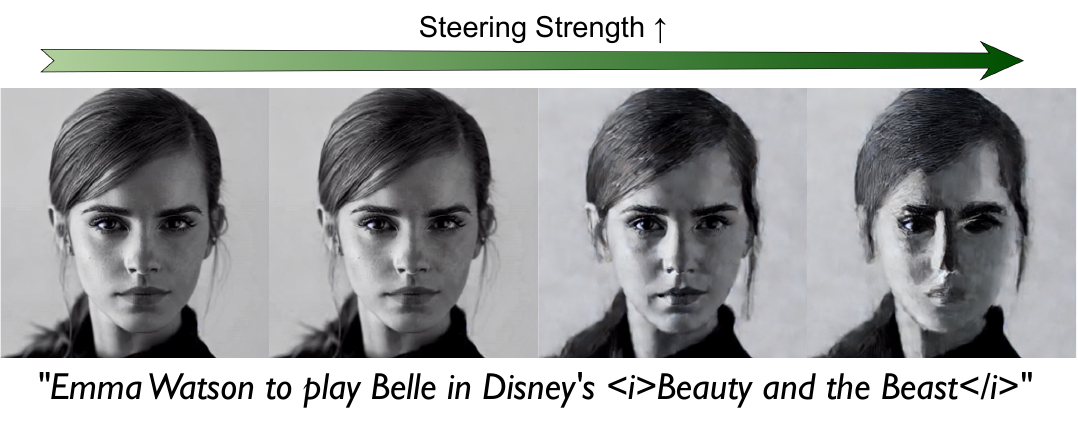}
    \includegraphics[width = 0.98\textwidth]{./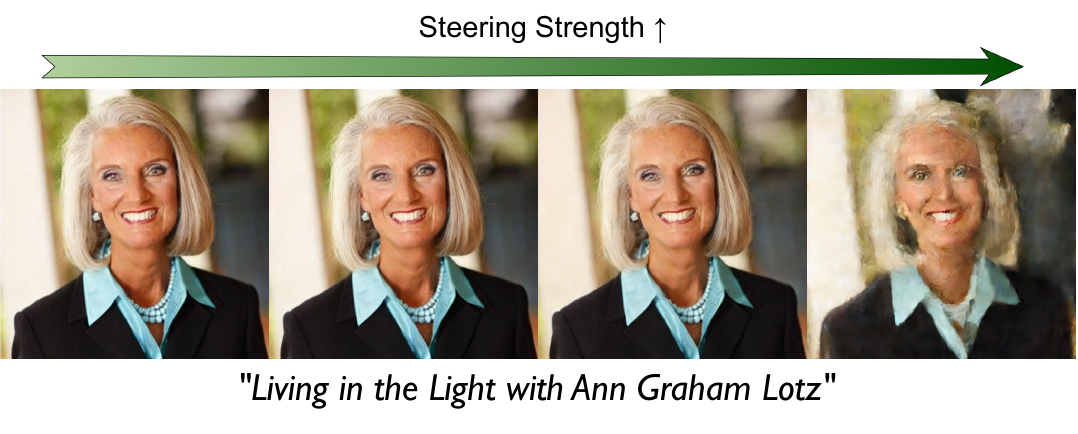}
    \vspace{-0.05in}
    \caption{$+$Oil-painting Style (\textbf{Mem.})} 
    \end{subfigure}
    \end{center}
\caption{\textbf{Image editing via representation steering.} We perform image editing on Stable Diffusion v1.4 using \eqref{eq:steer}. Generalized samples exhibit smooth and progressive style transfer as the editing strength increases, whereas memorized samples display brittle and threshold-like transfer effects. }
\label{fig:steer_mem_gen}
\end{figure*}

\section{Related Works}\label{sec:related works}
\subsection{Analysis of Learning Diffusion Models with Specific Model Parameterizations}
There has been a large body of work analyzing the learning of diffusion models~\autocite{wang2024evaluating, chen2023score, liang2025low, yang2024few}. Recently, more attention has turned to when and how they overfit or generalize: \autocite{li2023generalization, bonnaire2025diffusion} use random-feature assumptions, while \autocite{wang2024diffusion, buchanan2025edgememorizationdiffusionmodels} studied empirical denoisers with learnable attractors; \autocite{wu2025taking, chen2025interpolation, ye2025provable} investigated smoothing effects induced by learning rates and weight decay that promote generalization. These works are theoretically rigorous but often lack real-world validation.

\subsection{Memorization and Generalization with Analytical DMs}
\paragraph{Constrained/regularized models.}
Recent works characterize how architectural or inductive biases can push empirical scores toward more generalizable solutions. For instance, \autocite{scarvelis2023closed, lukoianov2025locality} constructed closed-form diffusion models from data; \autocite{niedoba2024nearest, niedoba2024towards, kamb2025an, wangseeds} imposed locality or translation-equivariance constraints to mimic U-Net behavior; and \autocite{kadkhodaie2023generalization, an2024inductive, floros2025anisotropy} analyzed architectural biases of CNNs and DiTs. \autocite{baptista2025memorization} empirically evaluated the impact of various regularization schemes.

\paragraph{Associative Memory (AM) models.}
\autocite{radhakrishnan2020overparameterized, ambrogioni2024search, pham2025memorization} model imperfect training and sampling jointly as an AM recall process, viewing novel image generation as new attraction basins and memorization as perfect recalls~\autocite{biroli2024dynamical, lyu2025resolving}. However, this perspective can understate the role of learned neural networks in enabling generalization.


\subsection{Studies on Representation Learning of Diffusion Models}
Concurrent work studies co-emerging representation learning~\autocite{kwon2022diffusion, hanfeature, yang2023diffusion} with distribution learning in diffusion models. As recent works~\autocite{chen2024deconstructing, xiang2025ddae++} re-emphasize that the diffusion objective is fundamentally a self-supervised autoencoder loss~\autocite{vincent2010stacked, vincent2011connection, bengio2013generalized}, which induces encoder-decoder behavior~\autocite{chen2024deconstructing} and the model autonomously learns informative features for downstream tasks~\autocite{baranchuk2021label, xiang2023denoising, zhao2023unleashing}. Moreover, supervising the representations can accelerate training~\autocite{yu2024representation, wang2025learning, singh2025matters}, and different representation behaviors correlate with different degrees of overfitting~\autocite{li2025understanding}.

\section{Discussion}
In summary, our study establishes that the representation space of diffusion models is not a secondary artifact of training but a critical factor in how these models operate. Its structure provides a principled separation between memorization and generalization: spiky, sample-specific codes signal memorization, while balanced, low-dimensional representations often imply strong generalization. This perspective allows us not only to detect memorization directly from internal model representations but also to leverage representations for practical tasks, such as controllable editing via steering. While prior works have used intermediate activations for downstream applications, our framework highlights their important role in shaping diffusion behavior itself. By making these structures explicit, we bridge the theoretical findings on simplified models with the empirical properties of real-world deep nonlinear models, offering a unified view that connects perception and generation and opens pathways toward more interpretable and trustworthy generative models.

\begin{tcolorbox}[title=Our Final Thoughts, breakable]
Diffusion models generalize mainly because, under the self-supervised denoising objective, neural networks are driven to learn and exploit the underlying (low-dimensional) structures of the data distribution. This capability is reflected internally through the emergence of semantic low-dimensional representation spaces (or auto-encoding~\autocite{bengio2013generalized}): the network effectively processes/projects noisy inputs with respect to learned structures, which underlie its compressing and denoising behavior~\autocite{li2025back,kadkhodaie2023generalization,ulyanov2018deep}. In this sense, learning a good (balanced and semantic) representation is a useful indicator of model generalizability.

\hfill

For our theoretical analysis, we focused on a simplified setting designed for analytical tractability, which nonetheless yields foundational intuition into how real-world models function. Specifically, by examining a two-layer ReLU DAE trained on data drawn from a separable MoG distribution: a setting where denoising, score learning, and representation learning are all well defined. We demonstrate that the model learns to map inputs from the same Gaussian mode to the same ReLU mask (i.e., the same subset of active neurons). This selectivity~\autocite{balestriero2025geometry, song2025selective} leads a simple, interpretable form of representation learning, and we view it as a fundamental reason why neural networks effectively adapt to structured data distributions.

\end{tcolorbox}

\section*{Acknowledgments}
We acknowledge funding support from NSF CAREER CCF-2143904, NSF CCF-2212066, NSF CCF-2212326, NSF IIS 2402950, ONR N00014-25-1-2339, DARPA HR0011-25-2-0042, and Google Research Scholar Award. We also thank Shuo Xie (TTIC; for fruitful discussions on optimization bias), SooMin Kwon, Huijie Zhang, Alec Xu, Prof. Laura Balzano (UMich; for help with problem formulation), Prof. Zhihui Zhu (OSU; for guidance on presentations), Zaicun Li (UMD; for mathematical insights), and Zhenyun Zhu (UMich; for scientific discussions).

\printbibliography


\newpage
\appendix

\section{Additional Experiments}
\subsection{\texorpdfstring{Further Verification of \Cref{thm:dae}}{Further Verification of Theorem \ref*{thm:dae}}}\label{app:thm verification}
\paragraph{Verification with MoG data}
Under a Mixture of Gaussians (MoG) setting, we directly verify \Cref{thm:dae} since the separability assumptions can be enforced by construction.
We use a two-mode MoG in a 1000-dimensional space with symmetric means $\bm{\mu}_1=-\bm{\mu}_2=5\bm{e}_1$, where $\bm{e}_1=(1,0,\ldots,0)$, and covariance matrices $\bm{\Sigma}_1,\bm{\Sigma}_2$ each having exponentially decaying spectra. We sample 5{,}000 points from each mode, yielding two separated clusters. Training a ReLU DAE with $p=50$ hidden units and $\sigma=0.2$, we find that the model effectively learns a rank-25 approximation of the Wiener filter for each cluster, as defined in \Cref{thm:dae}, as shown in \Cref{fig:app-mog_thm_verify}:
\begin{figure}[t]
    \centering
    \includegraphics[width=0.9\linewidth]{./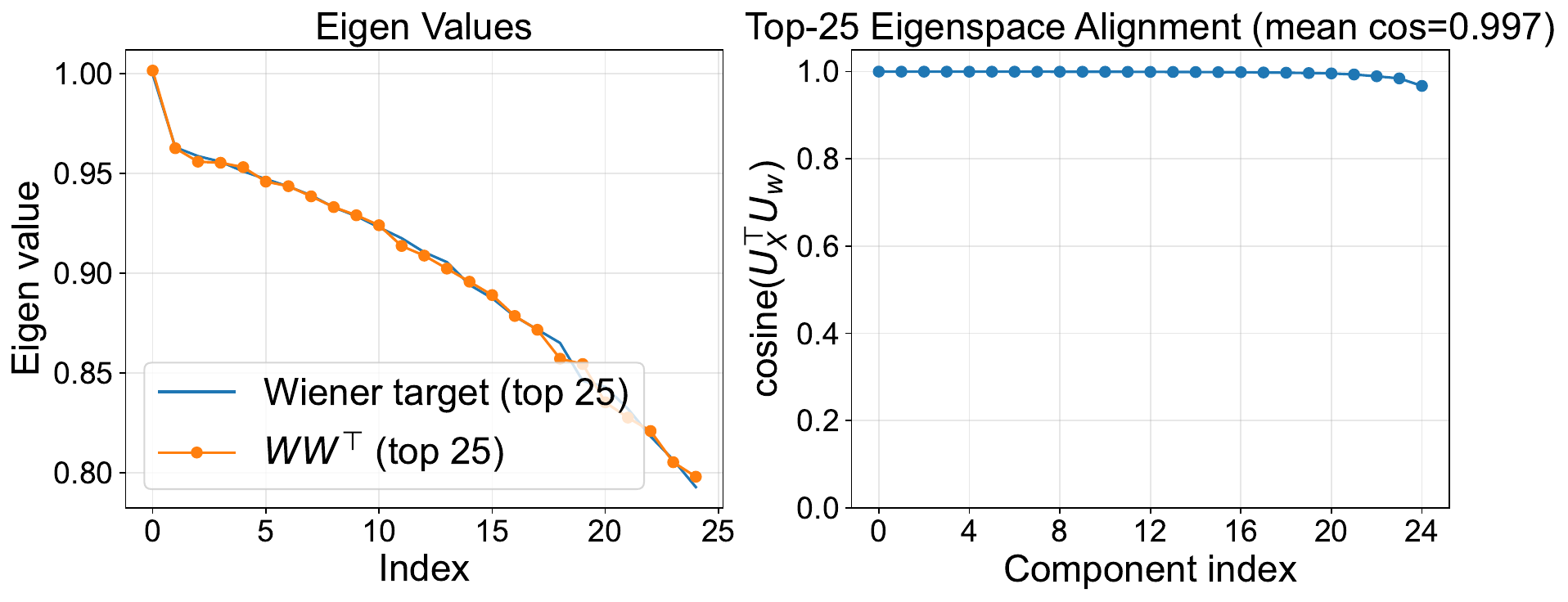}
    \caption{Comparison between the learned ReLU DAE and the constructed solution from \Cref{thm:dae} under the MoG setting. They agree in both eigenvalues and eigenvectors.}
    \label{fig:app-mog_thm_verify}
\end{figure}

\paragraph{Robustness to large noise levels}
We show here that the vanishing remainder in \Cref{thm:dae} is negligible even for large $\sigma$s. For instance, we train the ReLU DAE under $\sigma=0.2, 1, 5$ on CelebA and we find the model still learns the constructed solution as shown in \Cref{fig:app-varying noise level}.
\begin{figure}[t]
    \centering
    \includegraphics[width=0.8\linewidth]{./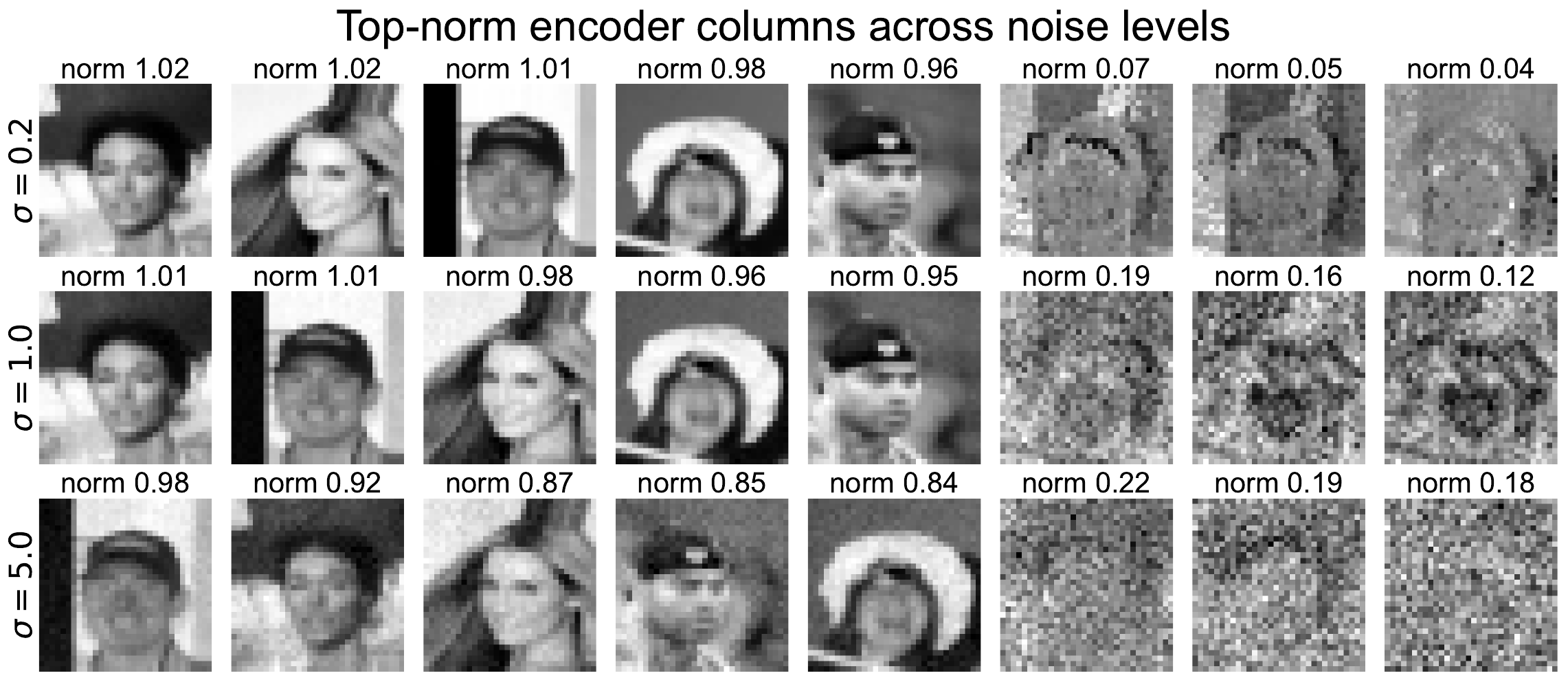}
    \caption{training with larger $\sigma$ will give us less perfect memorization, but the trend holds}
    \label{fig:app-varying noise level}
\end{figure}

\paragraph{Robustness beyond separability}
When the separability assumption is relaxed (\(\beta>0\), so training images overlap), the memorized ReLU DAE still learns a processed version of the solution in \Cref{thm:dae}. Empirically, it recovers a denoised/processed data matrix, or approximately an orthonormal basis for the data span; see \Cref{fig:app-overlap}.

\begin{figure}[t]
    \centering
    \includegraphics[width=0.95\linewidth]{./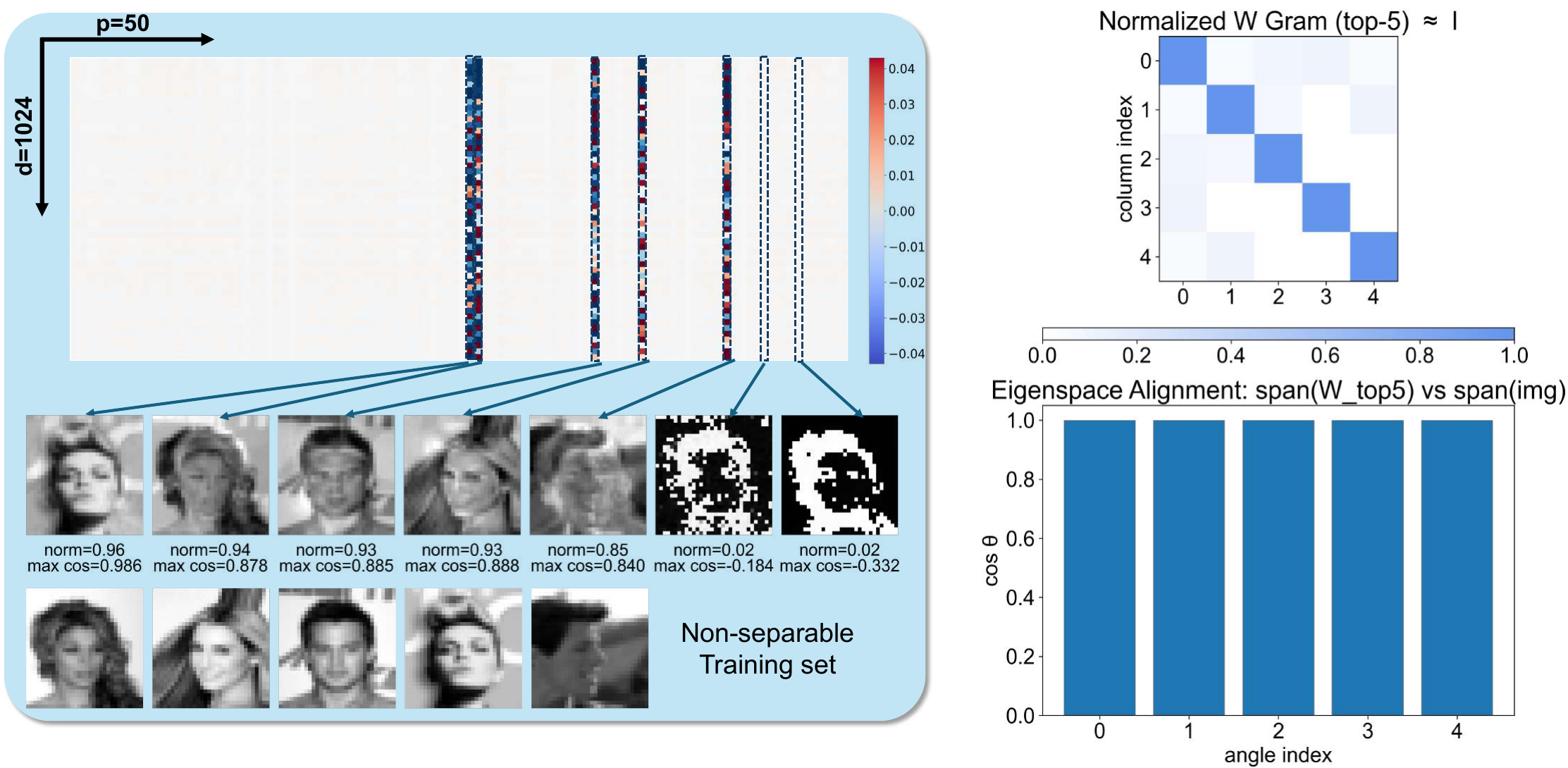}
    \caption{When separability breaks, a ReLU DAE still learns a processed version of the data matrix (approximately an orthonormal basis of the data span).}
    \label{fig:app-overlap}
\end{figure}

For the generalized ReLU DAE on CelebA, it is already a non-separable dataset. And the model continues to (i) generate novel images (\Cref{fig:sampling}), (ii) capture dataset statistics (\Cref{fig:W_vis}), and (iii) produce balanced representations (\Cref{fig: MLP vis}). Thus, separability mainly simplifies the form of local minimizers; it is not required for either memorization or generalization.

\paragraph{Robustness to different optimization setups}
We show that the local minimizer characterized in Cor.~\ref{cor:dae-mem} is robust to different random seeds and optimizers (RMSProp, Adam, AdamW). In all cases, modern adaptive optimizers converge to a sparse solution that stores individual training samples as columns. We also varied the random seed and found that it essentially only permutes the columns, and omit those results for brevity (\Cref{fig:app-optimizers}).
\begin{figure}[t]
    \centering
    \includegraphics[width=0.8\linewidth]{./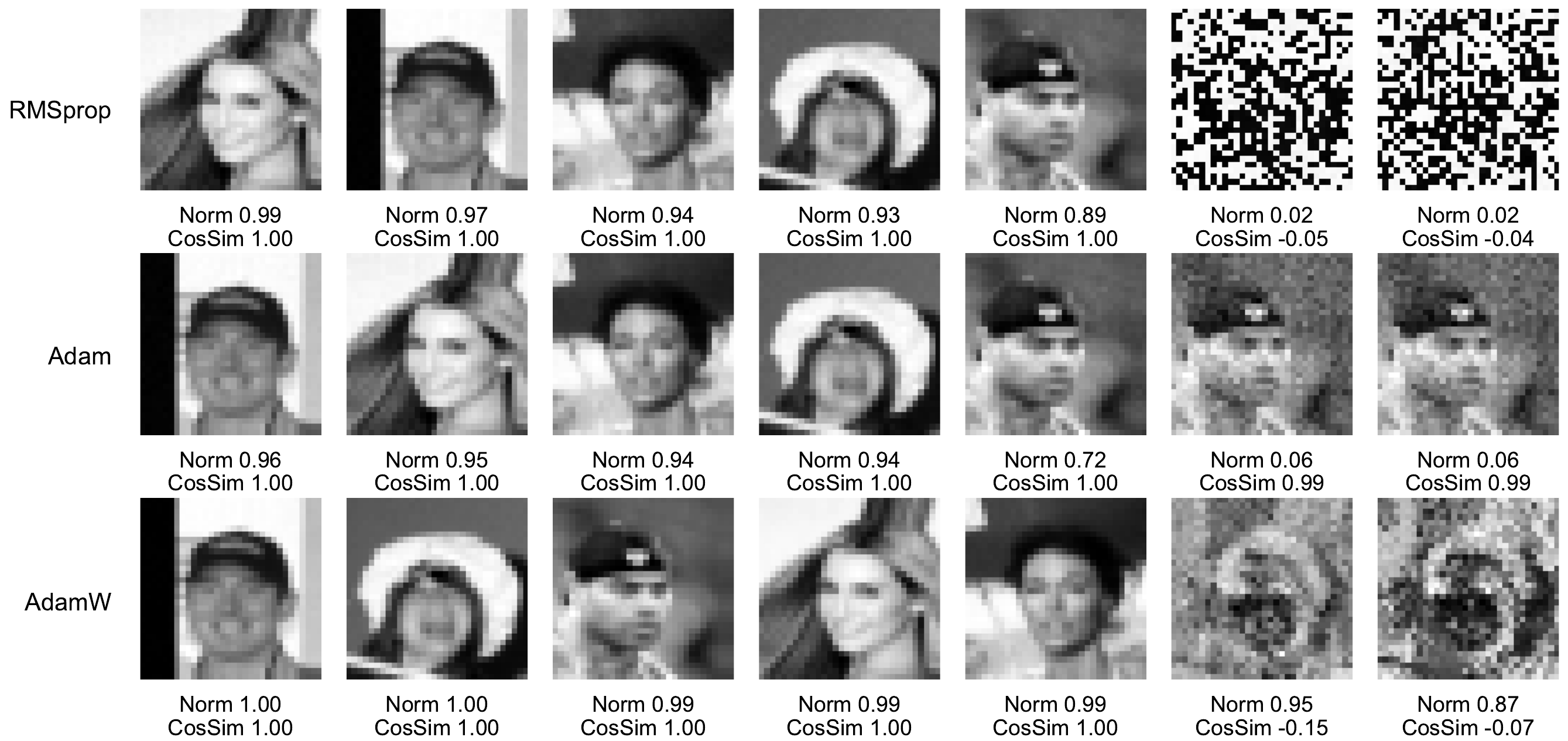}
    \caption{The local minimizer from Cor.~\ref{cor:dae-mem} is robust to different random seeds and optimizers.}
    \label{fig:app-optimizers}
\end{figure}

\paragraph{Tying vs.\ untying the encoder-decoder matrices}
Our theorem shows that even when the encoder and decoder are parametrized independently, training drives them to a symmetric (tied) solution. We confirm this empirically in \Cref{fig:app-symmetric}, consistent with prior observations~\autocite{kunin2019loss}. Accordingly, for \Cref{fig:W_vis,fig: MLP vis} in the main text we train weight-tied ReLU DAEs.

\begin{figure}[t]
    \centering
    \includegraphics[width=0.9\linewidth]{./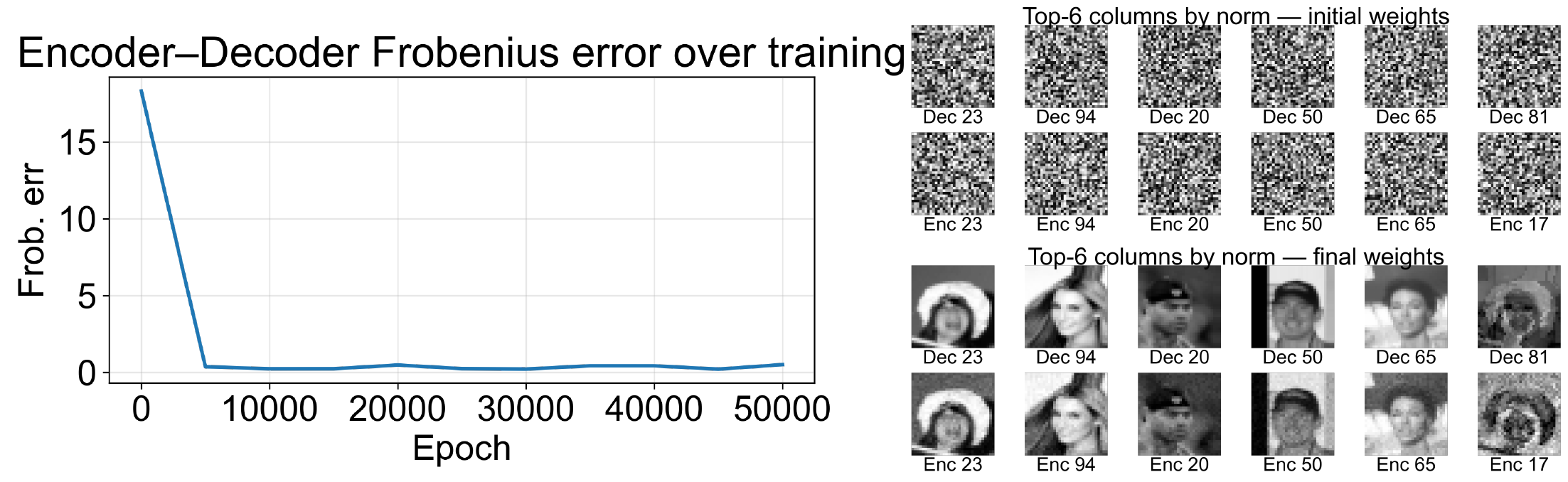}
    \caption{An untied ReLU DAE learns (approximately) symmetric encoder-decoder matrices.}
    \label{fig:app-symmetric}
\end{figure}

\subsection{Denoising and Representations of Test Samples with ReLU DAE}\label{app:denoising}
As in~\autocite{kadkhodaie2023generalization}), the ability to denoise an unseen test image is an equivalent check for generalization or overfitting, as shown in \Cref{app:fig-denoising}. Memorizing DAE (\Cref{cor:dae-gen}) perfectly denoises a training image. On a test image, it still produces a training-data like output (visually “clear” but discarding input-specific information, producing high test MSE). Generalizing DAE (\Cref{cor:dae-gen}) denoises both while preserving input-specific structure.
\begin{figure}[t]
    \centering
    \begin{subfigure}{0.48\textwidth}
        \centering
        \includegraphics[width=\textwidth]{./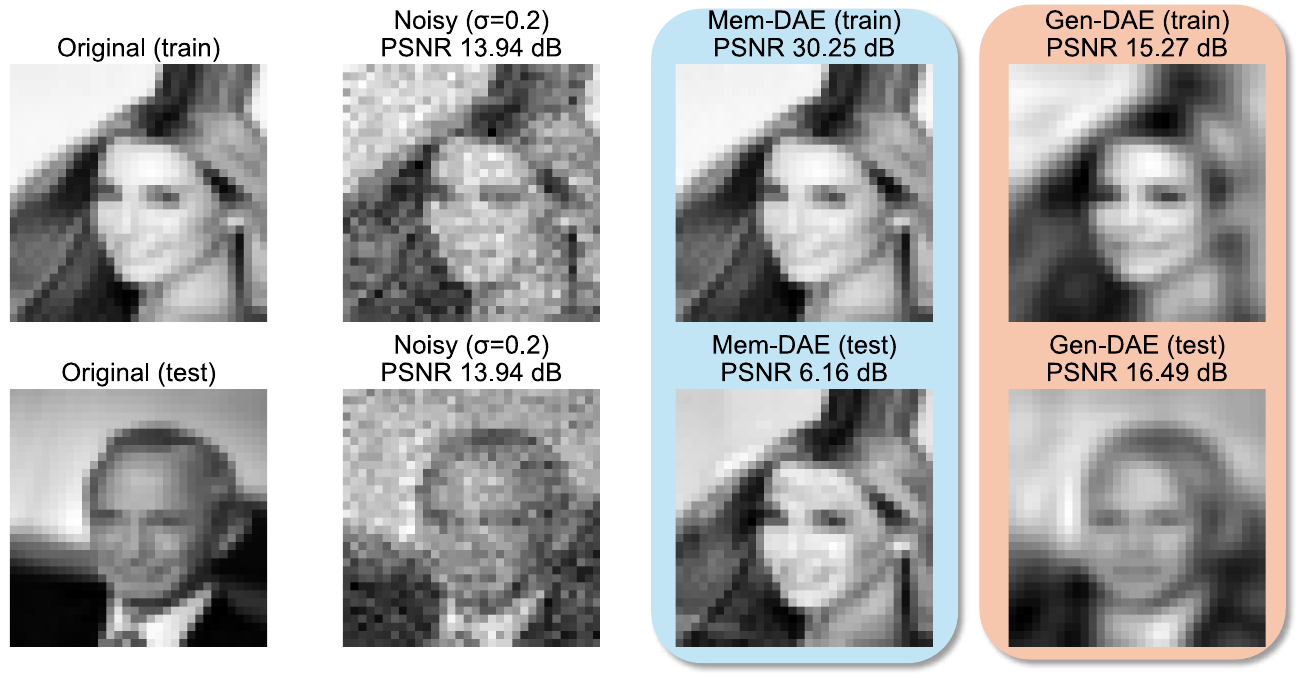}
        \caption{Denoising $\sigma=0.2$ with Mem./Gen. DAEs}
    \end{subfigure}
    \hfill 
    \begin{subfigure}{0.48\textwidth}
        \centering
        \includegraphics[width=\textwidth]{./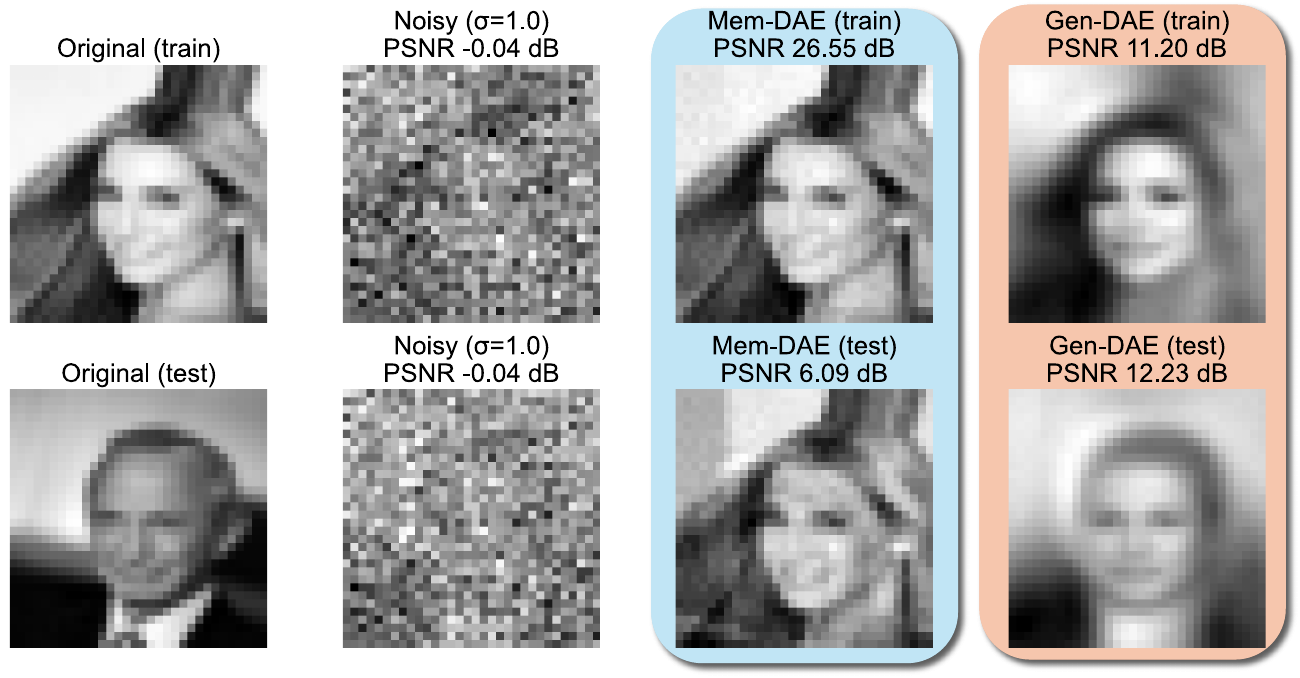}
        \caption{Denoising $\sigma=1.0$ with Mem./Gen. DAEs}
    \end{subfigure}
    \caption{One-step denoising result of train/test samples with ReLU DAE}
    \label{app:fig-denoising}
\end{figure}

Moreover, we also visualize the representations of test samples for the memorizing and generalizing DAEs in~\Cref{app:fig-test-reps}. Since the memorizing DAE learns sparse columns, the representation of a test image is also sparse: positive activations indicate positive alignment with specific memorized training samples, and the resulting code is highly spiky. For the generalizing DAE, which learns statistics reflecting the underlying data distribution, the representations of test samples are as balanced as those of training samples.
\begin{figure}[t]
    \centering
    \includegraphics[width=0.6\linewidth]{./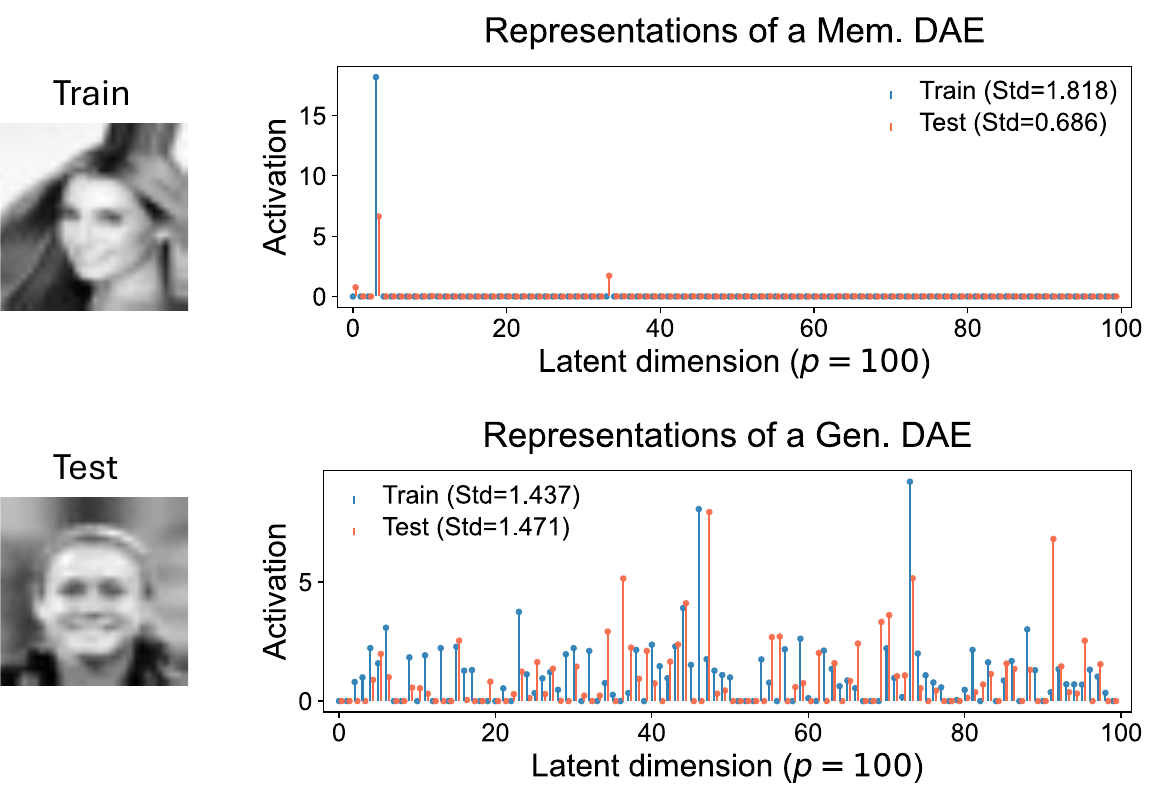}
    \caption{Representations of train and test samples under memorizing vs.\ generalizing ReLU DAEs.}
    \label{app:fig-test-reps}
\end{figure}

\subsection{Connection between ReLU DAE and Real-World Models}\label{app:connection}
In this section, we demonstrate that our ReLU model exhibits piecewise linearity, consistent with observations in real-world models~\autocite{lukoianov2025locality}. Consequently, it can be viewed as a localized approximation of these counterparts: a large model can implement the mechanisms of~\Cref{cor:dae-mem} and~\Cref{cor:dae-gen} in distinct local regions~\autocite{ross2024geometric}, thereby simultaneously generalizing and memorizing. We verify this via SVD analysis of the Jacobian~\autocite{kadkhodaie2023generalization, achilli2024losing} for SD1.4, EDM, and our ReLU DAE:\begin{itemize}\item Around memorized data, the model's Jacobian is extremely low-rank and dominated by \textbf{that} specific data vector. This indicates the model is storing and denoising along the memorized sample, confirming the results of Cor.~\ref{cor:dae-mem}. Moreover, the model denoises with near-perfect certainty.\item Around generalized samples, the Jacobian matrix reflects the data structures described in~\cref{cor:dae-gen}. Accordingly, the model produces a smoothed result, having learned a ground-truth denoiser that incorporates the constraints of the underlying distribution~\autocite{niedoba2024towards}.\end{itemize}We visualize these findings in Figures~\ref{fig:jac_sd}, \ref{fig:jac_edm}, and \ref{fig:jac_dae}.

\begin{figure}[!h]
\centering
\begin{subfigure}[t]{0.7\linewidth}
  \centering
  \includegraphics[width=\linewidth]{./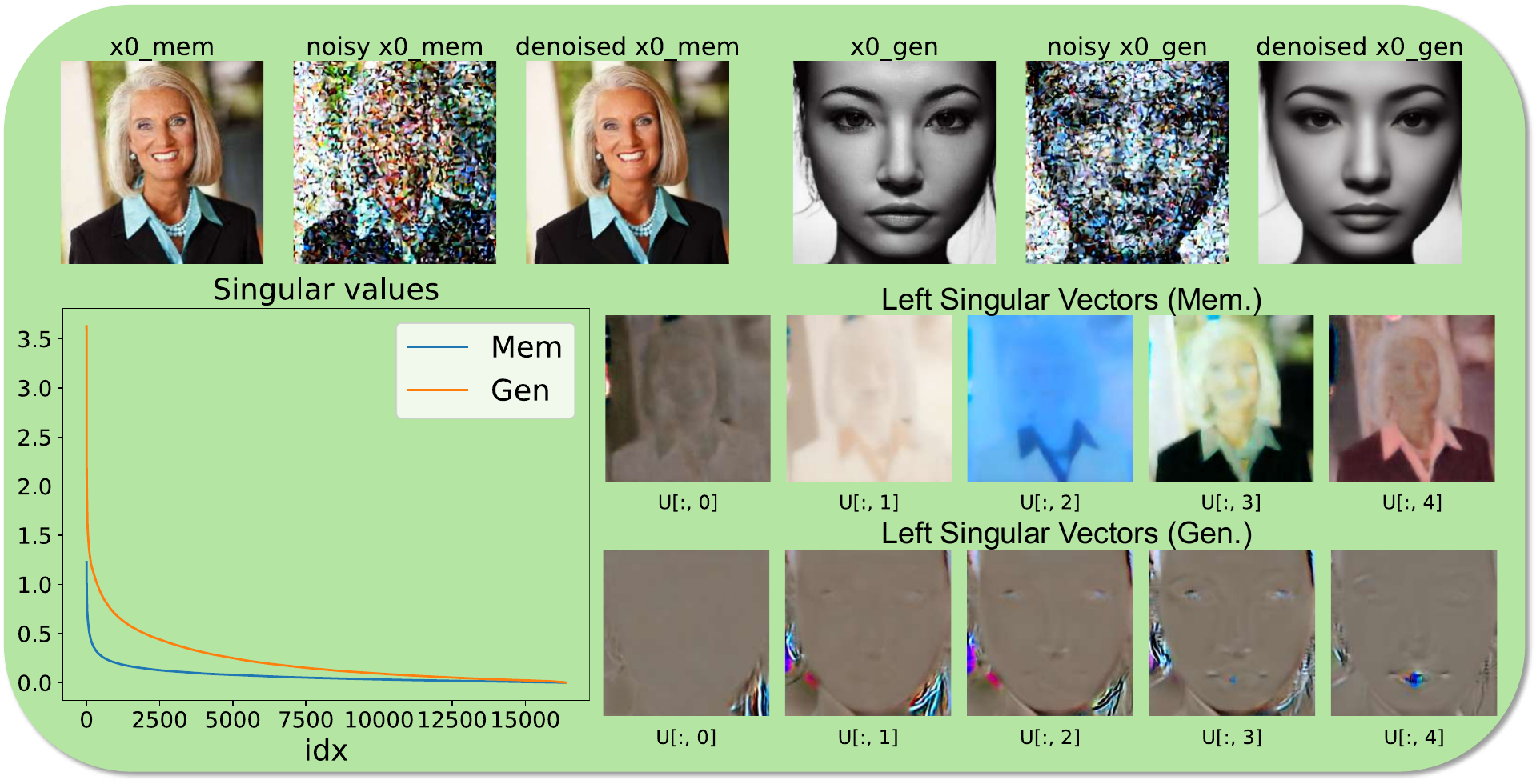}
  \caption{SD1.4's Jacobian at $t=200$}
  \label{fig:jac_sd}
\end{subfigure}
\begin{subfigure}[t]{0.7\linewidth}
  \centering
  \includegraphics[width=\linewidth]{./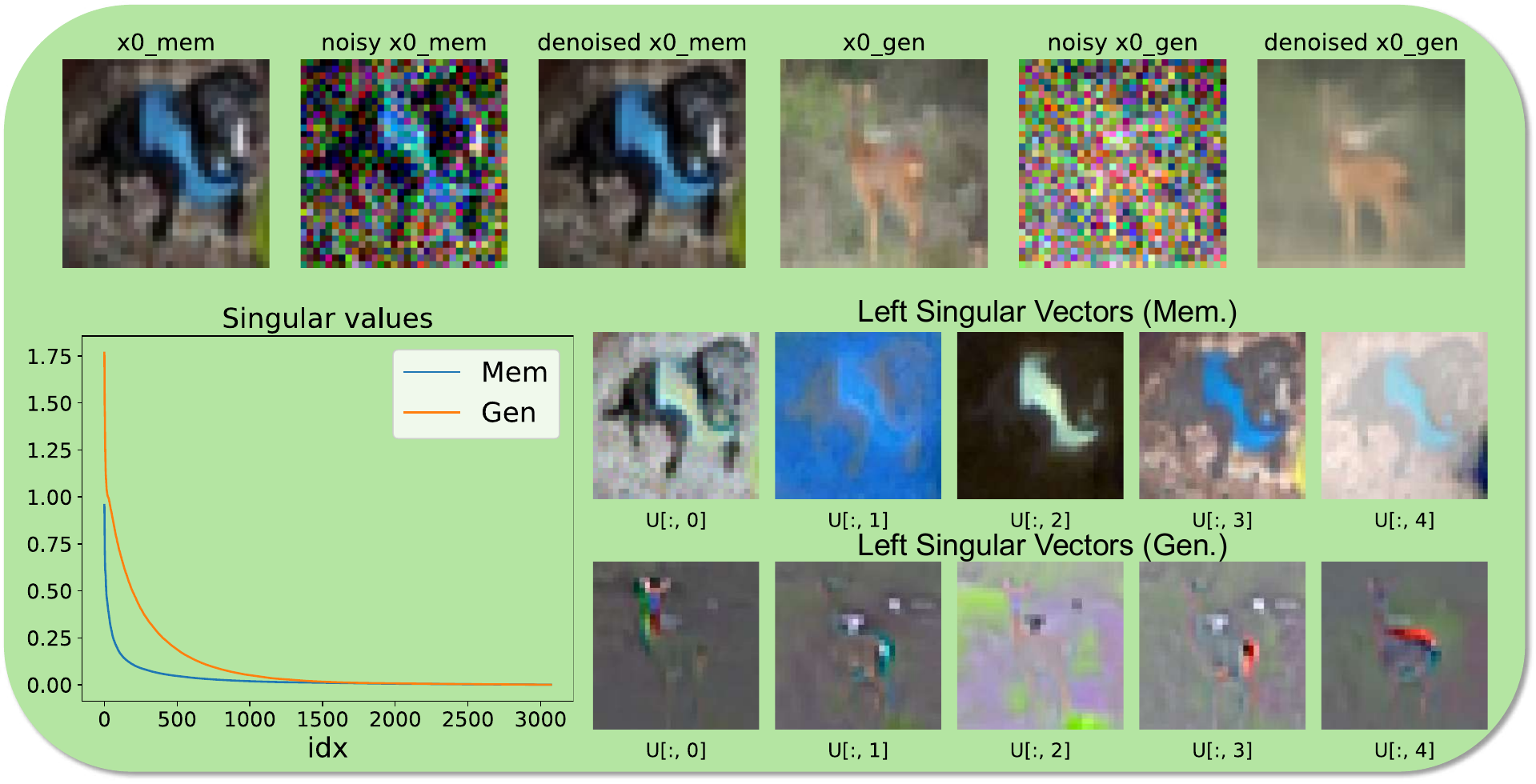}
  \caption{EDM's Jacobian at $\sigma_t=0.2$}
  \label{fig:jac_edm}
\end{subfigure}
\begin{subfigure}[t]{0.7\linewidth}
  \centering
  \includegraphics[width=\linewidth]{./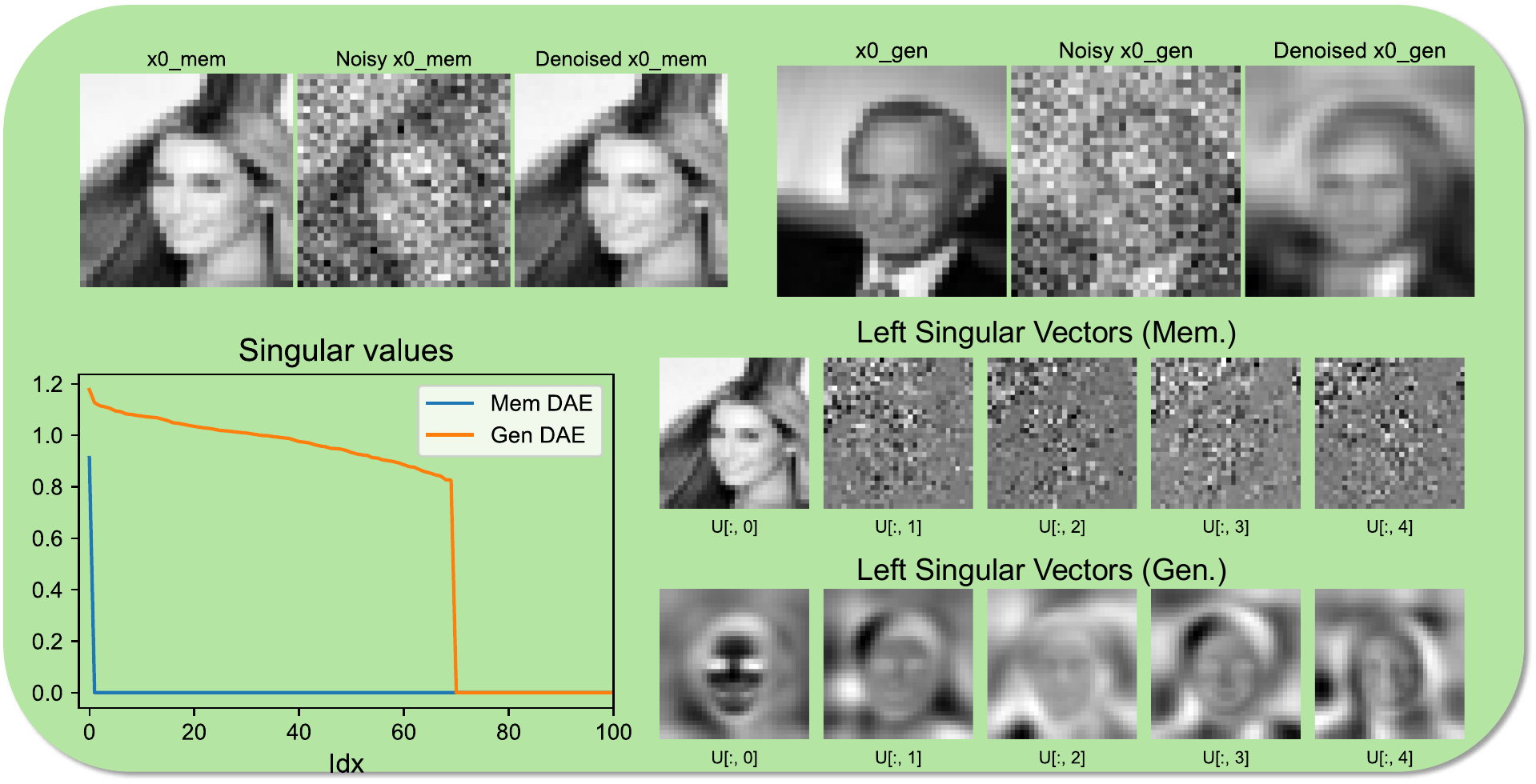}
  \caption{ReLU DAE's Jacobian at $\sigma_t=0.2$}
  \label{fig:jac_dae}
\end{subfigure}
\caption{Jacobians for SD1.4, EDM, and ReLU DAE at the indicated time/noise settings.}
\end{figure}

\subsection{Duplication of Training Data induces Memorization}\label{app:duplication}
Large-scale diffusion datasets often contain duplicates due to imperfect deduplication or aggregation from heterogeneous sources~\autocite{carlini2023extracting,shi2025closer}. Such duplicates are disproportionately memorized by generative models~\autocite{somepalli2023understanding,chen2024towards}. Interpolating \Cref{cor:dae-mem} and \Cref{cor:dae-gen} suggests that, when a subset is duplicated, the model tends to memorize those duplicated samples while still generalizing on the rest. We observe this behavior empirically in \Cref{app:fig-duplication} for EDM trained on CIFAR10 with a duplicated subset (and similarly for DiT on ImageNet as in \Cref{fig: UNet Vis} ).

\begin{figure}[t]
    \centering
    \begin{subfigure}{0.48\textwidth}
        \centering
        \includegraphics[height=5cm]{./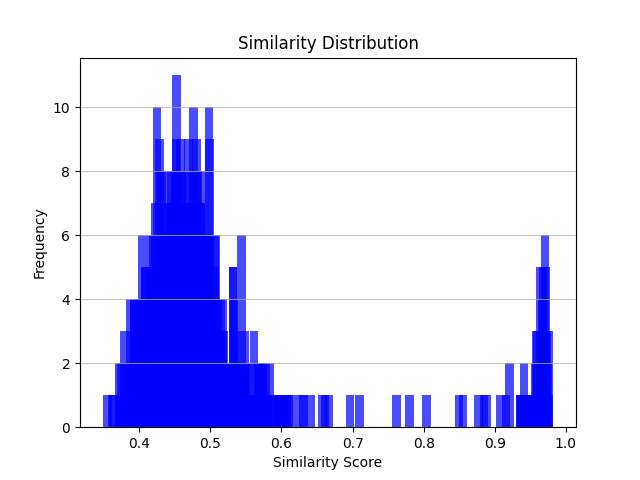}
        \caption{Bimodal similarity of generated samples to the training set (CIFAR10) under duplication}
    \end{subfigure}\hfill
    \begin{subfigure}{0.48\textwidth}
        \centering
        \includegraphics[height=5cm]{./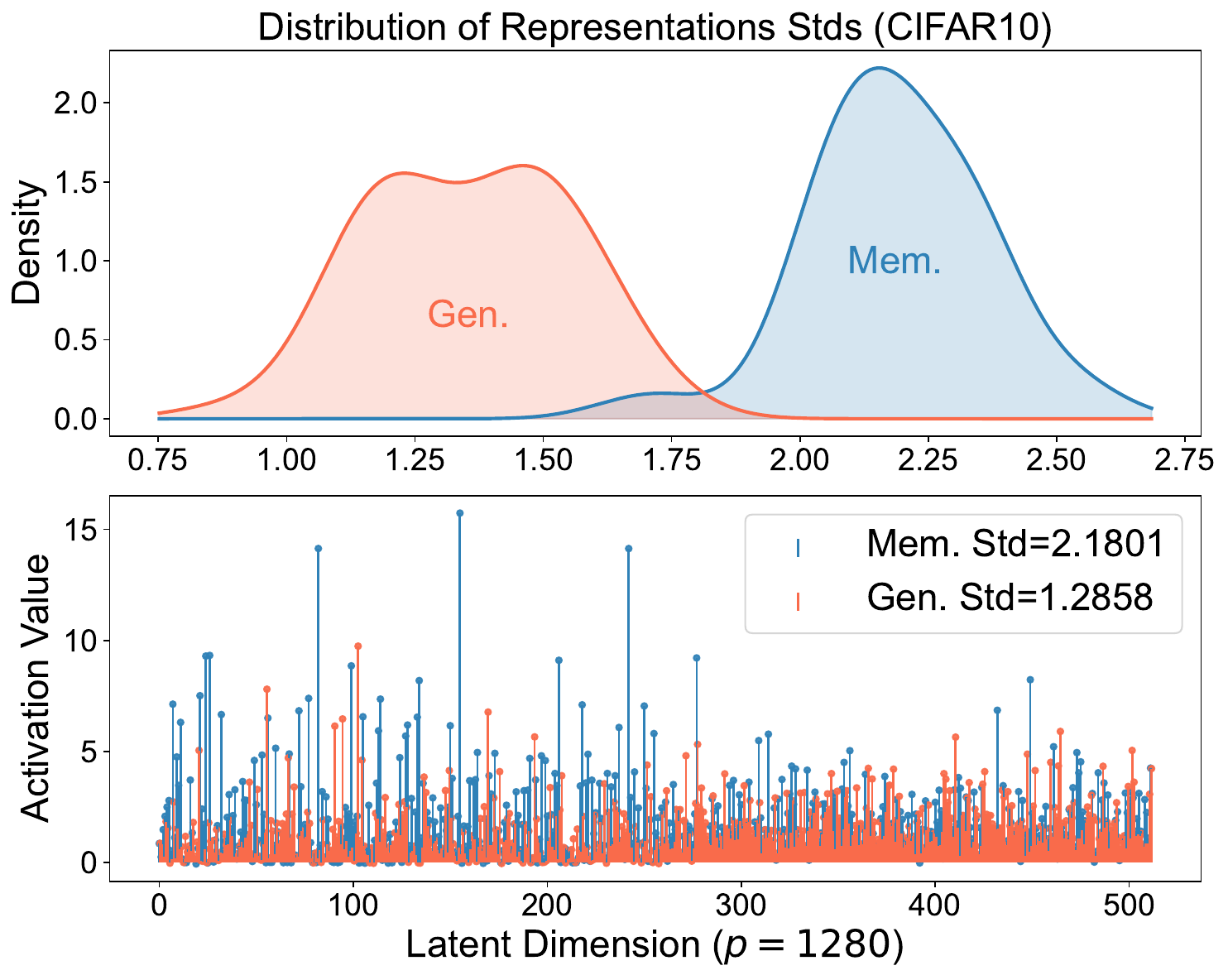}
        \caption{Mem./Gen. representation statistics for an EDM pretrained on CIFAR10 with a duplicated subset.}
    \end{subfigure}
    \caption{Effect of training-set duplication. Duplicates induce a memorization mode while non-duplicated data continue to support generalization.}
    \label{app:fig-duplication}
\end{figure}

\section{Extra Technical Details}\label{app:exp_detail}




\subsection{Training and Sampling Setup for ReLU DAEs}\label{app:dae train/sampling}
\noindent\textbf{Optimization.} We train with RMSprop. For memorized models, we use learning rate \(1\!\times\!10^{-3}\), weight decay \(1\!\times\!10^{-2}\), and run \(5\times10^{5}\) gradient steps. For generalized models, we use learning rate \(1\!\times\!10^{-4}\), weight decay \(1\!\times\!10^{-4}\), and run \(4\times10^{7}\) steps. Perturbing these choices (e.g., Adam/AdamW vs.\ RMSprop, slightly different learning rates or weight decays, or tying vs.\ untying the encoder-decoder) can slightly shift the final solution, but the memorization-generalization characterization remains clear.

\noindent\textbf{Sampling.} We train a set of DAEs with VE noise scheduling~\autocite{song2020score} over \(\sigma \in [0.02,\,2]\) and run DDIM sampling (Eq.~\ref{eq:ddim}).

\noindent\textbf{Data.} 
\emph{MoG:} In \(d=1000\), we consider two symmetric modes with means \(\boldsymbol{\mu}_1=-\boldsymbol{\mu}_2=5\mathbf{e}_1\) (where \(\mathbf{e}_1=(1,0,\ldots,0)\)) and covariances \(\boldsymbol{\Sigma}_1,\boldsymbol{\Sigma}_2\) having exponentially decaying spectra. For the memorized model, we use 2 samples per mode; for the generalized model we use 10{,}000 samples (5{,}000 per mode). 
\emph{CelebA:} We use 5 training images (chosen for clear separability) for the memorized model and the first 10{,}000 for the generalized model.

\subsection{Memorization Detection Details}\label{app:mem-detect}
\textbf{Collecting mem./gen. sets.} For LAION-Stable Diffusion, we follow~\autocite{wen2024detecting} and use publicly available prompts curated to elicit either memorization or generalization~\autocite{webster2023duplication}. For CIFAR10-EDM and ImageNet-DiT, we compute the SSCD similarity~\autocite{zhang2024emergence} between each generated image and its nearest neighbor in the training set; samples with similarity \(>\!0.9\) are labeled memorized and those with similarity \(<\!0.5\) as generalized.

\noindent\textbf{Feature extraction.} For EDM we extract activations at \icode{8x8\_block3.norm0}; for Stable Diffusion v1.4 at \icode{up\_blocks.0.resnets.2.nonlinearity}; and for DiT-L/4 we use the SiLU activation in block 12 (of 24). We apply global max pooling (spatial for Stable Diffusion v1.4 and EDM; token-wise for DiT) to obtain compact representations, though detection also works even if not. Unless otherwise noted, representations are taken at DDPM timestep \(t=50\), corresponding to an equivalent noise level \(\sigma_t \approx 0.17\)~\autocite{ho2020denoising}.

\begin{algorithm}[H]\label{alg:detect}
\caption{Detection via representation standard deviation (STD)}
\KwIn{generated image \(\bm{x}_0\), timestep \(t\), threshold \(\textsc{THRES}\)}
\KwOut{intermediate representation \(\bm{h}\), detection flag \(\mathbb{I}_{\text{mem}}\)}
\(\bm{x}_t \leftarrow \textsc{AddForwardNoise}(\bm{x}_0, t)\)\;
\(\bm{h} \leftarrow \bm{h}_\theta(\bm{x}_t, t, \text{condition}=\varnothing)\)\;
\hspace{1em}\(\)where \(\bm{f}_\theta(\bm{x}_t, t) = \bm{g}_{\theta}\!\left[\bm{h}_\theta(\bm{x}_t, t, \varnothing)\right]\) with \(\bm{g}\) and \(\bm{h}\) the decoder/encoder components\;
\(\mathbb{I}_{\text{mem}} \leftarrow \big(\textsc{Std}(\bm{h}) > \textsc{THRES}\big)\)\;
\Return \(\bm{h}\), \(\mathbb{I}_{\text{mem}}\)\;
\end{algorithm}

The detection metric need not be limited to standard deviation; other effective choices include the \(\ell_4/\ell_2\) ratio~\autocite{Vershynin2018HDP}, entropy, and max-min of the representations. We found these alternatives yield similar separability between memorized and generalized samples.

\subsection{Image Editing Details}\label{app:im_edit}
We use Stable Diffusion v1.4 for our image editing experiments. For each style transfer task, we first generate 100 images in the target concept/style. We then extract feature representations at timestep $t=10$ (out of $1000$) from the conditional path at layers \icode{up\_blocks.0.resnets.0}, \icode{up\_blocks.0.resnets.1}, \icode{up\_blocks.0.resnets.2}, \icode{up\_blocks.1.resnets.0}, \\ \icode{up\_blocks.1.resnets.1}, and \icode{up\_blocks.1.resnets.2}. The resulting tensor has size $100 \times C \times H \times W$. We compute the mean across the image, height, and width dimensions, yielding a steering vector of size $1 \times C \times 1 \times 1$. Representation steering is performed by adding this steering vector to the conditional path representation of a source image with varying editing strengths. Sampling is performed with 40 total generation steps, where representation steering is applied during the final 20 steps. All experiments use a classifier-free guidance (CFG) scale of 3.5.

\subsection{Exploration on Steering-based Image Editing}

\begin{figure*}[t]
\begin{center}
    \begin{subfigure}{0.48\textwidth}
    \includegraphics[width = 0.98\textwidth]{./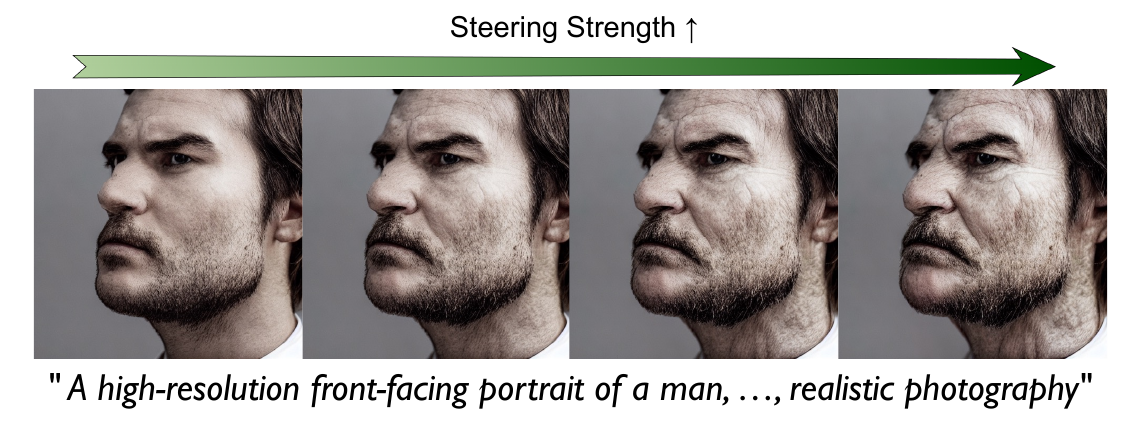}
    \includegraphics[width = 0.98\textwidth]{./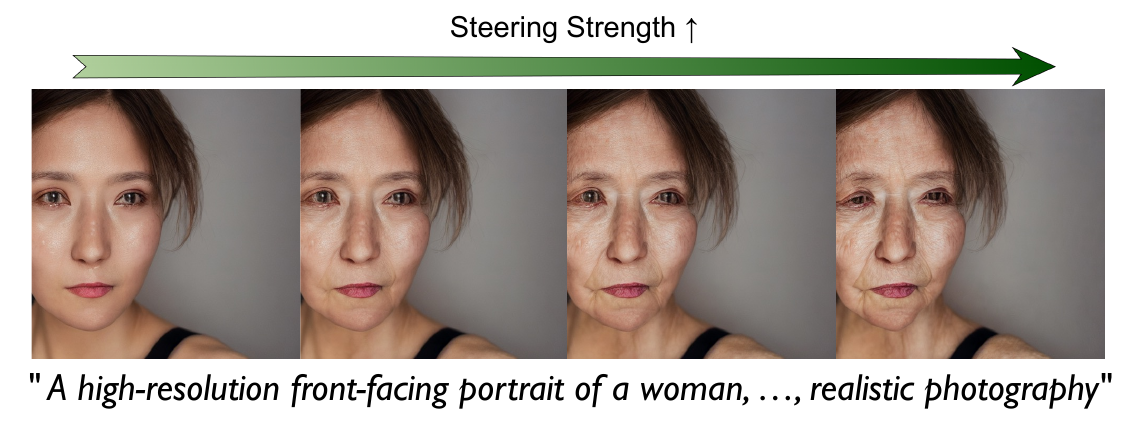}
    \includegraphics[width = 0.98\textwidth]{./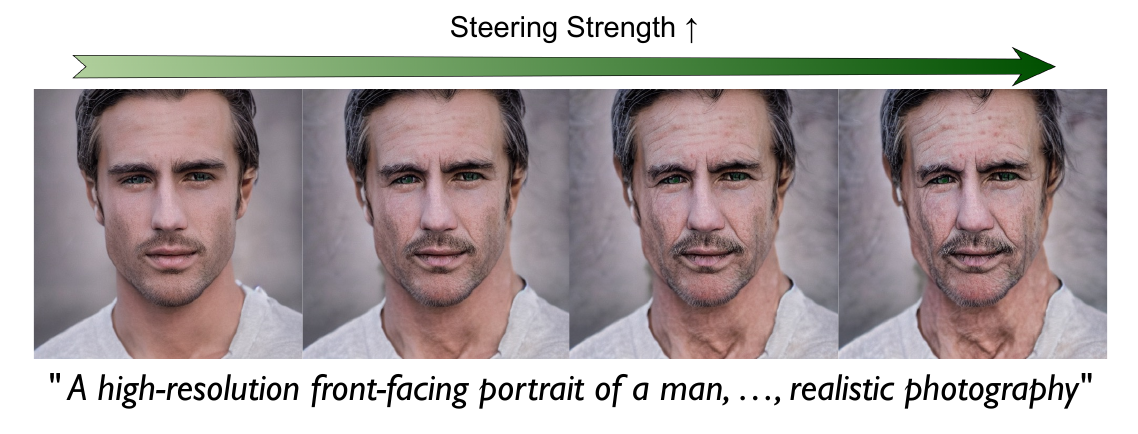}
    \includegraphics[width = 0.98\textwidth]{./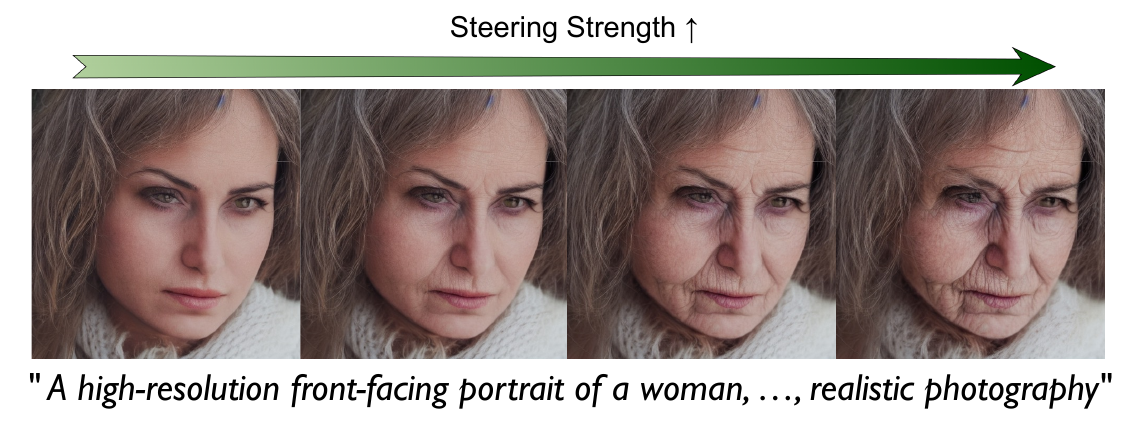}
    \vspace{-0.05in}
    \caption{$+$Old (\textbf{Gen.})} 
    \end{subfigure} \quad 
    \begin{subfigure}{0.48\textwidth}
    \includegraphics[width = 0.98\textwidth]{./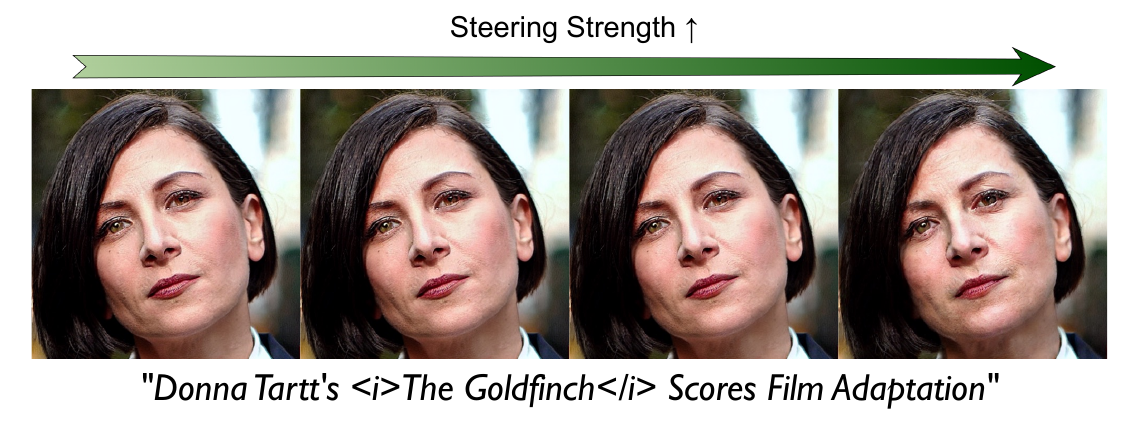}
    \includegraphics[width = 0.98\textwidth]{./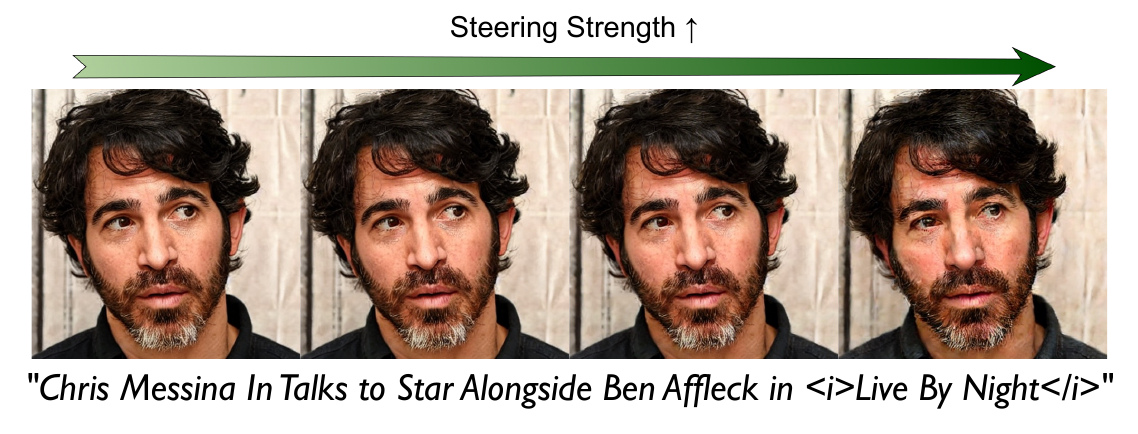}
    \includegraphics[width = 0.98\textwidth]{./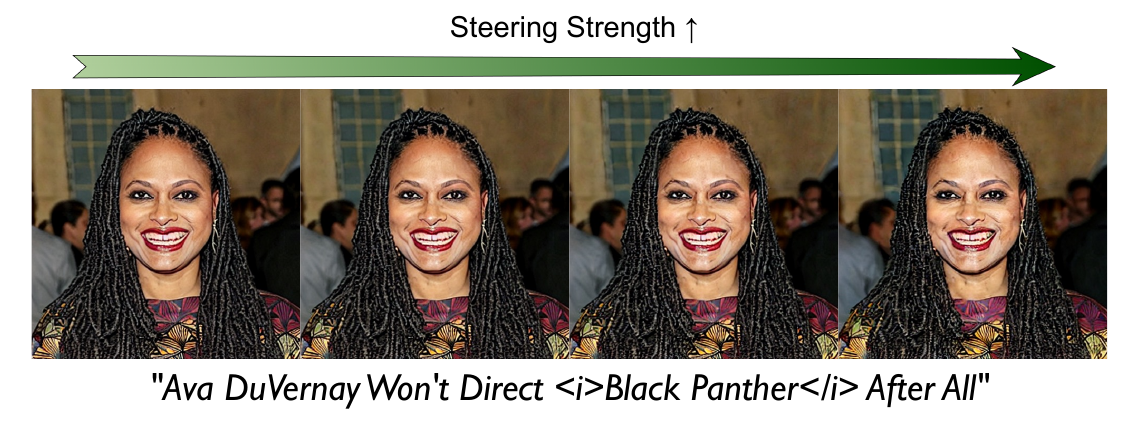}
    \includegraphics[width = 0.98\textwidth]{./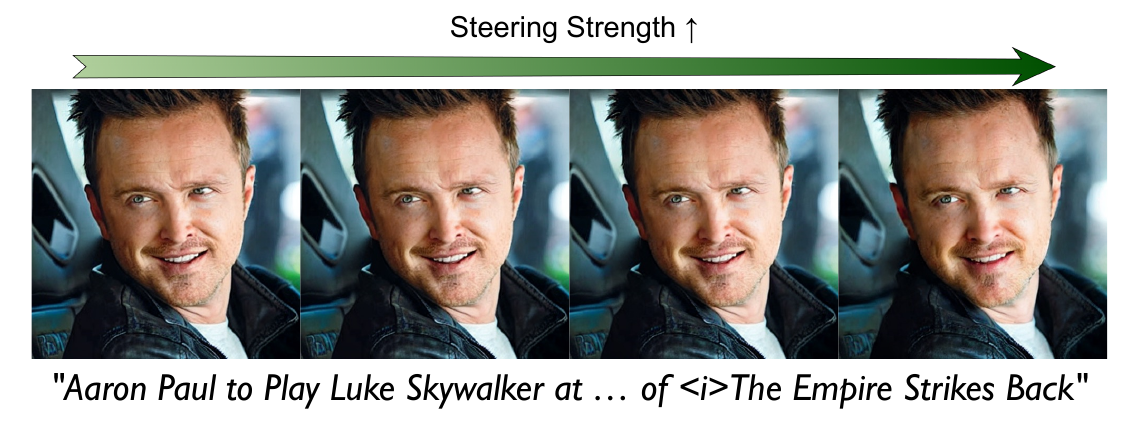}
    \vspace{-0.05in}
    \caption{$+$Old (\textbf{Mem.})} 
    \end{subfigure}
    \end{center}
    \vspace{-0.2in}
\caption{\textbf{Image editing via \emph{single-layer} representation steering.} We follow the setup of \Cref{fig:steer_mem_gen}, but extract and apply the steering vector using only one layer.}
\vspace{-0.1in}
\label{app:1layer_steer_mem_gen}
\end{figure*}

\begin{figure*}[t]
    \centering
    \begin{subfigure}{0.48\textwidth}
        \centering
        \includegraphics[width=\textwidth]{./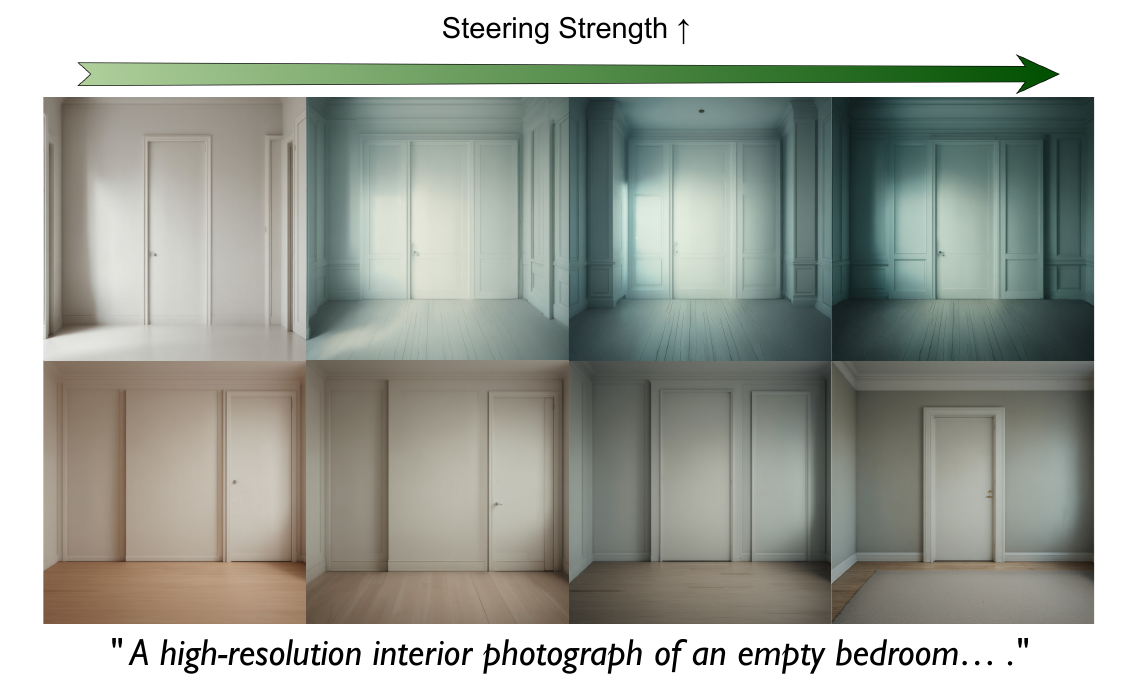}
        \caption{$+$dark}
    \end{subfigure}
    \hfill 
    \begin{subfigure}{0.48\textwidth}
        \centering
        \includegraphics[width=\textwidth]{./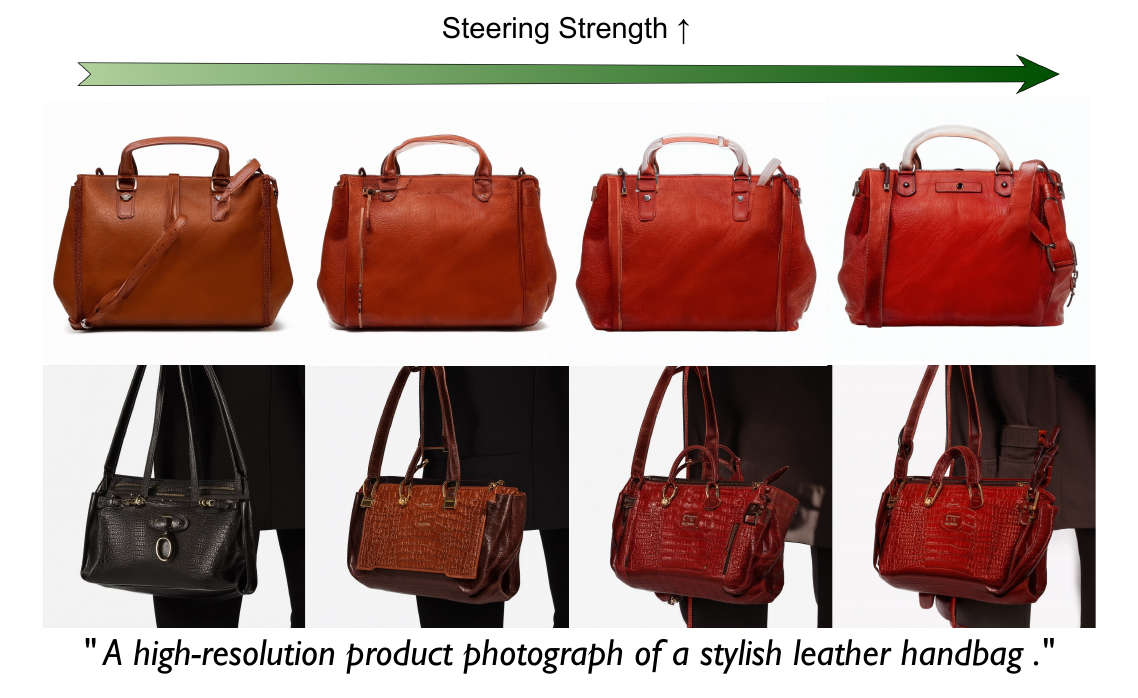}
        \caption{$+$Red}
    \end{subfigure}
    \vspace{-0.1in}
    \caption{\textbf{Image editing on SD 3.5.} We follow the setup of \Cref{fig:steer_mem_gen} using more recent DiT-based Stable Diffusion 3.5 model for image editing. }
    \label{app:sd35}
\end{figure*}

In the main body of the paper, we show that a simple representation-based steering method enables effective image editing. More importantly, the editing outcomes differ systematically between generalized and memorized images. In this subsection, we evaluate the robustness of this method.

\begin{itemize}
    \item \textbf{Using fewer layers.} As described in \Cref{app:im_edit}, the steering results in \Cref{fig:steer_mem_gen} are obtained by extracting and applying representation addition across 6 layers of the network. As an ablation, we find that the method does not require such depth: even a single layer is sufficient. To illustrate this, we use \icode{up\_blocks.1.resnets.1} for both extraction and application, and present the results in \Cref{app:1layer_steer_mem_gen}. The outputs closely match those in the main paper, and the distinction between memorized and generalized examples remains evident.


    \item \textbf{Stable Diffusion 3.5.} The representation space in this architecture is more elusive, likely due to components such as Adaptive LayerNorm. To investigate this, we applied representation steering using layers \icode{transformer\_blocks.10.norm1}, \icode{transformer\_blocks.11.norm1},\\ \icode{transformer\_blocks.12.norm1}, \icode{transformer\_blocks.13.norm1}, and \\ \icode{transformer\_blocks.14.norm1}. We generated 200 reference images to extract representations for each task. Sampling was performed with 40 total generation steps, with steering applied during steps 35--30. All experiments utilized a CFG scale of 4.5. These results are visualized in Figure~\ref{app:sd35}.
\end{itemize}

\section{Deferred Proofs}\label{app:proof}
\subsection{\texorpdfstring{Proof of \Cref{lem:lae}}{Proof of Lemma \ref*{lem:lae}}}
\begin{lemma}[Global minimizers of Regularized LAE]
\label{lem:lae}
Consider the regularized $p$-neuron LAE objective with $\bm{W}_1, \bm{W}_2 \in \mathbb{R}^{d\times p}$:
\begin{align*}
\hat{\mathcal{L}}_{\bm{X}}(\bm{W}_2, \bm{W}_1) := \|\bm{W}_2 \bm{W}_1^\top \bm{X} - \bm{X}\|_F^2
+ n\sigma^2 \|\bm{W}_2 \bm{W}_1^\top\|_F^2
+ \lambda' \big(\|\bm{W}_1\|_F^2 + \|\bm{W}_2\|_F^2\big),
\end{align*}
where $\bm{X} = (\bm{x}_1, \ldots, \bm{x}_n)$ and $\bm{S}:=\bm{X}\bm{X}^\top=\bm{U}\bm{\Lambda}\bm{U}^\top$.
Assume $\lambda' < \lambda_p$, where $\lambda_1\ge\cdots\ge\lambda_d$ are the eigenvalues of $\bm{S}$.
Then every global minimizer has the form
\vspace{-0.2cm}
\begin{align}\label{eq:lae sol}
\bm{W}_2^\star = \bm{W}_1^\star
= \bm{U}_{(p)} \big(\bm{I} + n\sigma^2 \bm{\Lambda}_{(p)}^{-1}\big)^{-\tfrac{1}{2}}
                 \big(\bm{I} - \lambda' \bm{\Lambda}_{(p)}^{-1}\big)^{\tfrac{1}{2}}
                 \bm{O}^\top:=\bm{W}_{\bm{X}},
\end{align}
where $\bm{U}_{(p)}$ contains the top-$p$ eigenvectors, $\bm{\Lambda}_{(p)}$ the corresponding eigenvalues, and $\bm{O}\in \mathbb{R}^{p\times p}$ is any orthogonal matrix.
\end{lemma}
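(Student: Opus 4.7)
The plan is to reduce the factorized loss to a convex program in $\bm{M}:=\bm{W}_2\bm{W}_1^\top$ via the standard nuclear-norm identity, solve that program in the eigenbasis of $\bm{S}$, and then recover the factorized form through a balanced factorization. For the reduction, I would combine Cauchy--Schwarz ($\|\bm{W}_2\bm{W}_1^\top\|_* \le \|\bm{W}_1\|_F\|\bm{W}_2\|_F$) with AM--GM to get $\|\bm{W}_1\|_F^2 + \|\bm{W}_2\|_F^2 \ge 2\|\bm{M}\|_*$, with equality at the balanced factorization $\bm{W}_1=\bm{B}\bm{\Sigma}^{1/2}\bm{O}$, $\bm{W}_2=\bm{A}\bm{\Sigma}^{1/2}\bm{O}$ read off from the SVD $\bm{M}=\bm{A}\bm{\Sigma}\bm{B}^\top$. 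The original problem thus becomes $\min_{\mathrm{rank}(\bm{M}) \le p} F(\bm{M}) + 2\lambda' \|\bm{M}\|_*$ with $F(\bm{M}) := \|\bm{M}\bm{X}-\bm{X}\|_F^2 + n\sigma^2 \|\bm{M}\|_F^2$.

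Changing variables to $\tilde{\bm{M}} := \bm{U}^\top \bm{M}\bm{U}$ converts the objective to
\[
\sum_{i,j}(\lambda_j + n\sigma^2)\tilde{M}_{ij}^2 \;-\; 2\sum_i \lambda_i \tilde{M}_{ii} \;+\; 2\lambda' \|\tilde{\bm{M}}\|_* \;+\; \mathrm{tr}(\bm{S}).
\]
Three observations then force the optimum to be diagonal and PSD: off-diagonals enter the first term only as nonnegative squares; the linear term depends only on the diagonal; and $\|\tilde{\bm{M}}\|_* \ge \sum_i |\tilde{M}_{ii}|$ by pairing with the sign-diagonal test matrix in the operator-norm duality. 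So replacing $\tilde{\bm{M}}$ by $\mathrm{diag}(\tilde{\bm{M}})$ weakly decreases the objective. Each coordinate then independently minimizes $(\lambda_i + n\sigma^2)\tilde{M}_{ii}^2 - 2(\lambda_i - \lambda')\tilde{M}_{ii}$, yielding the soft-thresholded Wiener value $\tilde{M}_{ii}^\star = \max\{0,\,(\lambda_i - \lambda')/(\lambda_i + n\sigma^2)\}$ and a relative ``gain'' $(\lambda_i - \lambda')^2/(\lambda_i + n\sigma^2)$ over zero, which a quick derivative check shows is monotone increasing in $\lambda_i$ for $\lambda_i > \lambda'$. Combined with the rank-$\le p$ constraint and the hypothesis $\lambda' < \lambda_p$, this activates exactly the top $p$ eigendirections, giving $\bm{M}^\star = \bm{U}_{(p)}(\bm{I}+n\sigma^2 \bm{\Lambda}_{(p)}^{-1})^{-1}(\bm{I}-\lambda' \bm{\Lambda}_{(p)}^{-1})\bm{U}_{(p)}^\top$. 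Since $\bm{M}^\star$ is PSD with a strictly positive support (as $\lambda' < \lambda_p$ rules out zero eigenvalues), every balanced factorization of it takes the form $\bm{W}_1^\star = \bm{W}_2^\star = \bm{U}_{(p)}(\bm{I}+n\sigma^2 \bm{\Lambda}_{(p)}^{-1})^{-1/2}(\bm{I}-\lambda' \bm{\Lambda}_{(p)}^{-1})^{1/2}\bm{O}^\top$ for some orthogonal $\bm{O}$, matching \eqref{eq:lae sol}; uniqueness of the symmetric pairing follows from the requirement that the rotation matrices used in $\bm{W}_1$ and $\bm{W}_2$ must cancel to recover $\bm{M}^\star$.

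The main obstacle I anticipate is reconciling the diagonal reduction with the rank cap: $\mathrm{diag}(\tilde{\bm{M}})$ can have more nonzero entries than $\mathrm{rank}(\tilde{\bm{M}})$, so the diagonal projection is not automatically rank-$\le p$. My intended workaround is to first drop the rank constraint and solve the convex relaxation; subgradient optimality for the nuclear norm then forces the optimum to be diagonal with soft-thresholded entries, and imposing the rank cap reduces to a hard-thresholding step whose optimal selection coincides with the top $p$ eigendirections by the per-coordinate gain monotonicity noted above. A cleaner alternative would be a direct von Neumann/Ky Fan trace-inequality argument that first aligns the singular vectors of $\bm{M}$ with the eigenbasis of $\bm{S}$ and then optimizes over the singular values, avoiding the soft/hard-thresholding bookkeeping at the cost of heavier matrix analysis.
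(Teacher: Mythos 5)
Your proposal follows the same route as the paper: reduce to a convex program in $\bm{M}=\bm{W}_2\bm{W}_1^\top$ via $\min(\|\bm{W}_1\|_F^2+\|\bm{W}_2\|_F^2)=2\|\bm{M}\|_*$, rotate to the $\bm{S}$-eigenbasis, argue the minimizer can be taken diagonal, decouple into scalar soft-thresholded Wiener problems, keep the top $p$ directions, and factor back. The paper cites a pinching inequality for the diagonalization step where you instead invoke nuclear-norm duality with the sign-diagonal test matrix; these are equivalent justifications, and your explicit per-coordinate gain computation $(\lambda_i-\lambda')^2/(\lambda_i+n\sigma^2)$ with the monotonicity check is a nice addition that the paper omits. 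The one place where you are more careful than the paper is the rank-cap subtlety you flag at the end: zeroing off-diagonals can increase rank, so the pinching argument does not by itself show the rank-constrained minimizer is diagonal. The paper glosses over this exactly as much as your proposal does; neither fully closes the gap, although your suggested von Neumann/Ky Fan alternative is the right tool if one wanted to make the alignment of singular vectors airtight before optimizing the spectrum. Your observation that the PSD structure of $\bm{M}^\star$ plus the balance condition forces $\bm{W}_1^\star=\bm{W}_2^\star$ (up to a shared $\bm{O}$) is correct and matches the paper's factorization step.
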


\begin{proof}
\textbf{(0) Idea.}
Set $\bm{A}=\bm{W}_2\bm{W}_1^\top$ and replace the separate Frobenius penalties with a nuclear norm via
$\min_{\bm{W}_1,\bm{W}_2:\,\bm{W}_2\bm{W}_1^\top=\bm{A}}(\|\bm{W}_1\|_F^2+\|\bm{W}_2\|_F^2)=2\|\bm{A}\|_*$.
Rotate to the $\bm{S}$-basis and pinch to diagonalize, yielding $d$ decoupled 1D convex problems with solutions
$\alpha_i^\star=\big(\frac{\lambda_i-\lambda'}{\lambda_i+n\sigma^2}\big)_+$.
Keep the top $p$ directions (largest $\lambda_i$), then factor $\bm{A}^\star$ optimally to obtain \eqref{eq:lae sol}.

\noindent\textbf{(1) Reduction to a convex objective in $\bm{A}=\bm{W}_2\bm{W}_1^\top$.}
For any $\bm{A}$ with $\mathrm{rank}(\bm{A})\le p$,
\[
\min_{\bm{W}_1,\bm{W}_2:\,\bm{W}_2\bm{W}_1^\top=\bm{A}}\big(\|\bm{W}_1\|_F^2+\|\bm{W}_2\|_F^2\big)=2\|\bm{A}\|_*.
\]
Hence
\[
\min_{\bm{W}_1,\bm{W}_2}\hat{\mathcal L}_{\bm{X}}(\bm{W}_2,\bm{W}_1)
=\min_{\substack{\bm{A}\in\mathbb R^{d\times d}\\ \mathrm{rank}(\bm{A})\le p}}
\Big(\|\bm{A}\bm{X}-\bm{X}\|_F^2+n\sigma^2\|\bm{A}\|_F^2+2\lambda'\|\bm{A}\|_*\Big)=: \min_{\bm{A}} F(\bm{A}),
\]
where $F$ is convex in $\bm{A}$ (the rank constraint is nonconvex).

\noindent\textbf{(2) Diagonalization in the $\bm{S}$-basis.}
Let $\bm{A}=\bm{U}\tilde{\bm{A}} \bm{U}^\top$ with $\bm{S}=\bm{U}\bm{\Lambda}\bm{U}^\top$. Using
\[
\|\bm{A}\bm{X}-\bm{X}\|_F^2
= \mathrm{Tr}(\bm{A}\bm{S}\bm{A}^\top)-2\,\mathrm{Tr}(\bm{A}\bm{S})+\mathrm{Tr}(\bm{S}),
\]
we obtain
\[
F(\bm{A})=\underbrace{\mathrm{Tr}(\tilde{\bm{A}} \bm{\Lambda} \tilde{\bm{A}}^\top)}_{=\sum_{j}\lambda_j\sum_i \tilde a_{ij}^2}
-2\sum_i \lambda_i \tilde a_{ii}
+ n\sigma^2\|\tilde{\bm{A}}\|_F^2
+2\lambda'\|\tilde{\bm{A}}\|_*+\mathrm{Tr}(\bm{\Lambda}).
\]
Zeroing the off-diagonal entries of $\tilde{\bm{A}}$ weakly decreases the quadratic terms and does not increase the nuclear norm (pinching~\autocite{bhatia2013matrix}). Thus a minimizer can be chosen diagonal in the $\bm{U}$-basis:
$\bm{A}=\bm{U}\,\mathrm{diag}(\alpha_1,\dots,\alpha_d)\,\bm{U}^\top$.

\noindent\textbf{(3) Scalar decoupling and positivity.}
With $\bm{A}$ diagonal as above,
\[
F(\bm{A})=\sum_{i=1}^d\!\Big[\lambda_i(1-\alpha_i)^2+n\sigma^2\alpha_i^2+2\lambda'|\alpha_i|\Big]+\mathrm{const}.
\]
For $\lambda_i\ge 0$, negatives are suboptimal (replacing $\alpha$ by $|\alpha|$ decreases the first term), so we minimize over $\alpha_i\ge 0$:
\[
\alpha_i^\star=\Big(\frac{\lambda_i-\lambda'}{\lambda_i+n\sigma^2}\Big)_+.
\]

\noindent\textbf{(4) Rank-$p$ constraint and form of the minimizer.}
Enforcing $\mathrm{rank}(\bm{A})\le p$ keeps the $p$ indices with largest $\lambda_i$ (equivalently, largest unconstrained $\alpha_i^\star$) and sets the rest to $0$. Writing $\alpha_i^\star=s_i^2$ on this set,
\[
s_i=\Big(1+n\sigma^2\,\lambda_i^{-1}\Big)^{-\tfrac12}\Big(1-\lambda'\lambda_i^{-1}\Big)^{\tfrac12},
\]
and
\[
\bm{W}_2^\star=\bm{W}_1^\star
=\bm{U}_{(p)}\,\mathrm{diag}(s_i)\,\bm{O}^\top
=\bm{U}_{(p)}\!\Big(\bm{I}+n\sigma^2\bm{\Lambda}_{(p)}^{-1}\Big)^{-\tfrac12}
\Big(\bm{I}-\lambda'\bm{\Lambda}_{(p)}^{-1}\Big)^{\tfrac12}\!\bm{O}^\top,
\]
with any orthogonal $\bm{O}\in\mathbb R^{p\times p}$. This matches \eqref{eq:lae sol}. (All inverses/square-roots are taken entrywise on $\bm{\Lambda}_{(p)}$.)
\end{proof}

\paragraph{Remark (large $\lambda'$ or degenerate $\bm{S}$).}
If some $\lambda_i\le\lambda'$ (including $\lambda_i=0$), then the unconstrained coefficients
$\alpha_i^\star=\big(\frac{\lambda_i-\lambda'}{\lambda_i+n\sigma^2}\big)_+$ vanish on those indices. In that case, keep the $p$ largest indices with $\lambda_i>\lambda'$
(the rank may drop below $p$ if fewer exist), and the same formulas apply entrywise on the retained eigenvalues; any remaining columns can be set to zero and $\bm{O}$ is arbitrary.

\subsection{\texorpdfstring{Proof of \Cref{thm:dae}}{Proof of Theorem \ref*{thm:dae}}}\label{app:thm-dae}

\begin{definition}[$(\alpha,\beta)$-Separability of Training Data]\label{def:separable-restate}
Suppose the training dataset $\mathcal D$ can be partitioned into $M$ clusters
$\bm{X} = [\bm{X}_1,\dots,\bm{X}_M]$, where
$\bm{X}_k=[\bm{x}_{k,1},\dots,\bm{x}_{k,n_k}]\subseteq\mathbb{R}^d$ has mean
$\bar{\bm{x}}_k:=\tfrac{1}{n_k}\sum_{j=1}^{n_k}\bm{x}_{k,j}$.
We say the dataset is \emph{$(\alpha,\beta)$-separable} if, for some $\alpha\in(0,1)$ and $\beta<0$,
\begin{align*}
\frac{\|\bm{x}_{k,j}-\bar{\bm{x}}_k\|_2}{\|\bar{\bm{x}}_k\|_2} \;\le\; \alpha
\quad\text{for all $k,j$,}\qquad
\frac{\langle \bar{\bm{x}}_k,\bar{\bm{x}}_\ell\rangle}{\|\bar{\bm{x}}_k\|_2\,\|\bar{\bm{x}}_\ell\|_2}
\;\le\; \beta\quad\text{for all }k\neq \ell.
\end{align*}
\end{definition}

\begin{theorem}[Restatement of \Cref{thm:dae}]\label{thm:dae-restated}
Assume $(\alpha,\beta)$-separability with $\beta<0$ and nondegenerate means $\min_k \|\bar{\bm{x}}_k\|_2 \ge b>0$.
Let $n=\sum_{k=1}^M n_k$ and define
\[
\bm{W}_2^\star=\bm{W}_1^\star
=\begin{pmatrix}\bm{W}_{\bm{X}_1} & \cdots & \bm{W}_{\bm{X}_M}\end{pmatrix}.
\]
For each $k$, let $\bm{X}_k\bm{X}_k^\top=\bm{U}_k\bm{\Lambda}_k\bm{U}_k^\top$ be the eigen-decomposition and let $\bm{U}_k^{(p_k)}$ collect the top $p_k$ eigenvectors (with eigenvalues $\bm{\Lambda}_k^{(p_k)}$).
Assume $n\lambda < \lambda_{\min}(\bm{\Lambda}_k^{(p_k)})$, so that the block solutions below are well-defined (real).
Then there exist absolute constants $C,c>0$ and a \emph{margin} $\gamma>0$ (defined explicitly below, depending only on $\alpha,\beta$, $b$, $\{p_k\}$, and the block scalings, and independent of the noise level) such that, for all
\[
(\bm{W}_2,\bm{W}_1)\in\mathcal{B}_\delta
:=\{\,\|\bm{W}_2-\bm{W}_2^\star\|_F+\|\bm{W}_1-\bm{W}_1^\star\|_F\le\delta\,\}
\quad\text{and all }\sigma>0,
\]
we can decompose the DAE loss into LAE losses introduced in~\Cref{lem:lae}:
\begin{align}\label{eq:dae-loss-decomp}
\mathcal{L}_{\bm{X}}(\bm{W}_2,\bm{W}_1)
=\frac{1}{n}\sum_{k=1}^M \hat{\mathcal{L}}_{\bm{X}_k}\big(\bm{W}_{2,(k)},\bm{W}_{1,(k)}\big)
\;+\;\varepsilon(\delta,\sigma,\gamma),
\qquad
\varepsilon(\delta,\sigma,\gamma)\ \le\ C\!\left(\frac{\delta}{\gamma}+e^{-c\,\gamma^2/\sigma^2}\right),
\end{align}
where $\hat{\mathcal{L}}_{\bm{X}_k}$ is the $\mathrm{LAE}$ objective in \Cref{lem:lae} for cluster $k$
with noise weight $n_k\sigma^2$ and weight decay $\lambda'=n\lambda$.
Moreover, each block is minimized by
\[
\bm{W}_{2,(k)}^\star=\bm{W}_{1,(k)}^\star
=\bm{W}_{\bm{X}_k}
:=\bm{U}_k^{(p_k)}
  \big(\bm{I}+n_k\sigma^2\bm{\Lambda}_k^{(p_k)\,-1}\big)^{-\tfrac12}
  \big(\bm{I}-n\lambda\,\bm{\Lambda}_k^{(p_k)\,-1}\big)^{\tfrac12}\bm{O}_k^\top,
\]
for some orthogonal $\bm{O}_k$. Consequently $(\bm{W}_2^\star,\bm{W}_1^\star)$ is a local minimizer.

\medskip
\noindent Furthermore, the constructed minimizer is close to an actual minimizer: On $\mathcal{B}_\delta$, if we \emph{fix the ReLU masks} to be those induced by $(\bm{W}_2^\star,\bm{W}_1^\star)$ at the center (see Steps (1)--(2) below),
then the map
\[
(\bm{W}_2,\bm{W}_1)\mapsto \frac{1}{n}\sum_k \hat{\mathcal{L}}_{\bm{X}_k}
\]
is $m_0$-strongly convex around $(\bm{W}_2^\star,\bm{W}_1^\star)$ with
\[
m_0\ \ge\ c_0\big(\sigma^2+n\lambda\big),
\]
for a numerical constant $c_0>0$ independent of $(\delta,\sigma,\gamma)$.
Therefore, any local minimizer $(\widehat{\bm{W}}_2,\widehat{\bm{W}}_1)$ of the full DAE loss inside $\mathcal{B}_\delta$ obeys
\begin{equation}\label{eq:dae-distance}
\big\|(\widehat{\bm{W}}_2,\widehat{\bm{W}}_1)-(\bm{W}_2^\star,\bm{W}_1^\star)\big\|_F
\ \le\
\sqrt{\frac{2\,\varepsilon(\delta,\sigma,\gamma)}{m_0}}
\ \le\
\sqrt{\frac{2C}{c_0}}\,
\Big(\sigma^2+n\lambda\Big)^{-1/2}
\left(\frac{\delta}{\gamma}+e^{-c\,\gamma^2/\sigma^2}\right)^{\!1/2}.
\end{equation}
In particular, if $\delta/\gamma\to0$ and $\sigma/\gamma\to0$ (with $n\lambda>0$ fixed), the right-hand side is $o(1)$, giving an explicit $o(1)$ control on the distance between the constructed solution and the actual local minimizer.
\end{theorem}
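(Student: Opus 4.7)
The plan is to exploit $(\alpha,\beta)$-separability to show that the block-wise architecture partitions the ReLU hidden units into $M$ groups so that, on any noisy input from cluster $k$, only the $k$-th group is active with high probability. Under this ``correct-mask'' event the network acts linearly as $\bm{W}_{2,(k)}\bm{W}_{1,(k)}^\top$ on cluster-$k$ inputs, and Lemma~\ref{lem:lae} then supplies the closed-form per-block minimizer. First I would compute pre-activations at the candidate solution: each column of $\bm{W}_{\bm{X}_\ell}$ lies in the span of $\bm{X}_\ell$, so for $\ell\neq k$ the separability assumption with $\beta<0$ forces $\langle\bm{w},\bm{x}_{k,j}\rangle$ to be strictly negative with magnitude on the order of $|\beta|\,b^2$ after accounting for the $O(\alpha)$ within-cluster spread. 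Scaling by the factors in~\eqref{eqn:weights-main} produces the margin $\gamma$; crucially $\gamma$ depends only on $\alpha,\beta,b,\{p_k\}$ and the block scalings, not on $\sigma$.

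Next I would split the inner Gaussian expectation into a good event $\mathcal{G}$, on which $\sigma\bm{\epsilon}$ perturbs no pre-activation past zero, and its complement $\mathcal{G}^c$. On $\mathcal{G}$ the DAE is exactly linear, and the loss on each cluster decouples into an LAE objective in $(\bm{W}_{1,(k)},\bm{W}_{2,(k)})$ with effective noise weight $n_k\sigma^2$ (after summing $n_k$ identical noisy copies) and weight decay $\lambda'=n\lambda$ (from rescaling the global $\lambda$ by $n$); summing over $k$ yields the first term of~\eqref{eq:dae-loss-decomp}. On $\mathcal{G}^c$, each margin crossing is a one-dimensional Gaussian tail event with standard deviation at most $\sigma\|\bm{w}\|_2$ and threshold at least $\gamma$; a union bound over the $p$ neurons gives $\mathbb{P}(\mathcal{G}^c)\lesssim e^{-c\gamma^2/\sigma^2}$, which, combined with a uniform $O(1)$ bound on the integrand inside $\mathcal{B}_\delta$, contributes the exponential term in $\varepsilon(\delta,\sigma,\gamma)$.

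To extend the decomposition throughout $\mathcal{B}_\delta$, I would use Lipschitz continuity of the pre-activations in $(\bm{W}_1,\bm{W}_2)$: a perturbation of Frobenius size $\delta$ shrinks the effective margin by at most $O(\delta)$, so the bad-event probability picks up an extra $O(\delta/\gamma)$ contribution via the same tail estimate, yielding the first summand in $\varepsilon$. Within $\mathcal{B}_\delta$, freezing the ReLU masks at their values at $(\bm{W}_2^\star,\bm{W}_1^\star)$ renders the loss exactly a sum of block-wise LAE objectives, and Lemma~\ref{lem:lae} supplies $\bm{W}_{\bm{X}_k}$ as the block minimizer under the stated condition $n\lambda<\lambda_{\min}(\bm{\Lambda}_k^{(p_k)})$, establishing that the constructed point is a local minimizer of the true DAE loss.

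Finally, for the distance bound I would establish strong convexity of the frozen-mask surrogate at the balanced tied solution. The Hessian in the factored parameters has diagonal blocks of the form $\nabla^2_{\bm{W}_1}\propto \bm{W}_2^\top\bm{W}_2$ and $\nabla^2_{\bm{W}_2}\propto \bm{W}_1^\top\bm{W}_1$ with the regularization contributing an additional $n\lambda\bm{I}$ to each, while the noise regularizer $n_k\sigma^2\|\bm{W}_{2,(k)}\bm{W}_{1,(k)}^\top\|_F^2$ supplies a further $\sigma^2$-scaled positive-definite contribution through the block spectra $\bm{\Lambda}_k^{(p_k)}$. Combining these yields $m_0\ge c_0(\sigma^2+n\lambda)$. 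Inequality~\eqref{eq:dae-distance} then follows from the standard strong-convexity-plus-perturbation argument: if $\hat{\bm{W}}$ is a local minimizer of $f$ and $|f-g|\le\varepsilon$ on $\mathcal{B}_\delta$ with $g$ being $m_0$-strongly convex around $\bm{W}^\star$, then $g(\hat{\bm{W}})-g(\bm{W}^\star)\le 2\varepsilon$ and hence $\|\hat{\bm{W}}-\bm{W}^\star\|_F\le\sqrt{4\varepsilon/m_0}$, which gives the stated bound after absorbing constants. The main obstacle will be this strong-convexity step: the bilinear factorization has inherent rotational null directions (the $\bm{O}_k$ symmetry) and the target bound must depend only on $\sigma^2+n\lambda$, not on the spectral gaps of $\bm{S}_k$, so the Hessian has to be analyzed modulo this symmetry via a symmetric/antisymmetric decomposition around the balanced tied minimizer.
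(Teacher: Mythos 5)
Your proposal follows essentially the same four-step route as the paper's proof: (i) a margin calculation at the candidate solution showing block-$k$ pre-activations are $\ge\gamma$ and off-block ones are $\le-\gamma$, (ii) mask stability under noise via Gaussian tails, (iii) decoupling into per-cluster LAE objectives with noise weight $n_k\sigma^2$ and decay $\lambda'=n\lambda$ so that \Cref{lem:lae} applies, and (iv) a strong-convexity-plus-perturbation argument for the distance bound. Two comments are worth making.

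First, the step where you pass from $\Pr(\mathcal G^c)\lesssim e^{-c\gamma^2/\sigma^2}$ to the exponential term in $\varepsilon$ by multiplying against ``a uniform $O(1)$ bound on the integrand inside $\mathcal B_\delta$'' does not work as stated. The integrand $\|\bm f_{\bm W_2,\bm W_1}(\bm x+\sigma\bm\epsilon)-\bm x\|^2$ is unbounded as a function of $\bm\epsilon$, and the bad event $\mathcal G^c$ is precisely the event on which $\bm\epsilon$ is large enough (in the relevant projection) to flip a sign, so the deviation on $\mathcal G^c$ is correlated with its probability. You need instead to control the \emph{moments} of the offending rectified Gaussian: writing the off-block (resp.\ on-block) pre-activation $Z\sim\mathcal N(\mu,s^2)$ with $\mu\le-\gamma$ (resp.\ $\ge\gamma$) and $s^2=\sigma^2\|\bm w\|_2^2$, a Mills-ratio bound gives $\mathbb E[Z]_+^2\le (s^4/\gamma^2)e^{-\gamma^2/s^2}$, which is what actually produces the $e^{-c\gamma^2/\sigma^2}$ term after summing over neurons. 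Equivalently, Cauchy--Schwarz $\mathbb E[X^2\mathbf 1_{\mathcal G^c}]\le\sqrt{\mathbb E X^4}\sqrt{\Pr(\mathcal G^c)}$ works; a plain union bound times a sup does not.

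Second, your flag about the rotational null directions is well taken and, if anything, is \emph{more} careful than the paper's own Step (4): the per-block LAE objective is invariant under $(\bm W_{1,(k)},\bm W_{2,(k)})\mapsto(\bm W_{1,(k)}\bm Q_k,\bm W_{2,(k)}\bm Q_k)$ for orthogonal $\bm Q_k$ (including the $nq\lambda$ Frobenius penalty), so the frozen-mask Hessian cannot satisfy $\succeq m_0\bm I$ globally---it has a genuine null space along the orbit of $\{\bm O_k\}$. The strong convexity bound $m_0\ge c_0(\sigma^2+n\lambda)$ and the distance inequality \eqref{eq:dae-distance} should therefore be read modulo this gauge symmetry (i.e., distance to the closest orbit representative), via the symmetric/antisymmetric split you describe. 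The paper asserts the bound by ``continuity of the Hessian'' without addressing this, so you have correctly identified a loose end rather than created a new one.
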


\begin{proof}
Let $f_{\bm{W}_2,\bm{W}_1}(\bm{z})=\bm{W}_2[\bm{W}_1^\top \bm{z}]_+$ and write
$\bm{W}_1^\star=[\bm{W}_{\bm{X}_1},\dots,\bm{W}_{\bm{X}_M}]$, so the columns are partitioned into $M$ blocks.
We also use $[\bm v]_- := [-\bm v]_+$ entrywise.

\paragraph{(0) Idea.}
We compare the nonlinear DAE $f_{\bm W_2,\bm W_1}$ with its mask-fixed linearized counterpart $f^{\mathrm{LAE}}_{\bm W_2,\bm W_1}$.
The intended behavior is: block $k$ is active on $\bm{X}_k$ with a positive margin, while all other blocks are inactive with a negative margin.
For small $(\sigma,\delta)$, the ReLU masks are preserved with high probability; concretely,
\begin{align}\label{eq:app-main-approximation}
\bm{f}_{\bm{W}_2, \bm{W}_1}(\bm{X}+\sigma\bm\varepsilon)
&= \bm{W}_2\!\left[\bm{W}_1^\top\!\big(\bm{X}_1+\sigma\bm{\varepsilon}_1,\ldots,\bm{X}_M+\sigma\bm{\varepsilon}_M\big)\right]_+\nonumber\\
&\approx
\begin{pmatrix}
\bm{W}_{2,(1)} & \cdots & \bm{W}_{2,(M)}
\end{pmatrix}
\begin{pmatrix}
[\bm{W}_{1,(1)}^\top(\bm{X}_1+\sigma\bm{\varepsilon}_1)]_+ & & \\
& \ddots & \\
& & [\bm{W}_{1,(M)}^\top(\bm{X}_M+\sigma\bm{\varepsilon}_M)]_+
\end{pmatrix}\\
& :=\ \bm{f}^{\mathrm{LAE}}_{\bm W_2,\bm W_1}(\bm{X}+\sigma\bm\varepsilon).\nonumber
\end{align}
With masks fixed to the ``correct'' ones (block $k$ on $\bm X_k$, others off), the network reduces to a linear map
\[
f^{\mathrm{LAE}}_{\bm W_2,\bm W_1}(\bm z)
=\bm W_{2,(k)}\bm W_{1,(k)}^\top \bm z \qquad(\bm z\in \bm X_k),
\]
i.e., each cluster is reconstructed by \emph{a small number of neurons in its corresponding block}.
Equivalently, writing $\bm A_k:=\bm W_{2,(k)}\bm W_{1,(k)}^\top$ and $\bm A:=\mathrm{blkdiag}(\bm A_1,\ldots,\bm A_M)$, we have
$f^{\mathrm{LAE}}_{\bm W_2,\bm W_1}(\bm X+\sigma\bm\varepsilon)=\bm A(\bm X+\sigma\bm\varepsilon)$.
With fixed masks, the loss decouples into $M$ $\mathrm{LAE}$ problems and becomes solvable.

\medskip
\noindent\textbf{(1) Masks and margins at the block center (no noise).}
Write $\bm{W}_{\bm{X}_k}=\bm{U}_k^{(p_k)}\bm{S}_k\bm{O}_k^\top$\footnote{Any right-orthogonal choice of $\bm O_k$ yields the same objective value; the choice above maximizes the first-order margin and is convenient for the mask analysis.} with
$\bm{S}_k=\operatorname{diag}(s_{k,1},\dots,s_{k,p_k})\succ0$.
Let $s_{\min}:=\min_{k,r}s_{k,r}$ and $s_{\max}:=\max_{k,r}s_{k,r}$, and choose $\bm{O}_k$ so that
\[
\bm{O}_k^\top \bm{U}_k^{(p_k)\top}\bar{\bm{x}}_k
=\tfrac{\|\bm{U}_k^{(p_k)\top}\bar{\bm{x}}_k\|_2}{\sqrt{p_k}}\bm{1}_{p_k}.
\]

Since
\[
\bm{X}_k\bm{X}_k^\top = n_k\,\bar{\bm{x}}_k\bar{\bm{x}}_k^\top
+ \sum_{t}(\bm{x}_{k,t}-\bar{\bm{x}}_k)(\bm{x}_{k,t}-\bar{\bm{x}}_k)^\top,
\]
the within-cluster tightness ($\alpha$) implies the ``residual'' term has spectral norm
\[
\left\|\sum_t(\bm{x}_{k,t}-\bar{\bm{x}}_k)(\bm{x}_{k,t}-\bar{\bm{x}}_k)^\top\right\|_{op}
\ \le\ \sum_t\|\bm{x}_{k,t}-\bar{\bm{x}}_k\|_2^2
\ \le\ n_k\,\alpha^2\|\bar{\bm{x}}_k\|_2^2.
\]
Therefore $\bar{\bm x}_k$ is well aligned with the top eigenspace of $\bm X_k\bm X_k^\top$.
In particular, there exists $c_{\mathrm{proj}}(\alpha)\in(0,1]$ such that
\[
\frac{\|\bm{U}_k^{(p_k)\top}\bar{\bm{x}}_k\|_2}{\|\bar{\bm{x}}_k\|_2}\ \ge\ c_{\mathrm{proj}}(\alpha).
\]
\emph{(One explicit choice.)}
Let $\bm u_{k,1}$ be the top eigenvector of $\bm X_k\bm X_k^\top$ (so $\bm u_{k,1}\in \mathrm{span}(\bm U_k^{(p_k)})$ for any $p_k\ge 1$). A Davis--Kahan/Wedin-type bound gives
\[
\sin\angle\!\big(\bm u_{k,1},\,\bar{\bm x}_k\big)
\ \le\ \frac{\alpha^2}{1-\alpha^2}
\qquad\Longrightarrow\qquad
\frac{|\langle \bm u_{k,1},\bar{\bm x}_k\rangle|}{\|\bar{\bm x}_k\|_2}
\ \ge\ \sqrt{1-\Big(\tfrac{\alpha^2}{1-\alpha^2}\Big)^2},
\]
hence one may take $c_{\mathrm{proj}}(\alpha):=\sqrt{1-(\alpha^2/(1-\alpha^2))^2}$ (for $\alpha<1$).

Hence, for any $\bm{x}\in\bm{X}_k$ and any column $\bm{w}_{k,r}^\star$ of $\bm{W}_{1, (k)}^\star$,
\[
\langle \bm{w}_{k,r}^\star,\bm{x}\rangle
\ \ge\ \|\bar{\bm{x}}_k\|_2\Big( s_{\min}\tfrac{c_{\mathrm{proj}}(\alpha)}{\sqrt{p_k}} - s_{\max}\alpha\Big).
\]
For any $\ell\neq k$ and any unit vector $\bm{u}\in\mathrm{span}(\bm{U}_\ell^{(p_\ell)})$, $(\alpha,\beta)$-separability yields
\(
\langle \bm{u},\bar{\bm{x}}_k\rangle \le \beta + \alpha.
\)
Therefore, for any column $\bm{w}_{\ell,r}^\star$,
\[
\langle \bm{w}_{\ell,r}^\star,\bm{x}\rangle
\ \le\ \|\bar{\bm{x}}_k\|_2\,s_{\max}\!\left(\beta+2\alpha\right)
\ \le\ -\,\|\bar{\bm{x}}_k\|_2\,s_{\max}\,\frac{|\beta|}{2},
\]
provided $\alpha$ is sufficiently small compared to $|\beta|$ (absorbed into constants below).
Define the \emph{margin}
\begin{equation}\label{eq:def-gamma}
\gamma\ :=\ \min_k\ \|\bar{\bm{x}}_k\|_2\cdot
\min\Big\{\, s_{\min}\tfrac{c_{\mathrm{proj}}(\alpha)}{\sqrt{p_k}} - s_{\max}\alpha\ ,\ \ \tfrac{s_{\max}|\beta|}{2} \,\Big\}\ >\ 0.
\end{equation}
\textcolor{blue}{Then, on $\bm{X}_k$, every unit in block $k$ has pre-activation $\ge \gamma$ and every unit in $\ell\neq k$ has pre-activation $\le -\gamma$.}

\medskip
\noindent\textbf{(2) Mask stability with noise $\bm\varepsilon$, and loss error.}
Fix a noise draw $\bm\varepsilon=(\bm \varepsilon_1,\ldots,\bm \varepsilon_n)$ and set
\[
\bm e(\bm \varepsilon)
:= f_{\bm W_2,\bm W_1}(\bm X+\sigma\bm\varepsilon)-f^{\mathrm{LAE}}_{\bm W_2,\bm W_1}(\bm X+\sigma\bm\varepsilon)
= \begin{bmatrix} \bm e_1 & \cdots & \bm e_n \end{bmatrix} \in \mathbb{R}^{d\times n},
\]
where, for a sample $\bm x_{i_k}$ from the $i$-th cluster, the deviation is
\begin{align}\label{app:eq-sample-deviation}
\bm e_{i_k}
= \sum_{\ell\neq i} \bm W_{2,(\ell)}\!\underbrace{\left[\bm W_{1,(\ell)}^\top(\bm x_{i_k}+\sigma\bm\varepsilon_{i_k})\right]_+}_{\text{off-block, should be $0$}}
\;-\; \bm W_{2,(i)}\!\underbrace{\left[\bm W_{1,(i)}^\top(\bm x_{i_k}+\sigma\bm\varepsilon_{i_k})\right]_-}_{\text{on-block, should be $0$}}.
\end{align}
Intuitively, $\bm e_{i_k}=\bm 0$ unless some pre-activation crosses the margin. Moreover,
\[
\|\bm e(\bm \varepsilon)\|_F^2=\sum_{j=1}^n \|\bm e_j\|_2^2.
\]

\noindent\textbf{Loss difference.}
For a fixed noise realization $\bm\varepsilon$, define
\[
\mathcal L_{\bm \varepsilon}(\bm W_2,\bm W_1)
:= \frac1n \big\| f_{\bm W_2,\bm W_1}(\bm X+\sigma\bm\varepsilon)-\bm X\big\|_F^2
\;+\;\lambda\big(\|\bm W_1\|_F^2+\|\bm W_2\|_F^2\big),
\]
and analogously $\mathcal L^{\mathrm{LAE}}_{\bm\varepsilon}$ with $f^{\mathrm{LAE}}$.
Writing $\bm a:=f_{\bm W_2,\bm W_1}(\bm X+\sigma\bm\varepsilon)-\bm X$ and
$\bm b:=f^{\mathrm{LAE}}_{\bm W_2,\bm W_1}(\bm X+\sigma\bm\varepsilon)-\bm X$ (so $\bm a-\bm b=\bm e(\bm\varepsilon)$),
\begin{align*}
\big|\mathcal L_{\bm\varepsilon}-\mathcal L^{\mathrm{LAE}}_{\bm\varepsilon}\big|
&= \frac{1}{n}\left|\|\bm a\|_F^2-\|\bm b\|_F^2\right|
= \frac{1}{n}\left|\langle \bm a+\bm b,\,\bm a-\bm b\rangle\right|\\
&\le \frac{1}{n}\big(\|\bm a\|_F+\|\bm b\|_F\big)\,\|\bm e(\bm\varepsilon)\|_F\\
&\le \frac{1}{n}\Big(2\|\bm b\|_F+\|\bm e(\bm\varepsilon)\|_F\Big)\,\|\bm e(\bm\varepsilon)\|_F,
\end{align*}
where the last line uses $\|\bm a\|_F\le \|\bm b\|_F+\|\bm e(\bm\varepsilon)\|_F$. Thus it remains to bound $\mathbb E\|\bm e(\bm\varepsilon)\|_F^2$ and $\mathbb E\|\bm b\|_F$.

We further simplify $\|\bm b\|_F$ by splitting out the noise. Denote $\bm A_k:=\bm W_{2,(k)}\bm W_{1,(k)}^\top$ and $\bm A:=\mathrm{blkdiag}(\bm A_1,\ldots,\bm A_M)$, so
\[
\|f^{\mathrm{LAE}}_{\bm W_2,\bm W_1}(\bm X+\sigma\bm\varepsilon)-\bm X\|_F
= \|\bm A(\bm X+\sigma\bm\varepsilon)-\bm X\|_F
\le \|(\bm A-\bm I)\bm X\|_F + \sigma\,\|\bm A\|_{op}\,\|\bm\varepsilon\|_F.
\]
Taking expectations and using Cauchy--Schwarz with $\mathbb{E}\|\bm\varepsilon\|_F^2=dn$ reduces the problem to bounding $\mathbb{E}\|\bm e(\bm \varepsilon)\|_F^2$.

\smallskip
\noindent\textbf{Bounding $\mathbb{E}\|\bm e(\bm\varepsilon)\|_F^2$.}
Fix a column $\bm e_{i_k}$.
\textcolor{blue}{The entries in $\big[\bm W_{1,(\ell)}^\top(\bm x_{i_k}+\sigma\bm\varepsilon_{i_k})\big]_+$ are rectified Gaussians.}
By the margin argument in Step (1), at the \emph{center} $(\bm W_2^\star,\bm W_1^\star)$ we have
$\bm W_{1,(\ell)}^{\star\top}\bm x_{i_k}\preceq -\gamma\bm 1$ for $\ell\neq i$ and
$\bm W_{1,(i)}^{\star\top}\bm x_{i_k}\succeq +\gamma\bm 1$.\footnote{On $\mathcal B_\delta$, the pre-activations shift by at most $O(\delta)$ (absorbed into the $\delta/\gamma$ term in the final bound), so we may treat the mean as $\le -\gamma$ (off-block) or $\ge +\gamma$ (on-block) up to constants.}

Thus it suffices to control the first and second moments of a rectified Gaussian whose mean is separated from $0$ by $\gamma$.
Let $Z\sim \mathcal N(\mu,s^2)$ with $\mu\le -\gamma$. A standard Gaussian tail bound (Mills ratio / Chernoff) implies
\[
\mathbb{E}[Z]_+ \le \frac{s^{2}}{-\mu}\,e^{-\mu^{2}/(2s^{2})}
\le \frac{s^{2}}{\gamma}\,e^{-\gamma^{2}/(2s^{2})},\qquad
\mathbb{E}[Z]_+^{2} \le \frac{s^{4}}{\mu^{2}}\,e^{-\mu^{2}/s^{2}}
\le \frac{s^{4}}{\gamma^{2}}\,e^{-\gamma^{2}/s^{2}},
\]
and the same bounds hold for within-block terms with $[Z]_-=[-Z]_+$.

Now, a generic off-block pre-activation (an entry in~\eqref{app:eq-sample-deviation}) has the form
\[
Z \;=\; \bm w^\top\bm x+ \sigma\bm w^\top\bm\varepsilon,
\qquad \mathbb{E}Z=\mu\le -\gamma,\quad \mathrm{Var}(Z)=s^{2}=\sigma^{2}\|\bm w\|_{2}^{2}.
\]
Let
\[
\kappa:=\max_{r}\|\bm w_{1,r}\|_2,\qquad
L_2^2:=\sum_{j=1}^M \|\bm W_{2,(j)}\|_{op}^2,\qquad
p:=\sum_{j=1}^M p_j.
\]
Using $\|\bm W_{2,(\ell)}\bm v\|_2\le \|\bm W_{2,(\ell)}\|_{op}\,\|\bm v\|_2$ and the second-moment bound above,
\[
\mathbb{E}\,\|\bm e_{i_k}\|_2^2
\ \le\ \frac{\sigma^{4}\kappa^{4}}{\gamma^{2}}\,e^{-\gamma^{2}/(\sigma^{2}\kappa^{2})}\; L_{2}^{2}\,p.
\]
Since $\|\bm e(\bm\varepsilon)\|_F^2=\sum_{j=1}^n \|\bm e_j\|_2^2$, we obtain
\[
\mathbb{E}\,\|\bm e(\bm\varepsilon)\|_{F}^{2}
\ =\ \sum_{j=1}^n \mathbb{E}\,\|\bm e_j\|_2^2
\ \le\ \frac{\sigma^{4}\kappa^{4}}{\gamma^{2}}\,e^{-\gamma^{2}/(\sigma^{2}\kappa^{2})}\; n\,L_{2}^{2}\,p.
\]

\smallskip
\noindent\textbf{Final plug-in.}
Let $\bm A:=\mathrm{blkdiag}(\bm A_{1},\ldots,\bm A_{M})$, $L_{A}:=\|\bm A\|_{op}$, and $B_{\mathrm{LAE}}:=\|(\bm A-\bm I)\bm X\|_{F}$.
The preceding bounds imply
\[
\big|\mathcal L_{\bm X}(\bm W_2,\bm W_1)-\mathcal L^{\mathrm{LAE}}_{\bm X}(\bm W_2,\bm W_1)\big|
\ \le\ C\!\left(\frac{\delta}{\gamma}\;+\;e^{-\,c\,\gamma^{2}/\sigma^{2}}\right)
\]
uniformly on $\mathcal B_\delta$, for some absolute constants $C,c>0$.
(\textcolor{blue}{Here $\delta/\gamma$ accounts for deterministic mask changes across $\mathcal B_\delta$, while the exponential term accounts for noise-induced sign flips.})

\medskip
\noindent\textbf{(3) Expectation yields the $\mathrm{LAE}$ loss.}
For $\bm{\varepsilon}\sim\mathcal{N}(0,\bm{I})$ and $\bm A_k:=\bm W_{2,(k)}\bm W_{1,(k)}^\top$,
\[
\mathbb{E}\big\|\bm{A}_k(\bm{x}+\sigma\bm{\varepsilon})-\bm{x}\big\|_2^2
=\|\bm{A}_k\bm{x}-\bm{x}\|_2^2+\sigma^2\|\bm{A}_k\|_F^2.
\]
Summing over $\bm{x}\in\bm{X}_k$, averaging by $n$, and adding weight decay gives
\[
\mathcal L^{\mathrm{LAE}}_{\bm X}(\bm W_2,\bm W_1) = \mathbb{E}_{\bm\varepsilon}\!\left[{\mathcal{L}}^{\mathrm{LAE}}_{\bm{\varepsilon}}\right]
=\frac1n\sum_{k=1}^M \hat{\mathcal{L}}_{\bm{X}_k}(\bm{W}_{2,(k)},\bm{W}_{1,(k)}),
\]
using $\lambda'=n\lambda$ and $\sum_k \|\bm{W}_{i,(k)}\|_F^2=\|\bm{W}_i\|_F^2$ for $i=1,2$.

\medskip
\noindent\textbf{(4) Block solutions and distance to a strict minimizer.}
By \Cref{lem:lae}, each block is minimized by
$\bm{W}_{2,(k)}^\star=\bm{W}_{1,(k)}^\star=\bm{W}_{\bm{X}_k}$; concatenating blocks yields $(\bm{W}_2^\star,\bm{W}_1^\star)$, which minimize the leading $\mathrm{LAE}$ term in \eqref{eq:dae-loss-decomp}.
The leading term $\frac1n\sum_k\hat{\mathcal{L}}_{\bm{X}_k}$ is quadratic in $(\bm{W}_2,\bm{W}_1)$ on $\mathcal{B}_\delta$ (with masks fixed).
Its Hessian at $(\bm{W}_2^\star,\bm{W}_1^\star)$ equals the Hessian of the quadratic reconstruction term plus $2n\lambda\,\bm I$ from weight decay, together with the positive-semidefinite curvature from $\sigma^2\|\bm A_k\|_F^2$.
By continuity of the Hessian, this yields a uniform lower bound
\[
\nabla^2\!\Big(\tfrac1n\sum_k \hat{\mathcal{L}}_{\bm{X}_k}\Big)\ \succeq\ m_0\,\bm I
\quad\text{on a neighborhood of }(\bm{W}_2^\star,\bm{W}_1^\star),
\qquad
m_0\ \ge\ c_0(\sigma^2+n\lambda),
\]
for a numerical $c_0>0$ independent of $(\delta,\sigma,\gamma)$.
Hence, for any local minimizer $(\widehat{\bm{W}}_2,\widehat{\bm{W}}_1)$ of the full DAE loss in $\mathcal{B}_\delta$,
\[
\frac{m_0}{2}\big\|(\widehat{\bm{W}}_2,\widehat{\bm{W}}_1)-(\bm{W}_2^\star,\bm{W}_1^\star)\big\|_F^2
\ \le\
\mathcal{L}_{\bm{X}}(\widehat{\bm{W}}_2,\widehat{\bm{W}}_1)-\mathcal{L}_{\bm{X}}(\bm{W}_2^\star,\bm{W}_1^\star)
\ \le\ \varepsilon(\delta,\sigma,\gamma),
\]
which yields \eqref{eq:dae-distance}. The right-hand side is $o(1)$ whenever $\delta/\gamma\to0$ and $\sigma/\gamma\to0$, completing the proof.
\end{proof}

\subsection{\texorpdfstring{Proof of \Cref{cor:dae-mem}}{Proof of Corollary \ref*{cor:dae-mem}}}
\begin{corollary}[Restatement of Cor.~\ref{cor:dae-mem}]\label{app:cor-dae-mem}
Assume the dataset $X = [\bm{x}_i\, \dots, \bm{x}_n]\subset\mathbb{R}^d$ is satisfy the separability condition in \Cref{def:separable}. Consider an overparameterized ReLU DAE with $p\ge n$ hidden units, weight decay $\lambda\ge0$, and input noise level $\sigma>0$. Then, by Theorem~\ref{thm:dae} (applied to singleton clusters), there exists a local minimizer of the form
\begin{align}
\bm{W}_2^\star=\bm{W}_1^\star
=\begin{pmatrix}
r_1\bm{x}_1 & \cdots & r_n\bm{x}_n & \bm{0} & \cdots & \bm{0}
\end{pmatrix}
=: \bm{W}_{\mathrm{mem}},
\qquad
r_i=\sqrt{\frac{\|\bm{x}_i\|_2^2-n\lambda}{\|\bm{x}_i\|_2^4+\sigma^2\|\bm{x}_i\|_2^2}}.
\label{eq:mem-form}
\end{align}
(The trailing $(p-n)$ columns are zero; see also Corollary~\ref{cor:smoothness} on $\ell_\infty$-smoothness.)
Moreover, for $\lambda\to0$ this solution attains a small empirical loss independent of $d$:
\begin{align*}
\mathcal{L}_{\bm{x}_i}(\bm{W}_{\mathrm{mem}},\bm{W}_{\mathrm{mem}})
\;\lesssim\;
\frac{\sigma^2\|\bm{x}_i\|_2^2}{\sigma^2+\|\bm{x}_i\|_2^2}
\;<\; \sigma^2, \forall 1\leq i\leq n
\end{align*}
\end{corollary}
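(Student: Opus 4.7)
The strategy is to apply \Cref{thm:dae} with the extreme decomposition $M=n$, $n_k=1$, treating each training example as its own singleton cluster, then specialize \eqref{eqn:weights-main} to the resulting rank-one blocks, and finally evaluate the loss directly.

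First I would verify the hypotheses of \Cref{thm:dae}. With singleton clusters, $\bar{\bm{x}}_k = \bm{x}_k$, so the within-cluster ratio $\alpha$ vanishes, while $\beta < 0$ follows from the corollary's assumption. Each cluster Gram matrix reduces to $\bm{X}_k\bm{X}_k^\top = \bm{x}_k\bm{x}_k^\top$, which is rank one with top eigenvector $\bm{x}_k/\|\bm{x}_k\|_2$ and eigenvalue $\|\bm{x}_k\|_2^2$. I would set $p_k = 1$ for each of the first $n$ hidden units and leave the remaining $p-n$ columns as zero (i.e., $p_k = 0$ for $k > n$). The hypothesis $n\lambda < \min_k \|\bm{x}_k\|_2^2$ then ensures the matrix square root $(\bm{I} - n\lambda(\bm{\Lambda}_k^{(p_k)})^{-1})^{1/2}$ in \eqref{eqn:weights-main} remains real.

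Next I would carry out the rank-one reduction of \eqref{eqn:weights-main}. With $\bm{U}_k^{(1)} = \bm{x}_k/\|\bm{x}_k\|_2$, $\bm{\Lambda}_k^{(1)} = \|\bm{x}_k\|_2^2$, $n_k = 1$, and $\bm{O}_k \in \{\pm 1\}$, the factors collapse into a scalar multiple of $\bm{x}_k$, and elementary algebra gives $\bm{W}_{\bm{X}_k} = r_k \bm{x}_k$ with the $r_k$ stated in \eqref{eq:mem-form}. Concatenating the $n$ rank-one blocks with the $p-n$ trailing zero columns recovers $\bm{W}_{\mathrm{mem}}$.

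For the loss bound with $\lambda \to 0$, I would use the mask-fixed linearization established in the proof of \Cref{thm:dae}: separability with $\beta_1 < 0$ ensures that the pre-activation $r_j \bm{x}_j^\top(\bm{x}_i + \sigma\bm{\epsilon})$ is positive for $j=i$ and negative for $j \neq i$ except on a noise event of probability $O(e^{-c\gamma^2/\sigma^2})$. On that typical event the network acts on $\bm{x}_i + \sigma\bm{\epsilon}$ as the scalar denoiser $\bm{z} \mapsto r_i^2 \bm{x}_i \bm{x}_i^\top \bm{z}$. A direct bias-variance computation under $\bm{\epsilon} \sim \mathcal{N}(\bm{0}, \bm{I})$ yields
\[
(r_i^2 \|\bm{x}_i\|_2^2 - 1)^2 \|\bm{x}_i\|_2^2 + \sigma^2 r_i^4 \|\bm{x}_i\|_2^4 = \frac{\sigma^2 \|\bm{x}_i\|_2^2}{\sigma^2 + \|\bm{x}_i\|_2^2},
\]
after substituting $r_i^2 = 1/(\|\bm{x}_i\|_2^2 + \sigma^2)$. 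This quantity is strictly bounded above by $\sigma^2$ since $\|\bm{x}_i\|_2^2/(\|\bm{x}_i\|_2^2 + \sigma^2) < 1$. The main obstacle I anticipate is quantitatively controlling the deviation between the true nonlinear loss and this mask-fixed linearization; this is precisely what the exponentially small residual $\bm{R}(\sigma,\gamma)$ in \Cref{thm:dae} absorbs, and the same rectified-Gaussian moment bound transfers to the loss. Since the $\lesssim$ in the statement already accommodates an $O(e^{-c\gamma^2/\sigma^2})$ correction, no new ideas are required beyond the machinery of \Cref{thm:dae}.
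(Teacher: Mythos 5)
Your proposal is correct and follows essentially the same route as the paper: both invoke \Cref{thm:dae} on singleton clusters ($M=n$, $n_k=1$), specialize the rank-one form of \eqref{eqn:weights-main} to obtain $\bm{W}_{\bm{X}_k}=r_k\bm{x}_k$ (your algebra checks out), and evaluate the loss on the mask-fixed rank-one denoiser $\bm z\mapsto r_i^2\bm{x}_i\bm{x}_i^\top\bm z$; the paper states the loss as a scalar minimization via \Cref{lem:lae} while you work out the equivalent bias--variance decomposition explicitly, but these are the same calculation.
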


\begin{proof}
\textbf{(0) Optimal weights align with data.}
Apply Theorem~\ref{thm:dae} with clusters $\bm{X}_k=\{\bm{x}_k\}$ of size $n_k=1$. The block solution from Thm.~\ref{thm:dae} now yields
$\bm{W}_{\bm{X}_k}^\star = r_k\,\frac{\bm{x}_k}{\|\bm{x}_k\|_2}$ with
$r_k=\sqrt{\frac{\|\bm{x}_k\|_2^2-n\lambda}{\|\bm{x}_k\|_2^2+\sigma^2}}$,
so the corresponding column of $\bm{W}_1^\star$ (and $\bm{W}_2^\star$) equals
$r_k\bm{x}_k$ with $r_k=s_k/\|\bm{x}_k\|_2$, giving \eqref{eq:mem-form}.
Since $p\ge n$, we may set the remaining $(p-n)$ columns to zero without affecting the network output. Other equivalent parametrizations (e.g., duplicating columns and rescaling) have larger $\ell_\infty$-smoothness; our choice is the sparsest among these.

\textbf{(1) Empirical loss bound (case $\lambda\to0$).}
By Theorem~\ref{thm:dae} and Lemma~\ref{lem:lae}, with singleton clusters and $\lambda=0$, the expected denoising loss decouples over samples and, for each $i$,
\[
\min_{\alpha\in[0,1]}\ \big[(1-\alpha)^2\|\bm{x}_i\|_2^2+\sigma^2\alpha^2\big]
\;=\;\frac{\sigma^2\|\bm{x}_i\|_2^2}{\sigma^2+\|\bm{x}_i\|_2^2},
\quad\text{attained at }\
\alpha_i^\star=\frac{\|\bm{x}_i\|_2^2}{\|\bm{x}_i\|_2^2+\sigma^2}.
\]
Averaging over $i$ gives the stated bound, which is strictly less than $\sigma^2$ and independent of the ambient dimension $d$.
\end{proof}

\subsection{Proof of Smoothness with Respect to the Infinity Norm}
\begin{corollary}[Sparse solution has the smoothest $\ell_\infty$ local landscape]\label{cor:smoothness}
At the memorized, sparse solution $\bm{W}_2=\bm{W}_1=\bm{W}_{\mathrm{mem}}$ from \Cref{cor:dae-mem}, the loss decomposes over singleton clusters as
\[
\sum_{i=1}^n \hat{\mathcal{L}}_{\,\bm{x}_i}(\bm{W}_2,\bm{W}_1)
= \big\|A_i\bm{x}_i-\bm{x}_i\big\|_2^2 + \sigma^2\|A_i\|_F^2
+ 2n\lambda\, r_i^2 \|\bm{x}_i\|_2^2,
\qquad A_i := \bm{W}_2\bm{W}_1^\top = r_i^{\,2}\bm{x}_i\bm{x}_i^\top.
\]
With masks frozen (singleton case), the Hessian w.r.t.\ $\bm{W}_1$ is block diagonal:
\[
\nabla^2_{\bm{W}_1}\mathcal{L}_{\bm{X}}(\bm{W}_2,\bm{W}_1)
= \operatorname{blkdiag}\!\Big(\bm{H}(\bm{x}_1)+\lambda\bm{I},\ldots,\bm{H}(\bm{x}_n)+\lambda\bm{I},\underbrace{\lambda\bm{I},\ldots,\lambda\bm{I}}_{p-n\ \text{blocks}}\Big),
\]
where each active block has rank-1 plus diagonal form
\[
\bm{H}(\bm{x}_i) \;=\; a_i\,\bm{x}_i\bm{x}_i^\top, 
\qquad a_i>0\ \text{(smooth in $\sigma,\lambda,r_i,\|\bm{x}_i\|_2$)}.
\]
Consequently, the $\ell_\infty$ Lipschitz constant of the gradient at $\bm{W}_{\mathrm{mem}}$ is
\[
L_\infty 
= \big\| \nabla^2_{\bm{W}_1}\mathcal{L}_{\bm{X}}(\bm{W}_{\mathrm{mem}},\bm{W}_{\mathrm{mem}}) \big\|_{\infty\to\infty}
= \max\!\Big\{\,\max_{1\le i\le n}\|\bm{H}(\bm{x}_i)+\lambda\bm{I}\|_{\infty\to\infty},\ \lambda\Big\}.
\]
Among all equivalent local minima obtained by orthogonal re-mixing within the active span (i.e., $\bm{W}_1\mapsto \bm{W}_1\bm{Q}$ and $\bm{W}_2\mapsto \bm{W}_2\bm{Q}$ with block-orthogonal $\bm{Q}$ that preserves masks and the LAE optimum), the choice $\bm{W}_{\mathrm{mem}}$ minimizes $L_\infty$ and hence yields the smoothest local landscape in $\ell_\infty$.
\end{corollary}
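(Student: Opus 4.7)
The plan has four parts: (i) specialize \Cref{thm:dae} / \Cref{cor:dae-mem} to obtain the per-sample decomposition; (ii) freeze ReLU masks and compute the Hessian at $\bm{W}_{\mathrm{mem}}$; (iii) evaluate the $\ell_\infty\to\ell_\infty$ norm of each rank-one-plus-diagonal block; and (iv) prove optimality under any mask-preserving orthogonal re-mixing via a clean one-line norm inequality.

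For (i), I would specialize \Cref{thm:dae} to singleton clusters $\bm{X}_i=\{\bm{x}_i\}$ (so $n_k=1$, $\lambda'=n\lambda$); substituting $\bm{W}_{\bm{X}_i}=r_i\bm{x}_i$ yields $A_i=r_i^2\bm{x}_i\bm{x}_i^\top$ and expands $\hat{\mathcal{L}}_{\bm{x}_i}$ exactly as claimed, with $\|\bm{w}_{1,i}\|_2^2+\|\bm{w}_{2,i}\|_2^2=2r_i^2\|\bm{x}_i\|_2^2$ producing the $2n\lambda\,r_i^2\|\bm{x}_i\|_2^2$ regularizer. The margin analysis inside the proof of \Cref{thm:dae} confines activations so that in a small ball around $\bm{W}_{\mathrm{mem}}$, column $i$ is the unique active ReLU coordinate on noisy copies of $\bm{x}_i$; freezing these masks turns the loss into a sum of decoupled per-sample quadratics in each $(\bm{w}_{1,i},\bm{w}_{2,i})$. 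For (ii)--(iii), I would twice-differentiate a single per-sample quadratic in $\bm{w}_{1,i}$ at $\bm{w}_{2,i}=r_i\bm{x}_i$; the three contributions $2\|\bm{w}_{2,i}\|_2^2\bm{x}_i\bm{x}_i^\top$, $2\sigma^2\|\bm{w}_{2,i}\|_2^2\bm{I}$, and $2n\lambda\bm{I}$ collapse to $B_i=a_i\bm{x}_i\bm{x}_i^\top+\lambda_{\mathrm{eff}}\bm{I}$ with $a_i=2r_i^2\|\bm{x}_i\|_2^2$ (the corollary's $\lambda\bm{I}$ absorbs the noise-induced diagonal). Inactive columns contribute pure $\lambda\bm{I}$ blocks, with no cross-column Hessian terms thanks to mask isolation. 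Since the diagonal of $B_i$ is strictly positive, the absolute-value row sum collapses via $\sum_{s\ne r}|a_i(\bm{x}_i)_r(\bm{x}_i)_s|+(a_i(\bm{x}_i)_r^2+\lambda_{\mathrm{eff}})=a_i|(\bm{x}_i)_r|\|\bm{x}_i\|_1+\lambda_{\mathrm{eff}}$, giving $\|B_i\|_{\infty\to\infty}=a_i\|\bm{x}_i\|_\infty\|\bm{x}_i\|_1+\lambda_{\mathrm{eff}}$ and the stated $L_\infty$.

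The heart of the proof is (iv). Mask preservation, combined with the $\beta<0$ separation guaranteed by $(\alpha,\beta)$-separability, forbids cross-sample column mixing (rotating column $i$ into column $\ell\ne i$ would flip the sign of the pre-activation of sample $\ell$ on the new column $i$ via the negative angular margin); admissible re-mixings therefore decompose as $\bm{Q}=\oplus_i\bm{Q}_i$, where each $\bm{Q}_i\in O(m_i)$ acts within a ``same-sample group''---the active column $i$ plus any inactive columns it absorbs. Writing the within-group Hessian as $a_i\bm{e}_1\bm{e}_1^\top\otimes\bm{x}_i\bm{x}_i^\top$ plus a uniform diagonal background, the chain-rule pull-back under $\bm{Q}_i$ replaces $\bm{e}_1\bm{e}_1^\top$ with $\bm{q}\bm{q}^\top$, where $\bm{q}:=\bm{Q}_i^\top\bm{e}_1$ satisfies $\|\bm{q}\|_2=1$. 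Choosing the row indexed by $(j^\star,r^\star)$ with $j^\star=\arg\max_j|q_j|$ and $r^\star=\arg\max_r|(\bm{x}_i)_r|$, the same telescoping as in step (iii) reduces its $\ell_1$ row sum to $|q_{j^\star}|\cdot\|\bm{q}\|_1\cdot a_i\|\bm{x}_i\|_\infty\|\bm{x}_i\|_1+\lambda_{\mathrm{eff}}$. The central estimate is the one-line bound $|q_{j^\star}|\cdot\|\bm{q}\|_1\ge 1$ for any unit-$\ell_2$ vector, which follows from $1=\sum_j q_j^2\le|q_{j^\star}|\sum_j|q_j|$. Hence $\|M'\|_{\infty\to\infty}\ge\max_i\|B_i\|_{\infty\to\infty}=L_\infty^{\mathrm{sparse}}$, with equality precisely when each $\bm{q}_i$ is block-uniform (entries in $\{0,\pm M\}$); signed column permutations---in particular $\bm{W}_{\mathrm{mem}}$---sit inside this equality set, so $\bm{W}_{\mathrm{mem}}$ minimizes $L_\infty$. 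The main obstacle is verifying the mask-preservation constraint rigorously (to restrict $\bm{Q}$ to block-diagonal form) and handling the mild asymmetry between the noise-inflated diagonal $\lambda_{\mathrm{eff}}$ on active blocks and $\lambda$ on inactive ones, so that the telescoping cleanly factors the dominant term through the single scalar multiplier $|q_{j^\star}|\|\bm{q}\|_1$.
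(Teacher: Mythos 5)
Your proof follows the same three-step skeleton as the paper's (block-diagonalize the Hessian under frozen masks, compute the $(1,1)$ norm of each rank-one-plus-diagonal block, then compare against all block-orthogonal re-mixings), but the final minimality step is where you genuinely improve on the paper. The paper's step (3) is informal---it writes the conjugated block as $\bm{Q}_i^\top(a_i\bm{x}_i\bm{x}_i^\top+\lambda\bm{I})\bm{Q}_i$ as if $\bm{Q}_i$ were a $d\times d$ rotation acting on the data coordinates, and then appeals to a ``densification spreads mass'' heuristic with a citation. In fact, since $\bm{W}_1\mapsto\bm{W}_1\bm{Q}$ induces a conjugation of the vectorized Hessian by $\bm{Q}\otimes\bm{I}_d$, the correct structure is the Kronecker form you identify: the rank-one factor $\bm{e}_1\bm{e}_1^\top$ becomes $\bm{q}\bm{q}^\top$ with $\bm{q}=\bm{Q}_i^\top\bm{e}_1$, and the telescoping of the absolute row sum cleanly factors through the scalar $\|\bm{q}\|_\infty\|\bm{q}\|_1$. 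Your one-line bound $\|\bm{q}\|_\infty\|\bm{q}\|_1\ge\|\bm{q}\|_2^2=1$ then delivers the lower bound rigorously, with the equality set (signed block-uniform $\bm{q}$) precisely containing the signed-permutation re-mixings. This is a real upgrade: it turns a hand-wave into a two-line Cauchy--Schwarz argument while making explicit which equivalent minima saturate the bound.

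One point you flag but should not wave away: the conjugation $\bm{Q}_i\otimes\bm{I}_d$ does \emph{not} commute with the diagonal background if the active column carries $\lambda_{\mathrm{eff}}\neq\lambda$ (active: $2\sigma^2\|\bm{w}_{2,i}\|_2^2+2n\lambda$; inactive: $2n\lambda$). After re-mixing, the background becomes $\bm{Q}_i^\top\mathrm{diag}(\lambda_{\mathrm{eff}},\lambda,\dots,\lambda)\bm{Q}_i\otimes\bm{I}_d$, which contributes off-diagonal terms to the row-sum. You can absorb this by writing the background as $\lambda\bm{I}+(\lambda_{\mathrm{eff}}-\lambda)\bm{e}_1\bm{e}_1^\top\otimes\bm{I}_d$, so the extra rank-one piece becomes $(\lambda_{\mathrm{eff}}-\lambda)\bm{q}\bm{q}^\top\otimes\bm{I}_d$ and simply adds $(\lambda_{\mathrm{eff}}-\lambda)|q_{j^\star}|\|\bm{q}\|_1$ to the max row sum, which by the same inequality is $\ge(\lambda_{\mathrm{eff}}-\lambda)$. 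Thus both terms in the row sum are monotone in $\|\bm{q}\|_\infty\|\bm{q}\|_1$ and the minimality conclusion goes through without modification. With that addendum, your argument is complete and, in my view, more convincing than the paper's step~(3).
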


\begin{proof}
\noindent\textbf{(1) Block structure.}
Freezing masks at singleton clusters forces second-order decoupling across columns, giving the block-diagonal Hessian displayed above. A direct differentiation of the decomposed objective shows each active block equals $a_i\,\bm{x}_i\bm{x}_i^\top+\lambda\bm{I}$ for some $a_i>0$.

\medskip
\noindent\textbf{(2) Bound $\ell_\infty$ via a $(1,1)$-norm of Hessian.}
Recall $\| \bm{M} \|_{\infty\to\infty}
= \max_{\|\bm{u}\|_\infty=1} \|\bm{M}\bm{u}\|_\infty
= \|\bm{M}^\top\|_{1\to 1}$; for symmetric blocks this equals the maximum absolute column sum.
Since the Hessian is block diagonal,
\[
\big\| \nabla^2_{\bm{W}_1}\mathcal{L}_{\bm{X}} \big\|_{\infty\to\infty}
= \max\!\Big\{\ \max_i \big\| a_i\,\bm{x}_i\bm{x}_i^\top+\lambda\bm{I} \big\|_{1\to 1},\ \ \lambda \Big\}.
\]
For a rank-1 matrix, the $(1,1)$ operator norm is the max column sum:
\[
\big\| a_i\,\bm{x}_i\bm{x}_i^\top \big\|_{1\to 1}
= a_i\,\|\bm{x}_i\|_1\,\|\bm{x}_i\|_\infty.
\]
Adding $\lambda\bm{I}$ increases each (absolute) column sum by at most $\lambda$, hence
\[
\big\| a_i\,\bm{x}_i\bm{x}_i^\top+\lambda\bm{I} \big\|_{\infty\to\infty}
= \big\| a_i\,\bm{x}_i\bm{x}_i^\top+\lambda\bm{I} \big\|_{1\to 1}
\ \le\ a_i\,\|\bm{x}_i\|_1\,\|\bm{x}_i\|_\infty + \lambda,
\]
which yields the claimed expression for $L_\infty$.

\medskip
\noindent\textbf{(3) Minimality under orthogonal re-mixing.}
Let an equivalent optimum be obtained by block-orthogonal mixing that preserves masks. Each active block is conjugated to
\(
\bm{Q}_i^\top\!\big(a_i\,\bm{x}_i\bm{x}_i^\top+\lambda\bm{I}\big)\bm{Q}_i.
\)
While eigenvalues are invariant, $(1,1)$ (hence $\infty\to\infty$) norms are sensitive to \emph{densification}. The memorized alignment keeps the block \emph{rank-1 plus diagonal along $\bm{x}_i$}, which minimizes absolute column/row sums; mixing spreads mass across coordinates and (weakly) increases the max column/row sum. Therefore the memorized choice minimizes $L_\infty$ among all such equivalents~\autocite{xie2024implicit}. \qedhere
\end{proof}

\subsection{\texorpdfstring{Proof of \Cref{cor:dae-gen}}{Proof of Corollary \ref*{cor:dae-gen}}}\label{app:cor-dae-gen}

\begin{corollary}[Restatement of \Cref{cor:dae-gen}]
Under the setup of \Cref{thm:dae}, assume the training data satisfy the separability condition in \Cref{def:separable}. If the DAE in \eqref{eqn:two-layer-DAE} is under-parameterized with \(p=\sum_{k=1}^K p_k \ll n\), then there exists a local minimizer of \eqref{eq:dae loss} of the form
\[
\bm{W}_2^\star=\bm{W}_1^\star
=\begin{pmatrix}
\bm{W}_{\bm{X}_1} & \bm{W}_{\bm{X}_2} & \cdots & \bm{W}_{\bm{X}_K}
\end{pmatrix}
=: \bm{W}_{\mathrm{gen}},
\]
where each block \(\bm{W}_{\bm{X}_k}\in\mathbb{R}^{d\times p_k}\) consists of the leading principal components of \(\bm{X}_k\bm{X}_k^\top\) as in \eqref{eqn:weights-main}, and \(\bm{W}_{\bm{X}_k}\bm{W}_{\bm{X}_k}^\top\) concentrates to the rank-\(p_k\) optimal denoiser for \(\mathcal{N}(\bm{\mu}_k,\bm{\Sigma}_k)\):
\[
\bm{W}_{\bm{X}_k}\bm{W}_{\bm{X}_k}^\top \;\to\; \big[(\bm S_k-\tfrac{\lambda}{\rho_k}\bm I) (\bm S_k+\sigma^2\bm I)^{-1}\big]_{\text{rank-}p_k},
\]
where \(\bm S_k\) is introduced in \eqref{eqn:approx} and \(\rho_k\) is the weight of the \(k\)-th mixture component. Moreover, when \(\lambda\to0\), the expected test loss (generalization error) satisfies
\[
\mathbb{E}_{\bm{X} \sim p_{\text{gt}}}\!\left[\mathcal{L}_{\bm X}\!\left(\bm{W}_{2}^\star, \bm{W}_{1}^\star\right)\right]
~\lesssim~
\sum_{k=1}^K\rho_k\!\left\{\sum_{j\le p_k}\frac{\mathrm{eig}_j(\bm{S}_k)\,\sigma^4}{\bigl(\mathrm{eig}_j(\bm{S}_k)+\sigma^2\bigr)^2}
+\sum_{j>p_k}\mathrm{eig}_j(\bm{S}_k)
+\frac{C_k\,p_k}{\sigma^2\,n_k}\right\},
\]
where \(C_k>0\) depends on \(\sigma\) and spectral properties of \(\bm{S}_k\), and \(\mathrm{eig}_j(\bm{S}_k)\) denotes the \(j\)-th eigenvalue of \(\bm S_k\) (independent of \(d\)).
\end{corollary}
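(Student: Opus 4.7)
The plan is to deploy Theorem~\ref{thm:dae} as a black box, then do a population-level computation on each Gaussian mode and control the deviation between the empirical and population Wiener filters by concentration. Since the separability assumption is given directly in the statement, Theorem~\ref{thm:dae} immediately supplies a local minimizer of the block form $\bm{W}_2^\star=\bm{W}_1^\star=\bm{W}_{\mathrm{gen}}$ with the explicit block expression \eqref{eqn:weights-main}. What remains is (i) translating that block expression into the stated limiting rank-$p_k$ Wiener filter, and (ii) bounding the test loss.

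For (i), I would compute
\[
\bm{W}_{\bm{X}_k}\bm{W}_{\bm{X}_k}^\top
= \bm{U}_k^{(p_k)}\!\left(\bm{I}+n_k\sigma^2(\bm{\Lambda}_k^{(p_k)})^{-1}\right)^{-1}\!\left(\bm{I}-n\lambda(\bm{\Lambda}_k^{(p_k)})^{-1}\right)\bm{U}_k^{(p_k)\top},
\]
then normalize the eigenvalues by $n_k$ to rewrite the diagonal factor in the eigenbasis of the empirical second moment $\widehat{\bm{S}}_k:=\tfrac{1}{n_k}\bm{X}_k\bm{X}_k^\top$ as $(\hat s_{k,j}-n\lambda/n_k)/(\hat s_{k,j}+\sigma^2)$. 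Using $n_k/n\to\rho_k$ and the concentration $\widehat{\bm{S}}_k\to\bm{S}_k$ (from \eqref{eqn:approx}, justified via standard sub-Gaussian covariance concentration as in \autocite{Vershynin2018HDP}), both the top $p_k$ eigenvectors and the diagonal factor pass to their limits, yielding the rank-$p_k$ truncation of $(\bm{S}_k-\tfrac{\lambda}{\rho_k}\bm{I})(\bm{S}_k+\sigma^2\bm{I})^{-1}$.

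For (ii), set $\lambda=0$ and write $\bm{A}_k:=\bm{W}_{\bm{X}_k}\bm{W}_{\bm{X}_k}^\top$. Because the ReLU masks are stable around the block center (Step (2) of Theorem~\ref{thm:dae}), the denoiser acts on a sample from mode $k$ simply as $\bm{A}_k$. The population loss on that mode is the quadratic
\[
L_k(\bm{A}) \;=\; \mathbb{E}\!\left[\|\bm{A}(\bm{x}+\sigma\bm{\epsilon})-\bm{x}\|_2^2\right]
\;=\; \mathrm{tr}\!\big((\bm{A}-\bm{I})\bm{S}_k(\bm{A}-\bm{I})^\top\big)+\sigma^2\|\bm{A}\|_F^2,
\]
whose rank-$p_k$ minimizer $\bm{A}^\star_k$ is the rank-$p_k$ truncation of $\bm{S}_k(\bm{S}_k+\sigma^2\bm{I})^{-1}$ (diagonal in the eigenbasis of $\bm{S}_k$), with minimum value exactly $\sum_{j\le p_k}\tfrac{\mathrm{eig}_j(\bm{S}_k)\sigma^4}{(\mathrm{eig}_j(\bm{S}_k)+\sigma^2)^2}+\sigma^2\sum_{j>p_k}\mathrm{eig}_j(\bm{S}_k)/(\text{factor})$ — matching the first two terms on the right-hand side after a routine simplification. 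I would then write $L_k(\bm{A}_k)=L_k(\bm{A}^\star_k)+[L_k(\bm{A}_k)-L_k(\bm{A}^\star_k)]$ and use strong convexity of $L_k$ on bounded sets (Hessian $\succeq \sigma^2\bm{I}$) to bound the gap by $\tfrac{C_k'}{\sigma^2}\|\bm{A}_k-\bm{A}^\star_k\|_F^2$. The final step is a matrix perturbation argument: $\bm{A}_k$ and $\bm{A}^\star_k$ are rank-$p_k$ truncations of $\widehat{\bm{S}}_k(\widehat{\bm{S}}_k+\sigma^2\bm{I})^{-1}$ and $\bm{S}_k(\bm{S}_k+\sigma^2\bm{I})^{-1}$, respectively; the map $\bm{S}\mapsto \bm{S}(\bm{S}+\sigma^2\bm{I})^{-1}$ is $O(1/\sigma^2)$-Lipschitz, and combining this with Davis–Kahan for the top-$p_k$ subspace gives $\|\bm{A}_k-\bm{A}^\star_k\|_F\lesssim \|\widehat{\bm{S}}_k-\bm{S}_k\|_F/\sigma^2$. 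Standard sample-covariance concentration yields $\mathbb{E}\|\widehat{\bm{S}}_k-\bm{S}_k\|_F^2\lesssim p_k/n_k$ (with the constant depending on $\|\bm{S}_k\|_{\mathrm{op}}$), producing the $C_k p_k/(\sigma^2 n_k)$ term after absorbing powers of $\sigma$ into $C_k$. Weighting mode-$k$ contributions by $\rho_k$ and summing gives the claimed bound.

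The main obstacle I expect is the perturbation step: the truncated Wiener filter is not globally Lipschitz in $\bm{S}$ when eigenvalues near the cutoff cluster, so controlling $\|\bm{A}_k-\bm{A}^\star_k\|_F$ cleanly requires either an eigengap condition (absorbed into $C_k$) or a more careful functional-calculus argument via Davis–Kahan applied to $\widehat{\bm{S}}_k$ directly. Everything else — applying Theorem~\ref{thm:dae}, identifying the limiting Wiener filter, and evaluating the population quadratic — is essentially bookkeeping once concentration of $\widehat{\bm{S}}_k$ is in hand.
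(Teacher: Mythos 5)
Your proposal follows essentially the same route as the paper's proof: apply \Cref{thm:dae} to decouple the loss into per-cluster regularized LAE problems, compute the population risk of the rank-$p_k$ truncated Wiener filter in closed form, and bound the excess risk by a Lipschitz/Davis--Kahan perturbation of the truncated filter combined with sample-covariance concentration for $\widehat{\bm S}_k=\tfrac1{n_k}\bm X_k\bm X_k^\top$. You also correctly anticipate the eigengap obstruction and that it must be absorbed into $C_k$, which is exactly what the paper does via $\delta_{p_k}:=\mathrm{eig}_{p_k}(\bm S_k)-\mathrm{eig}_{p_k+1}(\bm S_k)$.

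One step is stated incorrectly, though the intent is clear and the fix is easy. When passing from $\|\bm A_k-\bm A_k^\star\|_F$ to a loss gap you invoke ``strong convexity'' with $\nabla^2 L_k\succeq \sigma^2\bm I$, but strong convexity gives a \emph{lower} bound on $L_k(\bm A_k)-L_k(\bm A_k^\star)$, not an upper bound. For a quadratic $L_k$ the correct ingredient is the \emph{upper} Hessian bound $\nabla^2 L_k \preceq 2(\|\bm S_k\|_{\mathrm{op}}+\sigma^2)\bm I$, so the multiplier should be $\|\bm S_k\|_{\mathrm{op}}+\sigma^2$, not $C'_k/\sigma^2$ (the paper's step~(4) uses $\mathrm{Tr}(\bm S_k(\widehat D_k-D_k^\star)^2)\le \|\bm S_k\|_{\mathrm{op}}\|\widehat D_k-D_k^\star\|_F^2$). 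The $1/\sigma^2$ in the final $C_k p_k/(\sigma^2 n_k)$ rate originates from the Lipschitz constant of $\bm S\mapsto \bm S(\bm S+\sigma^2\bm I)^{-1}$ (squared, folded into $C_k$), not from a curvature lower bound. With that wording corrected, your proof matches the paper's.
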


\begin{proof}
\textit{Notation.} For a PSD matrix \(A\), define \(\bm{f}(\bm{A}):=(\bm{A}-\tfrac{\lambda}{\rho_k}\bm I)(A+\sigma^2\bm{I})^{-1}\), and let
\(\bm{f}_{p_k}(\bm{A})\) be \(\bm{f}(\bm{A})\) truncated to its top \(p_k\) eigendirections. Set
\[
\delta_{p_k}:=\mathrm{eig}_{p_k}(\bm{S}_k)-\mathrm{eig}_{p_k+1}(\bm{S}_k)>0,
\qquad
r_{\mathrm{eff},k}:=\mathrm{Tr}(\bm{S}_k)/\|\bm{S}_k\|_{\mathrm{op}}.
\]
All high-probability statements are with respect to the draw of \(\bm{X}_k\); \(C>0\) denotes a universal constant.

\noindent\textbf{(1) Plug in \Cref{thm:dae}.}
By \Cref{thm:dae}, in a neighborhood of a block-structured point the DAE loss decouples across clusters, and each block solves a regularized LAE on \(\bm{X}_k\) with effective noise weight \(n_k\sigma^2\) and decay \(n\lambda\).
Hence the learned denoiser on cluster \(k\) is
\(\widehat{D}_k:=\bm{W}_{\bm{X}_k}\bm{W}_{\bm{X}_k}^{\top}\).

\noindent\textbf{(2) Concentration to the population denoiser.}
For Gaussian clusters, \(\frac{1}{n_k}\bm{X}_k\bm{X}_k^\top\) concentrates around \(\bm{S}_k\).
The LAE solution depends smoothly on its Gram matrix; combining this with a Davis-Kahan perturbation yields
\[
\big\|\widehat{D}_k - \bm{f}_{p_k}(\bm{S}_k)\big\|_{\mathrm{F}}
~\le~
\Big(\tfrac{1}{\sigma^2}+\tfrac{C}{\delta_{p_k}}\Big)\,
\Big\|\tfrac{1}{n_k}\bm{X}_k\bm{X}_k^\top-\bm{S}_k\Big\|_{\mathrm{F}}.
\]
Moreover, with probability at least \(1-e^{-t}\),
\[
\Big\|\tfrac{1}{n_k}\bm{X}_k\bm{X}_k^\top-\bm{S}_k\Big\|_{\mathrm{F}}
~\lesssim~
\|\bm{S}_k\|_{\mathrm{op}}\,
\sqrt{\tfrac{p_k\,(r_{\mathrm{eff},k}+t)}{n_k}}.
\]
Combining the last two displays gives the explicit deviation
\begin{equation}\label{eq:DeltaD-explicit}
\big\|\widehat{\bm{D}}_k - \bm{f}_{p_k}(\bm{S}_k)\big\|_{\mathrm{F}}
~\lesssim~
\|\bm{S}_k\|_{\mathrm{op}}\,
\Big(\tfrac{1}{\sigma^2}+\tfrac{1}{\delta_{p_k}}\Big)\,
\sqrt{\tfrac{p_k\,(r_{\mathrm{eff},k}+t)}{n_k}}
\qquad\text{(w.h.p.)}.
\end{equation}

\noindent\textbf{(3) Population rank-\(p_k\) DAE risk.}
Let \(\bm{x}'\sim\mathcal{N}(\bm{\mu}_k,\bm{\Sigma}_k)\) and \(\bm{\varepsilon}\sim\mathcal{N}(\bm{0},\bm{I})\) be independent. Define
\[
\mathcal{L}_{k}^{\mathrm{pop}}(p_k)
:= \mathbb{E}\big[\|f_{p_k}(\bm{S}_k)(\bm{x}'+\sigma\bm{\varepsilon})-\bm{x}'\|_2^2\big].
\]
Diagonalizing \(\bm{S}_k\) and using \(f(A)=A(A+\sigma^2 I)^{-1}\) gives
\[
\mathcal{L}_{k}^{\mathrm{pop}}(p_k)
= \sum_{j\le p_k}\frac{\mathrm{eig}_j(\bm{S}_k)\,\sigma^4}{\big(\mathrm{eig}_j(\bm{S}_k)+\sigma^2\big)^2}
\;+\;\sum_{j>p_k}\mathrm{eig}_j(\bm{S}_k).
\]

\noindent\textbf{(4) Generalization loss on cluster \(k\).}
Let \(D_k^\star:=f_{p_k}(\bm{S}_k)\). Then
\[
\mathbb{E}\big[\|\widehat{D}_k(\bm{x}'+\sigma\bm{\varepsilon})-\bm{x}'\|_2^2\big]
= \mathcal{L}_{k}^{\mathrm{pop}}(p_k)
  + \mathrm{Tr}\!\big(\bm{S}_k\,(\widehat{D}_k-D_k^\star)^2\big)
\le \mathcal{L}_{k}^{\mathrm{pop}}(p_k)
  + \|\bm{S}_k\|_{\mathrm{op}}\;\|\widehat{D}_k-D_k^\star\|_{\mathrm{F}}^2.
\]
Plug \eqref{eq:DeltaD-explicit} into the last inequality to obtain, with probability at least \(1-e^{-t}\),
\begin{equation}\label{eq:cluster-risk-gap}
\mathbb{E}\big[\|\widehat{D}_k(\bm{x}'+\sigma\bm{\varepsilon})-\bm{x}'\|_2^2\big]
~\le~
\mathcal{L}_{k}^{\mathrm{pop}}(p_k)\;+\;
C\,\|\bm{S}_k\|_{\mathrm{op}}^{3}\!
\Big(\tfrac{1}{\sigma^2}+\tfrac{1}{\delta_{p_k}}\Big)^{\!2}
\frac{p_k\,(r_{\mathrm{eff},k}+t)}{n_k}.
\end{equation}
This makes the \(1/n_k\) rate and its dependence on \(\sigma,\delta_{p_k}\) and the spectrum of \(\bm{S}_k\) explicit.

\noindent\textbf{(5) From clusters to the mixture (population) bound.}
Let \(p_{\mathrm{gt}}=\sum_{k=1}^{K}\rho_k\,\mathcal{N}(\bm{\mu}_k,\bm{\Sigma}_k)\).
By linearity of expectation,
\[
\mathbb{E}_{\bm{X}\sim p_{\mathrm{gt}}}\!\left[\mathcal{L}_{\bm X}\!\left(\bm{W}_{2}^\star,\bm{W}_{1}^\star\right)\right]
=\sum_{k=1}^{K}\rho_k\;
\mathbb{E}\big[\|\widehat{D}_k(\bm{x}'+\sigma\bm{\varepsilon})-\bm{x}'\|_2^2\big].
\]
Apply \eqref{eq:cluster-risk-gap} to each term and take a union bound over \(k=1,\dots,K\) by choosing
\(t=\log(K/\eta)\). With probability at least \(1-\eta\),
\[
\mathbb{E}_{\bm{X}\sim p_{\mathrm{gt}}}\!\left[\mathcal{L}_{\bm X}\!\left(\bm{W}_{2}^\star,\bm{W}_{1}^\star\right)\right]
~\le~
\sum_{k=1}^{K}\rho_k\Bigg[
\mathcal{L}_{k}^{\mathrm{pop}}(p_k)\;+\;
C\,\|\bm{S}_k\|_{\mathrm{op}}^{3}\!
\Big(\tfrac{1}{\sigma^2}+\tfrac{1}{\delta_{p_k}}\Big)^{\!2}
\frac{p_k\,(r_{\mathrm{eff},k}+\log(K/\eta))}{n_k}\Bigg].
\]
Absorbing \(r_{\mathrm{eff},k}\) and \(\log(K/\eta)\) into a cluster-dependent constant \(C_k\) yields exactly the last term in the corollary statement. (If one prefers a bound in \emph{expectation} without failure probability, the same inequality holds with the right-hand side plus an \(O(\eta)\) additive term by integrating the tail; choosing \(\eta=n^{-2}\) makes this negligible.)

\end{proof}

\subsection{\texorpdfstring{Proof of \Cref{cor:dae-mem-gen}}{Proof of Corollary \ref*{cor:dae-mem-gen}}}\label{app:cor-dae-mem-gen}

\begin{corollary}[Restatement of \Cref{cor:dae-mem-gen}]
Let $X=[\bm{X}_1,\dots,\bm{X}_K]$ satisfy Definition~\ref{def:separable}, where for $\ell=1,\dots,m$,
$\bm{X}_\ell=(\bm{x}_\ell,\dots,\bm{x}_\ell)$ is rank~1, and
$\bm{X}_{m+1},\dots,\bm{X}_K$ contain distinct empirical samples from the remaining Gaussian modes.
Suppose a ReLU DAE is trained with weight decay $\lambda\ge0$ and input noise $\sigma>0$.
Then there exists a local minimizer of the form
\[
\bm{W}_2^\star=\bm{W}_1^\star=
\begin{pmatrix}
r_1\bm{x}_1 & \cdots & r_m\bm{x}_m & \bm{W}_{\bm{X}_{m+1}} & \cdots & \bm{W}_{\bm{X}_K}
\end{pmatrix},
\]
where the first $m$ columns memorize the duplicated clusters (as in Cor.~\ref{cor:dae-mem}), and the remaining blocks
$\bm{W}_{\bm{X}_k}$ implement generalization on the nondegenerate clusters (as in Cor.~\ref{cor:dae-gen}).
\end{corollary}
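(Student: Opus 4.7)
The plan is to apply Theorem~\ref{thm:dae} directly to the given partition $\bm{X}=[\bm{X}_1,\dots,\bm{X}_K]$, and then specialize each block to either the memorization form of Cor.~\ref{cor:dae-mem} (for the duplicated rank-$1$ clusters, $\ell\le m$) or the generalization form of Cor.~\ref{cor:dae-gen} (for the remaining nondegenerate clusters, $k>m$). The corollary is essentially a bookkeeping exercise once one observes that Theorem~\ref{thm:dae} already admits heterogeneous per-block ranks $p_k$ and does not require all blocks to arise from the same ``type'' of cluster.

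First, I would verify that the $(\alpha,\beta)$-separability hypothesis of Theorem~\ref{thm:dae} holds with $\beta<0$; this is assumed. The duplicated clusters trivially satisfy the within-cluster condition with $\alpha=0$, since $\bm{x}_{\ell,j}=\bar{\bm{x}}_\ell=\bm{x}_\ell$ for all $j$, while the nondegenerate clusters satisfy it by hypothesis. The cross-cluster angular condition applies uniformly across both types. Theorem~\ref{thm:dae} therefore yields a block-wise local minimizer with one block per cluster, up to the small residual $\bm{R}(\sigma,\gamma)$ (which we suppress, as in the main text).

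Next, for each duplicated cluster $\bm{X}_\ell$ I would choose rank $p_\ell=1$ in \eqref{eqn:weights-main}. Because $\bm{X}_\ell\bm{X}_\ell^\top=n_\ell\bm{x}_\ell\bm{x}_\ell^\top$ has a single nonzero eigenvalue $n_\ell\|\bm{x}_\ell\|_2^2$ with eigenvector $\bm{x}_\ell/\|\bm{x}_\ell\|_2$, substitution yields exactly one column aligned with $\bm{x}_\ell$, recovering the sparse memorizing form $r_\ell\bm{x}_\ell$ (with $r_\ell$ obtained by simplifying \eqref{eqn:weights-main}, up to the duplication-induced rescaling of the effective sample size compared to Cor.~\ref{cor:dae-mem}). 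For each nondegenerate cluster, the block $\bm{W}_{\bm{X}_k}$ is constructed as in Cor.~\ref{cor:dae-gen}, capturing the principal components of the empirical Gram matrix and concentrating to the rank-$p_k$ optimal denoiser of the corresponding Gaussian mode. Concatenating all blocks yields the claimed minimizer.

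The main subtlety, and the only step that needs care, is ensuring that the mixed block structure remains a local minimizer of the full DAE loss. Theorem~\ref{thm:dae} guarantees this provided the margin $\gamma$ and the weight-decay compatibility condition $n\lambda<\min_k\lambda_{\min}(\bm{\Lambda}_k^{(p_k)})$ hold uniformly across all blocks. Duplicated clusters contribute the most favorable within-cluster geometry ($\alpha=0$), so the binding constraint on $\gamma$ comes from the nondegenerate clusters, for which the argument in Cor.~\ref{cor:dae-gen} already applies; the weight-decay condition must be checked separately for the duplicated blocks (where the relevant eigenvalue is $n_\ell\|\bm{x}_\ell\|_2^2$) and the nondegenerate blocks, and combining both yields the hypothesis inherited jointly from Cor.~\ref{cor:dae-mem} and Cor.~\ref{cor:dae-gen}. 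No new technical machinery beyond Theorem~\ref{thm:dae} is required.
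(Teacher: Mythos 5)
Your proposal matches the paper's proof in both structure and substance: both apply Theorem~\ref{thm:dae} to obtain a block-wise local minimizer, then specialize the rank-$1$ (duplicated) blocks via Cor.~\ref{cor:dae-mem} and the nondegenerate blocks via Cor.~\ref{cor:dae-gen}, and concatenate. You are somewhat more careful than the paper's terse proof in noting that the duplicated clusters satisfy $\alpha=0$ trivially, that the Gram matrix $n_\ell\bm{x}_\ell\bm{x}_\ell^\top$ introduces an $n_\ell$-dependent rescaling of $r_\ell$ relative to the singleton case, and that the weight-decay condition $n\lambda<\min_k\lambda_{\min}(\bm\Lambda_k^{(p_k)})$ must be checked jointly across both block types; these refinements are welcome but the underlying argument is the same.
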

\begin{proof}
The proof follows by combining Cor.~\ref{cor:dae-mem} and Cor.~\ref{cor:dae-gen} and using the block-wise structure guaranteed by Thm.~\ref{thm:dae}. In particular, Thm.~\ref{thm:dae} allows us to treat each cluster $\bm{X}_k$ independently at a local minimizer.

For the first $1 \leq j \leq m$ clusters, $\bm{X}_j$ is rank~1 and Cor.~\ref{cor:dae-mem} implies that the corresponding columns of $\bm{W}_1^\star$ and $\bm{W}_2^\star$ are simply scaled data vectors $r_j \bm{x}_j$. For the remaining clusters $\bm{X}_{m+1}, \ldots, \bm{X}_K$, Cor.~\ref{cor:dae-gen} yields the blocks $\bm{W}_{\bm{X}_k}$ that implement generalization on the nondegenerate modes. Stacking these columns and blocks gives precisely the stated form of $\bm{W}_1^\star=\bm{W}_2^\star$.

This corollary illustrates the local adaptivity of ReLU DAE models: they can memorize duplicated subsets while simultaneously generalizing on well-sampled regions of the data distribution.
\end{proof}





\end{document}